\theoremstyle{plain}
\newtheorem{thm}{Theorem}
\newtheorem{lem}[thm]{Lemma}
\newtheorem{prop}[thm]{Proposition}
\newtheorem{cor}[thm]{Corollary}
\theoremstyle{definition}
\newtheorem{rmk}[thm]{Remark}
\newcommand{\Prob}{\mathrm{Prob}}
\newcommand{\cA}{\mathcal{A}}
\newcommand{\cP}{\mathcal{P}}
\newcommand{\rI}{\mathrm{I}}
\newcommand{\cS}{\mathcal{S}}
\newcommand{\cU}{\mathcal{U}}
\newcommand{\cV}{\mathcal{V}}
\newcommand{\cY}{\mathcal{Y}}
\newcommand{\bN}{\mathbf{N}}
\newcommand{\bR}{\mathbf{R}}
\newcommand{\rdo}{\mathrm{do}}
\newcommand{\rcat}{\mathrm{cat}}
\newcommand{\argmax}{\mathop{\rm argmax}\limits}
\def\v#1{\mbox{\boldmath $#1$}}
\title{Causal Bandits with Propagating Inference}
\author{Akihiro Yabe \\ NEC Corporation \\ a-yabe@cq.jp.nec.com \and Daisuke Hatano \\ Riken AIP \\ daisuke.hatano@riken.jp  \and Hanna Sumita \\ Tokyo Metropolitan University \\ sumita@tmu.ac.jp \and Shinji Ito \\ NEC Corporation \\ s-ito@me.jp.nec.com  \and Naonori Kakimura \\ Keio University \\ kakimura@math.keio.ac.jp \and Takuro Fukunaga \\ Riken AIP \\ takuro.fukunaga@riken.jp  \and Ken-ichi Kawarabayashi \\ National Institute of Informatics \\ k\_keniti@nii.ac.jp}
\date{}
\begin{document}
\maketitle

\renewcommand{\thefootnote}{\fnsymbol{footnote}}
\footnote[0]{To appear in International Conference on Machine Learning 2018.}
\renewcommand{\thefootnote}{\arabic{footnote}}

\begin{abstract}
	Bandit is a framework for designing sequential experiments.
	In each experiment, a learner selects an arm $A \in \cA$
	and obtains an observation corresponding to $A$.
	Theoretically, the tight regret lower-bound for the general bandit
	is polynomial with respect to the number of arms $|\cA|$.
	This makes bandit incapable of handling an exponentially large number of arms,
	hence the bandit problem with side-information is often considered to overcome this lower bound.
	Recently, a bandit framework over a causal graph was introduced, where the structure of the causal graph is available as side-information.
	A causal graph is a fundamental model that is frequently used with a variety of real problems.
	In this setting, the arms are identified with
	interventions on a given causal graph, and the effect of an intervention propagates throughout all over the causal graph.
	The task is to find the best intervention
	that maximizes the expected value on a target node.
	Existing algorithms for causal bandit overcame the
	$\Omega(\sqrt{|\cA|/T})$ simple-regret lower-bound;
	however, their algorithms work only when the interventions $\cA$
	are localized around a single node (i.e., an intervention propagates only to its neighbors).
	
	We propose a novel causal bandit algorithm for an arbitrary set of interventions, which can propagate throughout the causal graph.
	We also show that it achieves $O(\sqrt{ \gamma^*\log(|\cA|T) / T})$ regret bound,
	where $\gamma^*$ is determined by using a causal graph structure.
	In particular, if the in-degree of the causal graph is bounded, 
	then $\gamma^* = O(N^2)$, where $N$ is the number of nodes.
\end{abstract}

\section{Introduction}
Multi-armed bandit has been widely recognized as
a standard framework for modeling online learning with a limited number of observations.
In each round in the bandit problem, a learner chooses an arm $A$ from given candidates $\cA$,
and obtains a corresponding observation.
Since observation is limited, 
the learner must adopt an efficient strategy for exploring the optimal arm $A^* \in \cA$.
The efficiency of the strategy is measured by regret,
and the theoretically tight lower-bound is $O(\sqrt{|\cA|})$ with respect to the number of arms $|\cA|$ in the general multi-armed bandit setting. Thus, in order to improve the above lower bound, one requires additional information for the bandit setting.
For example, contextual bandit~\cite{agarwal2014taming,auer2002nonstochastic} is a well-known class of bandit problems with side information on domain-expert knowledge. For this setting,
there is a logarithmic regret bound $O(\sqrt{\log |\cA| })$ with respect to the number of arms.
In this paper, we also achieve $O(\sqrt{\log |\cA| })$ regret bound for a novel class of bandit problems with side information. To this end, let us introduce our bandit setting in detail.

Causal graph~\cite{pearl2009causality} is a well-known tool for modeling a variety of real problems,
including computational advertising~\cite{bottou2013counterfactual}, genetics~\cite{meinshausen2016methods}, agriculture~\cite{splawa1990application}, and marketing~\cite{kim2008trust}.
Based on
causal graph discovery studies~\cite{eberhardt2005number,hauser2014two,hu2014randomized,shanmugam2015learning},
Lattimore et al.~\cite{lattimore2016causal} recently introduced
the causal bandit framework. They consider the problem of finding the best intervention which causes desirable propagation of
a probabilistic distribution over a given causal graph with a limited number of experiments $T$.
In this setting, the arms 
are identified
as interventions $\cA$ on the causal graph.
A set of binary random variables $V_1,V_2,\dots,V_N$ is associated with nodes $v_1,v_2,\dots,v_N$ of the causal graph.
At each round of an experiment, a learner selects an intervention $A \in \cA \subseteq \{0,1,*\}^N$ which enforces a
realization of a variable $V_i$ to $A_i$ when $A_i \in \{0,1\}$.
The effect of the intervention then propagates throughout the causal graph through the edges,
and a realization $\omega \in \{0,1 \}^N$ over all nodes is observed after propagation.
The goal of the causal bandit problem is to control the realization of a target variable $V_N$
with an optimal intervention.

Figure~\ref{figure_ex_graph} is an illustrative example of the causal bandit problem.
In the figure, the four nodes on the right represent a consumer decision-making model in e-commerce borrowed from~\cite{kim2008trust}.
This model assumes that customers make a decision to purchase based on their perceived risk in an online transition (e.g., defective product), the consumer's trust of a web vendor, and the perceived benefit in e-commerce~(e.g., increased convenience). Consumer trust influences perceived risk.
Here, we consider controlling customer's behavior by two kinds of advertising that correspond to adding two nodes (Ad A and Ad B) to be intervened into the model.
Ad A can change only the reliability of a website, that is, it can influence the decision of customers in an indirect way through the middle nodes.
In contrast, Ad B can change the perceived benefit.
The aim is to increase the number of purchases by consumers through choosing an effective advertisement.
This is indeed a bandit problem over a causal graph.

The work in \cite{lattimore2016causal} considered the causal bandit problem to minimize simple regret
and offered an improved regret bound over the aforementioned tight lower-bound $\Omega(\sqrt{|\cA|/T})$~\cite{audibert2010best}[Theorem 4] for the general bandit setting~\cite{audibert2010best,gabillon2012best}.
Sen et al.~\cite{sen2017identifying} extended this study
by incorporating a smooth intervention, 
and they provided a new regret bound parameterized by the performance gap between the optimal and sub-optimal arms.
This parameterized bound comes from the technique developed for the general multi-armed bandit problem~\cite{audibert2010best}.
These analyses, however, only work for a special class of interventions with known true parameters.
Indeed, they only consider localized interventions.

\paragraph{Main contribution}
This paper proposes the first algorithm for the causal bandit problem
with an arbitrary set of interventions (which can propagate throughout the causal graph), with a theoretically guaranteed simple regret bound.
The bound is $O(\sqrt{ \gamma^*\log(|\cA|T) / T})$, where $\gamma^*$ is a parameter bounded on the basis of the graph structure.
In particular, $\gamma^* = O(N^2)$ if  the in-degree of the causal graph is bounded by a constant, 
where $N$ is the number of nodes.

The major difficulty in dealing with an arbitrary intervention comes from accumulation and propagation of estimation error.
Existing studies consider interventions that only affect the parents $\cP_k$ of a single node $V_k$.
To estimate the relationship between $\cP_k$ and $V_k$ in this setting,
we could apply an efficient importance sampling algorithm~\cite{bottou2013counterfactual, lattimore2016causal}.
On the other hand, 
when we intervene an arbitrary node,
it can affect the probabilistic propagation mechanism in any part of the causal graph.
Hence, we cannot directly control the realization of intermediate nodes
when designing efficient experiments.

The proposed algorithm consists of two steps.
First, the preprocessing step is devoted to estimating parameters for designing efficient experiments used in the main step.
More precisely, we focus on estimation of parameters with bounded {\it relative error}.
By truncating small parameters that are negligible but tend to have large relative error,
we manage to avoid accumulation of estimation error.
In the main step, we apply an importance sampling approach introduced in~\cite{lattimore2016causal,sen2017identifying}
on the basis of estimated parameters with a guaranteed relative error.
This step allows us to estimate parameters with bounded {\it absolute error},
which results in the desired regret bound.

\paragraph{Related studies}
Minimizing simple regret in bandit problems is called the best-arm identification~\cite{gabillon2012best,kaufmann2016complexity} or pure exploration~\cite{bottou2013counterfactual} problem, and it has been extensively studied in the machine learning research community.
The inference of a causal graph structure is also well-studied,
which can be classified into causal graph discovery and causal inference:
Causal graph discovery~\cite{eberhardt2005number,hauser2014two, hu2014randomized,shanmugam2015learning}
considers efficient experiments for determining the structure of causal graph,
while causal inference~\cite{mooij2016distinguishing, pearl2009causality, shimizu2011directlingam, spirtes1991algorithm} challenges one to determine the graph structure only from historical data without additional experiments.
The causal bandit problem designs experiments without using historical data, which is rather compatible with causal graph discovery studies.

\paragraph{Outline}
This paper is organized as follows.
We introduce the causal bandit problem proposed in~\cite{lattimore2016causal} in Section~\ref{sec_problem}.
We then present our bandit algorithm and regret bound in Section~\ref{sec_proposed}.
The proof of the bound is presented in Section~\ref{sec_proof}.
We offer experimental evaluation of our algorithm in Section~\ref{sec_exp}.

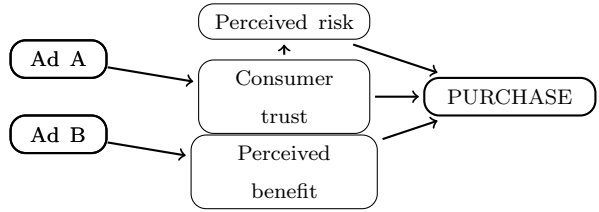
\begin{figure}\label{figure_ex_graph}
	\centering
	\begin{tikzpicture}
	\node[draw=black,thick,fill=white,outer sep=2pt,text width=2cm,text centered,rounded corners=0.2cm, minimum height=0.5cm](p) at (2,0) {\scriptsize PURCHASE};
	\node[draw=black,text width=2cm,outer sep=2pt,text centered,rounded corners=0.2cm](r) at (-1, 1) {\scriptsize Perceived risk};
	\node[draw=black,text width=2cm,outer sep=2pt,text centered,rounded corners=0.2cm](c) at (-1, 0) {\scriptsize Consumer trust};
	\node[draw=black,text width=2.2cm,outer sep=2pt,text centered,rounded corners=0.2cm](b) at (-1, -1) {\scriptsize Perceived benefit};
	
	\node[draw=black,thick,text width=1cm,outer sep=2pt,text centered,rounded corners=0.2cm,minimum height=0.5cm](ada) at (-4, 0.5) {\scriptsize Ad A};
	\node[draw=black,thick,text width=1cm,outer sep=2pt,text centered,rounded corners=0.2cm,minimum height=0.5cm](adb) at (-4, -0.5) {\scriptsize Ad B};
	
	\node[draw=black,thick,text width=1cm,,text centered,rounded corners=0.2cm,minimum height=0.5cm](ada) at (-4, 0.5) {\scriptsize Ad A};
	\node[draw=black,thick,text width=1cm,text centered,rounded corners=0.2cm,minimum height=0.5cm](adb) at (-4, -0.5) {\scriptsize Ad B};
	
	\foreach \u / \v in {r/p, b/p, c/r, c/p, ada/c, adb/b}
	\draw[->,line width=0.8pt] (\u) -- (\v);
	\end{tikzpicture}
	\caption{Simple example of a causal graph.}
\end{figure}

\section{Causal bandit problem}\label{sec_problem}
This section introduces the causal bandit problem proposed by \cite{lattimore2016causal}.

Let $G=(\cV,E)$ be a directed acyclic graph (DAG)
with a node set $\cV=\{ v_1,v_2,\dots,v_N \}$
and a (directed) edge set $E$.
Let $(v_i,v_j)$ denote an edge from $v_i$ to $v_j$.
Without loss of generality,
we suppose that the nodes in $\cV$ are 
topologically sorted so that no edge from $v_i$ to $v_j$ exists if $i \geq j$.
For each $n =1,\ldots,N$, 
let $\cP_n$ denote
the index set of the parents of $v_n$, i.e., $\cP_n=\{i \in\{1,\ldots,n-1\}\colon (v_i, v_n) \in E\}$.
We then define $\overline{\cP_n} = \cP_n \cup \{n \}$.

Each node $v_n \in \cV$ is associated with a random variable $V_n$, which takes a value in $\{0,1\}$.
The distribution of $V_n$ is then influenced by the variables associated with the parents of $v_n$ (unless $V_n$ is intervened, as described below).
For each $\pi \in \{0,1\}^{\overline{\cP_n}}$,
the parameter $\alpha_n(\pi)$ defined below characterizes the distribution of $V_n$
given the realizations of its parents:
\begin{align*}
\alpha_n(\pi) := \Prob \left(V_n = \pi_n \left| 
\begin{array}{l}
V_i=\pi_i \text{ for all } i \in \cP_n, \\
\text{$v_n$ is not intervened}
\end{array}
\right) \right. .
\end{align*}
That is to say, if the parents $v_i$ for $i \in \cP_n$
are realized as $\pi_i$, then 
$V_n=\pi_n$ with probability $\alpha_n(\pi)$,
and $V_n=1-\pi_n$ with probability $1-\alpha_n(\pi)$.

Together with a DAG,
we are also given 
a set $\cA$  of interventions.
Each intervention
is identified with a vector $A \in \{*,0,1\}^N$,
where $A_n \neq *$ implies that $V_n$ is intervened and that the realization of $V_n$ is fixed as $A_n$.
Let $\pi \in \{0,1\}^{\overline{\cP}_n}$.
Given an intervention $A \in \cA$ and realizations $\pi_i$ over the parents $i \in \cP_n$,
the probability that $V_n = \pi_n$ holds is then determined as follows: 
\begin{align*}
\Prob\left(V_n = \pi_n \mid V_i = \pi_i \text{ for all } i \in \cP_n, \rdo(A)\right) = 
\begin{cases}
\alpha_n (\pi) &\text{ if $A_n =*$}, \\
1 & \text{ if $A_n =\pi_n$}, \\
0 & \text{ if $A_n = 1-\pi_n$}.
\end{cases} 
\end{align*}
This equality together with the adjacency of the causal graph $G$
completely determines the joint distribution over the variables $V_1,V_2,\dots, V_N$,
under an arbitrary intervention $A \in \cA$. 

In the causal bandit problem,
we are given a DAG $G=(\cV,E)$
and a set $\cA$ of interventions.
However, the parameters $\alpha_n$ ($n=1,\ldots,N$)
are not known.
Our ideal goal is then to find an intervention $A^* \in \cA$ that maximizes the probability $\mu (A^*)$ of realizing $V_N = 1$,
where $\mu: \cA \to [0,1]$ is defined by
\begin{align*}
\mu(A) := \Prob(V_N = 1 \mid \rdo(A)) 
\end{align*}
for each $A \in \cA$.

For this purpose, we discuss the following algorithms.
First, they estimate $\mu(A)$ ($A \in \cA$)
from $T$ experimental trials. 
Each experiment consists of the application of an intervention and 
the observation of a realization $\pi \in \{0,1\}^N$ over all nodes.
Let $\hat{\mu}(A)$ denote the estimate of $\mu(A)$.
Second, the algorithm
selects the intervention $\hat{A}$ that maximizes $\hat{\mu}$.
We evaluate the efficiency of such an algorithm with the simple regret $R_T$
defined as follows:
\begin{align*}
R_T = \mu(A^*) - E[\mu(\hat{A})].
\end{align*}
Note that, even if an algorithm is deterministic,
$\hat{A}$ includes stochasticity since the observations obtained in each experiment are produced by a stochastic process.

In this paper, we assume that $N \geq 3$ and $T \geq 2$ for ease of technical discussion.

\section{Proposed Algorithm}\label{sec_proposed}
We propose an algorithm for the causal bandit problem, 
and present regret bound of the proposed algorithm in this section.
The proofs of the bound are presented in the next section.
Let $C_n=2^{|\cP_n|}$ for each $n=1,\ldots,N$,
and $C=\sum_{n=1}^N C_n$. 
For $S \subseteq S' \subseteq [1,N]$
and $\pi \in \{0,1\}^{S'}$,
let $\pi_{S}$ denote the restriction of $\pi$ onto $S$.

\subsection{Outline of the proposed algorithm}
Recall that the purpose of the causal bandit problem
is to identify an intervention $A^*$ that maximizes $\mu(A^*)$.
This task is trivial 
if $\alpha_n$ is known for all $n=1,\ldots,N$,
because $\mu(A)$ can then be calculated 
for all $A \in \cA$.
Let $B(A)=\{\pi' \in \{0,1\}^{N} \mid  \pi'_i=A_i \text{ if } A_i \neq *, \pi'_N = 1\}$, and for $n \in [1,N]$, let $I_{n,A}$ denote
the set of nodes in $[1,n]$ which are not intervened by $A$;
$I_{n,A} := \{m \in [1,n] \mid A_m = * \}$.
$\mu(A)$ can then be represented as 
\begin{align*}
\mu(A) = \sum_{\pi \in B(A)} \prod_{m \in I_{N,A}}\alpha_{m}(\pi_{\overline{\cP_n}}).
\end{align*}
Therefore, for computing $\mu$ approximately, our algorithm 
estimates $\alpha_n$ ($n=1,\ldots,N$).

In order to estimate $\alpha_n$ efficiently,
we are required to manipulate the random variables associated with the parents of $v_n$.
More concretely, to estimate $\alpha_n(\overline{\pi})$ for $\overline{\pi} \in \overline{ \cP_n}$, we require samples with realization $\omega \in \{0,1\}^N$
satisfying $\overline{\pi}_i = \omega_i$ over the parents $i \in \cP_n$ of $v_n$.
For $n=1,2,\dots,N$, $\pi \in \{0,1 \}^{\cP_n}$, and $A \in \cA$, we thus introduce the additional quantities $\beta_{n}(\pi, A)$ that denote the probability of realizing $\omega$ with $\omega_{\cP_n} = \pi$ under a given intervention $A$.
More precisely, we define
\begin{align*}
\beta_n(\pi,A)
:= 
\begin{cases}
\Prob(V_m = \pi_m, \forall m \in \cP_n \mid \rdo(A)) \quad \text{if } A_n =*,\\
0 \quad \text{otherwise.}
\end{cases} 
\end{align*}

Our algorithm consists of two phases.
The first phase estimates $\beta_n$ ($n=1,\ldots,N$),
and the second phase estimates $\alpha_n$ $(n=1,\ldots,N)$.
The algorithm requires $T/3$ experiments in the first phase,
and $2T/3$ experiments in the second phase.
In the rest of this section,
we first explain those phases
and present a regret bound on the algorithm.

\subsection{First Phase: Estimation of $\beta$}

Here, we introduce the estimation phase of $\beta_n$ for all
$n=1,\ldots,N$.
The pseudo-code of this phase is described in 
Algorithm~\ref{algo_est1}.
Algorithm~\ref{algo_est1} 
requires a positive number $\lambda$ as a parameter, which will be set to $C^3/N$.
We perform $T/3$ experiments in this phase.

Before explaining the details of Algorithm~\ref{algo_est1},
we note that $\beta_n$ can be calculated from
$\alpha_{1},\ldots,\alpha_{n-1}$.
For $\pi\in\{0,1\}^{\cP_n}$, 
let 
\begin{align}
B_n(\pi,A):= \{\pi' \in \{0,1\}^{n-1} \mid  \pi'_i=A_i \text{ if } A_i \neq * \text{ and } i \in [1,n-1], \pi'_i=\pi_i \text{ if } i \in \cP_n\} \label{df_Bn}
\end{align}
denote the set of realizations over $V_1,V_2,\dots,V_{n-1}$ that is consistent with the realization $\pi$ over $\cP_n$ and the intervention $A$.
If $A_n =*$, then $\beta_n(\pi,A)$ is then described as
\begin{align}
\beta_n(\pi, A) 
= \sum_{\pi' \in B_{n}(\pi,A) }\prod_{m \in I_{n-1,A}} \alpha_{m}(\pi'_{\overline{\cP_m}}).
\label{beta_alpha} 
\end{align}

Algorithm~\ref{algo_est1}
consists of $N$ iterations.
The $n$-th iteration computes the following objects:
\begin{itemize}
	\item an estimate $\hat{\beta}_{n}$ of $\beta_n$,
	\item $\hat{A}_{n,\pi} \in \cA$ for each $\pi \in \{0,1\}^{\cP_n}$,
	\item an estimate $\check{\alpha}_{n}$
	of $\alpha_n$, and
	\item $G_n \subseteq \{0,1\}^{\overline{\cP_n}}$.
\end{itemize}

We remark that $\check{\alpha}_n$ in Algorithm~\ref{algo_est1} are used only for computing 
an estimate $\hat{\beta}_{n}$ and are not used for estimating $\mu$.
An estimate of $\alpha_n$ is computed in the next phase of our algorithm.

At the beginning of the $n$-th iteration,
we compute $\hat{\beta}_{n}(\pi,A)$
for each $\pi \in \{0,1\}^{\cP_n}$ and $A \in \cA$
by \eqref{beta_alpha} substituting
$\check{\alpha}_{m}$ for $\alpha_m$;
\begin{align}
\hat{\beta}_{n}(\pi, A) 
= \sum_{\pi' \in B_{n}(\pi,A) }\prod_{m \in I_{n-1,A}} \check{\alpha}_{m}(\pi'_{\overline{\cP_n}}).
\label{beta_alpha_hat} 
\end{align}
Let us confirm that this $\hat{\beta}_{n}(\pi, A)$ can be computed if $\check{\alpha}_{m}$ ($m=1,\ldots,n-1$) are available.

For each $\pi \in \{0,1\}^{\cP_n}$, then,
we identify an intervention $\hat{A}_{n,\pi}$ that attains
$\max_{A \in \cA} \hat{\beta}_{n}(\pi,A)$.
Using $\hat{A}_{n,\pi}$, we compute $\check{\alpha}_{n}(\overline{\pi})$ as follows,
where $\overline{\pi}$ is an extension of $\pi$ onto $\{0,1\}^{\overline{\cP_n}}$.
We conduct
$T/(3C)$
experiments with $\hat{A}_{n,\pi}$.
Let $t_n(\pi)$ be the number of experiments 
in those 
$T/(3C)$ experiments 
in which the obtained realization $\omega \in \{0,1\}^N$ satisfies 
$\omega_i=\pi_i$ for each $i \in \cP_n$. 
Let $\overline{t_n}(\pi)$ 
be the number of experiments counted in 
$t_n(\pi)$,
where $\omega_n=1$ also holds.
We then compute $\check{\alpha}'_n(\overline{\pi})$ using the equation
\begin{equation}
\label{eq.hat_alpha_n}
\check{\alpha}'_n(\overline{\pi})=
\begin{cases}
\overline{t_n}(\pi) / t_n(\pi) & \text{ if } \overline{\pi}_n=1,\\
1- \overline{t_n}(\pi) / t_n(\pi) & \text{ if } \overline{\pi}_n=0.
\end{cases}
\end{equation}
The vector $\overline{\pi} \in \{0,1\}^{\overline{\cP_n}}$ is added to $G_n$
if 
\begin{align}
\label{eq.cond_Dn}
\check{\alpha}'_n(\overline{\pi}) \hat{\beta}_{n}(\pi,\hat{A}_{n,\pi})
\leq 2eS(\lambda),
\end{align}
where $S(\lambda)$ is defined as 
\begin{align*}
S(\lambda) := \frac{12\lambda N^2 C \log T}{T}.
\end{align*}
This $G_n$ reserves such $\overline{\pi} \in \{0,1 \}^{\overline{\cP_n}}$
that $\check{\alpha}'_{n}(\overline{\pi})$ is too small to estimate $\alpha_n(\overline{\pi})$ with sufficient accuracy.
Then $\check{\alpha}_{n}(\overline{\pi})$
is determined by replacing $\check{\alpha}'_n(\overline{\pi})$ with $0$ for $\overline{\pi} \in G_n$:
\begin{align}
\label{eq.alpha_nD}
\check{\alpha}_{n}(\overline{\pi}) := 
\begin{cases}
\check{\alpha}'_{n}(\overline{\pi}) & \text{ if } \overline{\pi} \not\in G_n,\\
0 & \text{otherwise}.
\end{cases}
\end{align}
This replacement contributes to reducing the relative estimation error of $\hat{\beta}_{n'}$ in subsequent steps ($n'=n+1,\dots,N$).

After iterating for all $n=1,2,\dots,N$,
the algorithm computes $H_n$ and $D_n$~($n=1,2,\ldots,N$) defined by 
\begin{align}
H_n &= \left\{ \overline{\pi} \in \{0,1\}^{\overline{\cP_n}} \left|  \hat{\beta}_{n}(\overline{\pi}_{\cP_n},\hat{A}_{n,\pi}) \leq 8e C^2 S(\lambda) \right.  \right\},  \label{overlineH}\\
D_n &=G_n \cup H_n.
\label{overlineD}
\end{align}
This $D_n$ contributes to bound the absolute error of the estimation of $\hat{\beta}_{n}(\overline{\pi}_{\cP_n})$ for $\overline{\pi} \not \in D_n$.
The algorithm returns an estimate $\hat{\beta}_n$ and the family $D := \{D_n \mid n=1,2,\dots,N \}$.

\begin{algorithm}[t]
	\caption{Estimation of $\beta$}\label{algo_est1}
	\begin{algorithmic}[1]
		\REQUIRE $\lambda$
		\ENSURE $\hat{\beta}_{n}$ ($n=1,\ldots,N$) and $D$
		
		\STATE $G_n \leftarrow \emptyset$ for $n =1,2,\ldots,N$
		\FOR{$n=1,2,\ldots,N$}
		\FOR{$\pi \in \{0,1\}^{\cP_n}$}
		\STATE Calculate $\hat{\beta}_{n}(\pi,A)$ for each $A \in \cA$ by \eqref{beta_alpha_hat} 
		\STATE Calculate $\hat{A}_{n, \pi} = {\argmax}_{A \in \cA} \hat{\beta}_{n}(\pi,A)$
		\STATE $t_n(\pi) \leftarrow 0$ and $\overline{t_n}(\pi)
		\leftarrow 0$
		\FOR{$j=1,\ldots,T/(3C)$}
		\STATE Conduct an experiment with $\hat{A}_{n, \pi}$ and let $\omega\in \{0,1\}^N$
		be the obtained result
		\STATE $t_n(\pi) \leftarrow t_n(\pi)+1$ if $\omega_i=\pi_{i}$ for all $i \in \cP_n$
		\STATE $\overline{t_n}(\pi) \leftarrow \overline{t_n}(\pi)+1$ 
		if $\omega_i=\pi_{i}$ for all $i \in \cP_n$ and $\omega_n=1$
		\ENDFOR
		\FOR{$k=0,1$}
		\STATE Extend $\pi$ to $\pi' \in \{0,1\}^{\overline{\cP_n}}$
		with $\pi'_n=k$
		\STATE Compute $\check{\alpha}'_{n}(\pi')$ by \eqref{eq.hat_alpha_n}
		\STATE If \eqref{eq.cond_Dn} holds, then $G_n \leftarrow G_n \cup \{\pi'\}$
		\STATE Compute $\check{\alpha}_{n}(\pi')$ by \eqref{eq.alpha_nD}
		\ENDFOR
		\ENDFOR
		\ENDFOR
		\STATE Compute $H_n$ and $D_n$ ($n=1,2,\dots,N$) by 
		\eqref{overlineH} and \eqref{overlineD}
		\STATE \textbf{return} $\hat{\beta}_{n}$ $(n=1,2,\ldots,N)$ and $D=\{D_n\mid n=1,2,\ldots,N\}$
	\end{algorithmic}
\end{algorithm}

\subsection{Second Phase: Estimation of $\alpha$}

In this phase, our algorithm computes an estimate $\hat{\alpha}_{n}$
of $\alpha_n$ for all $n=1,\ldots,N$.
The pseudo-code for this phase is given in Algorithm~\ref{algo_est2}.
As an input, it receives $\hat{\beta}_{n}$ $(n=1,\ldots,N)$ and $D$ from
Algorithm~\ref{algo_est1}.

\begin{algorithm}[t]
	\caption{Estimation of $\alpha$}\label{algo_est2}
	\begin{algorithmic}[1]
		\REQUIRE $\hat{\beta}_{n}$ ($n=1,\ldots,N$) and $D$
		\ENSURE $\hat{\alpha}_{n}$ ($n=1,\ldots,N$)
		\FOR{$n=1,2,\dots,N$ and each $\pi \in \{0,1\}^{\cP_n}$}
		\STATE $t'_n(\pi) \leftarrow 0$ and $\overline{t'_n}(\pi)\leftarrow 0$ 
		\STATE Calculate $\hat{A}_{n,\pi} := {\argmax}_{A \in
			\cA} \hat{\beta}_{n}(\pi,A)$
		\FOR{$j=1,\ldots,T/(3C)$}
		\STATE Conduct an experiment with $\hat{A}_{n,\pi}$ and let
		$\omega\in \{0,1\}^N$ be the obtained result
		\FOR{$m=1,\ldots,N$ with $(\hat{A}_{n,\pi})_m = *$}
		\STATE $t'_m(\omega_{\cP_m}) \leftarrow t'_m(\omega_{\cP_m})+1$
		\STATE $\overline{t'_m}(\omega_{\cP_m}) \leftarrow \overline{t'_m}(\omega_{\cP_m})+1$ 
		if $\omega_m=1$
		\ENDFOR 
		\ENDFOR
		\ENDFOR
		\STATE Compute an optimal solution $\hat{\eta}$ for \eqref{optProb}
		\FOR{$t=1,2,\dots,T/3$}
		\STATE Sample $A_t$ from $\cU(\hat{\eta})$
		\STATE Conduct experiment with $A_t$ and let $\omega\in
		\{0,1\}^N$ be the obtained realization
		\FOR{$n=1,\ldots,N$ with $A_n = *$}
		\STATE $t'_n(\omega_{\cP_n}) \leftarrow t'_n(\omega_{\cP_n})+1$
		\STATE $\overline{t'_n}(\omega_{\cP_n}) \leftarrow \overline{t'_n}(\omega_{\cP_n})+1$ 
		if $\omega_n=1$
		\ENDFOR
		\ENDFOR
		\FOR{$n=1,2,\dots,N$ and $\pi \in \{0,1\}^{\overline{\cP_n}}$}
		\STATE Compute $\hat{\alpha}'_{n}(\pi)$
		by \eqref{eq.check_alpha_n} and $\hat{\alpha}_{n}(\pi)$ by \eqref{eq.check_alpha_nD}
		\ENDFOR
		\STATE \textbf{return} $\hat{\alpha}_{n}$.
	\end{algorithmic}
\end{algorithm}

Algorithm~\ref{algo_est2} consists of two parts. The
first part
conducts $T/(3C)$ experiments with $\hat{A}_{n,\pi}$
(computed from $\hat{\beta}_{n}(\pi,A)$, $A \in \cA$)
for each $n=1,\ldots,N$ and $\pi \in \{0,1\}^{\cP_n}$.
This is the same process used to compute $\hat{\alpha}_{n}$
in Algorithm~\ref{algo_est1}.
Let 
\begin{align*}
D_n^{\downarrow} := \{ \pi \in \{0,1\}^{\cP_n} \mid \overline{\pi}^0, \overline{\pi}^1 \in  D_n \}
\end{align*}
where $\overline{\pi}^k$ is the extension of $\pi \in \{0,1\}^{\cP_n}$ onto $\{0,1\}^{\overline{\cP_n}}$ with $\overline{\pi}^k_n = k$.
Let us define a constant $r_{n,\pi}:= \hat{\beta}_{n}(\pi, \hat{A}_{n,\pi}) / C$
for each $n=1,\ldots,N$ and $\pi \in \{0,1\}^{\cP_n}$.
In the second part, the algorithm solves the following optimization problem:
\begin{align}
\min_{\eta \in [0,1]^{\cA}}  \max_{A \in \cA} & \sum_{n \in I_{N,A}} \sum_{ \pi \in  \{0,1\}^{\cP_n} \setminus D_n^{\downarrow}}  
\frac{\hat{\beta}_{n}^2(\pi, A)}{\sum_{A' \in \cA} \eta_{A'} \hat{\beta}_{n}(\pi,A') + r_{n,\pi}}\nonumber \\
\text{s.t. } & \sum_{A' \in \cA} \eta_{A'} = 1. 	\label{optProb}
\end{align}
Note that, for each $n=1,2,\dots,N$, $\pi \in \{0,1\}^{\cP_n} \setminus D_n^{\downarrow}$ only if
$\hat{\beta}_n(\pi, \hat{A}_{n,\pi}) > 0$ according to Line 20 of Algorithm~\ref{algo_est1}.
Thus the denominator is positive for every $\pi \in \{0,1\}^{\cP_n} \setminus D_n^{\downarrow}$,
and the above optimization problem is well-defined.
Let $\hat{\eta}$ be an optimal solution for \eqref{optProb}.
Consider the distribution $\cU(\hat{\eta})$ over $\cA$ that generates $A$ with a probability of $\hat{\eta}_A$.
For $T/3$ times, the second part samples an intervention according to $\cU(\hat{\eta})$ and
uses it to conduct experiments.

For each $n=1,\ldots,N$ and $\pi \in \{0,1\}^{\cP_n}$,
the algorithm counts the number $t'_n(\pi)$ (resp., $\overline{t'_n}(\pi)$)
of experiments that result in $\omega\in \{0,1\}^N$ with
$\omega_{\cP_n}=\pi$ (resp., $\omega_{\cP_n}=\pi$ and $\omega_n=1$).
Then, $\hat{\alpha}'_n(\pi)$ ($n =1,\ldots,N$, $\pi \in
\{0,1\}^{\overline{\cP_n}}$)
is defined by
\begin{equation}
\label{eq.check_alpha_n}
\hat{\alpha}'_n(\pi)=
\begin{cases}
\overline{t'_n}(\pi_{\cP_n}) / t'_n(\pi_{\cP_n}) & \text{ if } \pi_n=1,\\
1- \overline{t'_n}(\pi_{\cP_n}) / t'_n(\pi_{\cP_n}) & \text{ if } \pi_n=0.
\end{cases}
\end{equation}
The output $\hat{\alpha}_{n}$ defined by
\begin{align}
\label{eq.check_alpha_nD}
\hat{\alpha}_{n}(\pi)=
\begin{cases}
\hat{\alpha}'_n(\pi) & \text{if } \pi \not \in D_n,\\
0 & \text{otherwise.}
\end{cases}
\end{align}

\subsection{Regret bound}
\begin{algorithm}
	\caption{Causal Bandit}\label{algo_select}
	\begin{algorithmic}[1]
		\STATE Apply Algorithm~\ref{algo_est1} with $\lambda = C^3/N$ to obtain $\hat{\beta}_{n}$ ($n=1,\ldots,N$) and $D$
		\STATE Apply Algorithm~\ref{algo_est2} to obtain $\hat{\alpha}_{n}$ ($n=1,\ldots,N$)
		\STATE Calculate $\hat{\mu}(A)$ for each $A \in \cA$ by
		\eqref{hat_mu_alpha} 
		\STATE \textbf{return} $\hat{A} := \argmax_{A \in \cA} \hat{\mu}(A)$
	\end{algorithmic}.
\end{algorithm}

Pseudo-code of our entire algorithm is provided in
Algorithm~\ref{algo_select}.
It computes an estimate $\hat{\beta}$ of $\beta$ by Algorithm~\ref{algo_est1}
and then computes $\hat{\alpha}$
by Algorithm~\ref{algo_est2}.
It then computes an estimate $\hat{\mu}$ of $\mu$ by 
\begin{align}
\hat{\mu}(A) = \sum_{\pi \in B(A)} \prod_{n \in I_{N,A}}\hat{\alpha}_{n}(\pi_{\overline{\cP_n}}) \label{hat_mu_alpha}
\end{align}
for each $A \in \cA$. The algorithm returns an intervention $\hat{A} \in \cA$ that maximizes $\hat{\mu}$.

Let us define $\gamma^*$ as the optimum value of the following problem:
\begin{align}\label{optProb*}
\gamma^* := \min_{\eta \in [0,1]^{\cA}}  \max_{A \in \cA} & \sum_{n=1}^N \sum_{\substack{ \pi \in  \{0,1\}^{\cP_n} \\ : \beta_{n}(\pi, A) > 0 }}  \frac{\beta_{n}^2(\pi, A)}{\sum_{A' \in \cA} \eta_{A'} \beta_n(\pi,A')} \nonumber \\
\text{s.t. }& \sum_{A' \in \cA} \eta_{A'} = 1.
\end{align}
The regret bound of Algorithm~\ref{algo_select} is parameterized by
the optimum value $\gamma^*$:
\begin{thm}\label{thm_main}
	The regret $R_T$ of Algorithm~\ref{algo_select} satisfies
	\begin{align*}
	R_T \leq  O\left( \sqrt{\frac{ \max\{\gamma^*, N\} \log(|\cA|T)}{T}} \right).
	\end{align*}
\end{thm}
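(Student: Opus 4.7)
The plan is to bound $E[\max_{A\in\cA}|\hat\mu(A)-\mu(A)|]$, since the inequality $\hat\mu(\hat A)\geq\hat\mu(A^*)$ gives the standard reduction
\begin{align*}
R_T \leq E[\mu(A^*)-\hat\mu(A^*)] + E[\hat\mu(\hat A)-\mu(\hat A)] \leq 2\,E\Bigl[\max_{A\in\cA}|\hat\mu(A)-\mu(A)|\Bigr].
\end{align*}
A one-factor-at-a-time telescoping of \eqref{hat_mu_alpha}, combined with the facts that marginalizing the untouched upstream factors reproduces $\beta_n(\pi_{\cP_n},A)$ via \eqref{beta_alpha} and that the downstream sum is bounded by $1$ (since the two values of $\hat\alpha_n$ at a fixed parent configuration sum to at most $1$ by \eqref{eq.check_alpha_n}--\eqref{eq.check_alpha_nD}), yields
\begin{align*}
|\mu(A) - \hat\mu(A)| \leq \sum_{n\in I_{N,A}} \sum_{\pi \in \{0,1\}^{\overline{\cP_n}}} \beta_n(\pi_{\cP_n},A)\,|\alpha_n(\pi) - \hat\alpha_n(\pi)|,
\end{align*}
which I then split according to whether $\pi\in D_n$ or not.

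For $\pi\notin D_n$ the approach is to view $\hat\alpha_n(\pi)$ as a binomial proportion over $t'_n(\pi_{\cP_n})$ trials. Setting $\tilde\beta_n(\pi):=\sum_{A'\in\cA}\hat\eta_{A'}\beta_n(\pi,A')$, a Chernoff bound gives $t'_n(\pi_{\cP_n})\geq(T/6)\tilde\beta_n(\pi_{\cP_n})$ on a high-probability event, and a Hoeffding step plus union bound over the at most $C$ parent configurations per node yields $|\hat\alpha_n(\pi)-\alpha_n(\pi)|\leq O(\sqrt{\log(CT)/(T\tilde\beta_n(\pi_{\cP_n}))})$. Substituting, collapsing the two values of $\pi_n$, and applying Cauchy--Schwarz with the identities $\sum_\pi\beta_n(\pi,A)\leq 1$ (so that $\sum_{n\in I_{N,A}}\sum_\pi\beta_n(\pi,A)\leq N$) and $\sum_{n,\pi:\beta_n(\pi,A)>0}\beta_n^2(\pi,A)/\tilde\beta_n(\pi)\leq O(\gamma^*)$ (which follows from optimality of $\hat\eta$ in \eqref{optProb} together with the approximation $\hat\beta\approx\beta$) bounds the $(\pi\notin D_n)$ contribution by $O(\sqrt{N\gamma^*\log(CT)/T})$. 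A union bound over $\cA$ introduces the $\log|\cA|$ factor, and since $\sqrt{N\gamma^*}\leq\max(N,\gamma^*)$ the claimed $O(\sqrt{\max(\gamma^*,N)\log(|\cA|T)/T})$ scaling follows.

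The main obstacle is justifying the two tacit ingredients above: $\sum\beta_n^2/\tilde\beta_n\leq O(\gamma^*)$, and that the $(\pi\in D_n)$ contribution is of lower order in $T$. Both reduce to an inductive relative-error guarantee for Algorithm~\ref{algo_est1}, namely that on a high-probability event $\hat\beta_n(\pi,A)$ agrees with $\beta_n(\pi,A)$ up to a constant multiplicative factor outside $D_n^{\downarrow}$. I would prove this by induction on $n$ using \eqref{beta_alpha_hat}: the expected number of usable successes that define $\check\alpha_m(\pi)$ via \eqref{eq.hat_alpha_n} is proportional to $\alpha_m(\pi)\beta_m(\pi_{\cP_m},\hat A_{m,\pi_{\cP_m}})$, so multiplicative concentration on $T/(3C)$ samples controls the relative error precisely when this product exceeds the threshold $S(\lambda)$ in \eqref{eq.cond_Dn} --- which is exactly the regime preserved by the truncation \eqref{eq.alpha_nD}. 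The truncation injects an additive bias of order $S(\lambda)$ per node that propagates through the $N$ topological levels to a total of $O(NC\,S(\lambda))$, while the $(\pi\in D_n)$ piece is controlled by the same relative-error bound applied to the definition \eqref{overlineH}. The calibration $\lambda=C^3/N$ and the explicit constants $2e$ and $8eC^2$ in \eqref{eq.cond_Dn}--\eqref{overlineH} are tuned so that both the multiplicative error from the induction and the additive truncation bias survive the $N$-fold propagation and stay below the target rate $\sqrt{\max(\gamma^*,N)\log(|\cA|T)/T}$.
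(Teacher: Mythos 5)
Your high-level architecture (relative-error induction for Algorithm~\ref{algo_est1}, truncation bias of order $NC\,S(\lambda)$, absolute-error concentration for Algorithm~\ref{algo_est2}, standard reduction $R_T\le 2E[\max_A|\hat\mu(A)-\mu(A)|]$) matches the paper, and your telescoping decomposition is a legitimate substitute for the paper's multilinear expansion $\hat\mu(A)=\sum_{J}f^J(A)$ (it silently absorbs the $|J|\ge 2$ interaction terms that the paper bounds separately in Lemma~\ref{lem_bound2}(ii)). However, there is a genuine quantitative gap in your main concentration step. By bounding each $|\hat\alpha_n(\pi)-\alpha_n(\pi)|$ individually by $O(\sqrt{\log(CT)/(T\tilde\beta_n(\pi))})$ and then summing with Cauchy--Schwarz, you obtain
\begin{align*}
\sum_{n,\pi}\beta_n(\pi,A)\,|\hat\alpha_n-\alpha_n|
\;\le\;\sqrt{\textstyle\sum_{n,\pi}\beta_n(\pi,A)}\cdot\sqrt{\textstyle\sum_{n,\pi}\frac{\beta_n^2(\pi,A)}{\tilde\beta_n(\pi)}\cdot\frac{\log(CT)}{T}}
\;=\;O\!\left(\sqrt{\frac{N\gamma^*\log(CT)}{T}}\right),
\end{align*}
and $N\gamma^*$ is \emph{not} bounded by $\max\{N,\gamma^*\}$; your closing remark that $\sqrt{N\gamma^*}\le\max(N,\gamma^*)$ compares the wrong quantities, since the theorem requires $\max\{N,\gamma^*\}$ to appear to the \emph{first} power under the square root, whereas your chain only yields it to the second power (equivalently, the product $N\gamma^*$). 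Your bound is therefore weaker than the claimed one by a factor of $\sqrt{\min\{N,\gamma^*\}}$.

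The loss is structural, not just a loose constant: once you place absolute values on each per-coordinate error, no concentration argument can recover the $\sqrt{\gamma^*}$ rate, because a sum of $C$ absolute deviations genuinely scales like the sum of their individual magnitudes. The paper avoids this by keeping the linear error term \emph{signed}: it writes $\sum_{j}f^j(A)=\sum_{j,\pi}\beta'_j(\pi,A)\bigl(\frac{1}{T_j(\pi)}\sum_k Y_{j,\pi,k}-\alpha_{j,D}\bigr)$ with coefficients $\beta'_j(\pi,A)$ that are deterministic given $D$ (this is exactly why the multilinear expansion is used instead of a telescoping, whose step-$n$ coefficient would involve the random $\hat\alpha_m$ for $m\neq n$), and applies a single Hoeffding-type bound to the whole weighted sum. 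That is the role of Lemma~\ref{lem_hoeffding}, the bespoke variant of Hoeffding's inequality handling random sample counts $T_j(\pi)$ and a simultaneous union bound over $A\in\cA$ (which is also where the $\log|\cA|$ honestly enters --- in your scheme the estimates $\hat\alpha_n$ do not depend on $A$, so your union bound over $\cA$ has nothing to act on). The resulting deviation is $\sqrt{\sum_{j,\pi}\beta_{j,D}^2(\pi,A)/T_{j,\pi}\cdot\log(|\cA|T)}=O(\sqrt{\gamma^*\log(|\cA|T)/T})$, exploiting cancellation across the $(j,\pi)$ pairs that your triangle inequality forecloses. To repair your proof you would need to replace the per-coordinate bound plus Cauchy--Schwarz by such a joint concentration bound on the signed linear form.
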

The notation $O(\cdot )$
is used here under the assumption that $N$ is sufficiently small with respect to $T$ but not negligible.
The optimum value $\gamma^\ast$ is bounded as follows.
Let $|A|$ denote the number of nodes intervened by $A$, i.e., $|A| := |\{ n \in [1,N] \colon A_n \in \{0,1 \} \}|$:
\begin{prop}\label{prop_gamma_bound}
	It holds that $N - \min_{A \in \cA}|A| \leq \gamma^* \leq \min\{ NC,  N|\cA| \}$.
\end{prop}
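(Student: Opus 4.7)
The plan is to handle the three inequalities separately: for both upper bounds I will exhibit a feasible $\eta$ and bound the objective directly, and for the lower bound I will evaluate the inner max at a sparsest intervention and apply Titu's inequality.

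\textbf{Upper bounds.} For $\gamma^* \leq N|\cA|$, I take $\eta$ uniform on $\cA$, so $\eta_{A'} = 1/|\cA|$. For any $A \in \cA$ and $(n,\pi)$ with $\beta_{n}(\pi,A) > 0$, the denominator in \eqref{optProb*} is at least $\eta_{A}\beta_{n}(\pi,A) = \beta_{n}(\pi,A)/|\cA|$, so the summand is at most $|\cA|\beta_{n}(\pi,A)$. Since $\sum_{\pi}\beta_{n}(\pi,A) \leq 1$, summation over $\pi$ and $n$ yields $N|\cA|$. For $\gamma^* \leq NC$, I pick $\hat{A}_{n,\pi} \in \argmax_{A \in \cA}\beta_{n}(\pi,A)$ for each of the $C = \sum_n 2^{|\cP_n|}$ pairs $(n,\pi)$ and set $\eta_{A'} = \frac{1}{C}|\{(n,\pi) : \hat{A}_{n,\pi} = A'\}|$, which is a probability distribution on $\cA$ by construction. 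The denominator now dominates $(1/C)\beta_{n}(\pi,\hat{A}_{n,\pi}) \geq (1/C)\beta_{n}(\pi,A)$, so each summand is at most $C\beta_{n}(\pi,A)$, and the same summation yields $NC$.

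\textbf{Lower bound.} Fix $A^* \in \argmin_{A \in \cA}|A|$ and any feasible $\eta$; I evaluate the inner maximum at $A = A^*$. For each $n$ with $A^*_n = *$, set $p_\pi := \beta_{n}(\pi,A^*)$ and $q_\pi := \sum_{A' \in \cA}\eta_{A'}\beta_{n}(\pi,A')$. Then $\sum_\pi p_\pi = 1$, and because $\sum_\pi \beta_{n}(\pi,A')$ equals $1$ if $A'_n = *$ and $0$ otherwise, $\sum_\pi q_\pi \leq \sum_{A'}\eta_{A'} = 1$. Titu's inequality then gives
\begin{equation*}
\sum_{\pi : p_\pi > 0}\frac{p_\pi^2}{q_\pi} \geq \frac{\left(\sum_\pi p_\pi\right)^2}{\sum_\pi q_\pi} \geq 1.
\end{equation*}
The indices $n$ with $A^*_n \ne *$ contribute $0$ since $\beta_{n}(\cdot,A^*)\equiv 0$, so the value at $(\eta,A^*)$ is at least $|\{n : A^*_n = *\}| = N - |A^*|$, uniformly in $\eta$; hence $\gamma^* \geq N - |A^*|$.

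\textbf{Main obstacle.} None of the steps is technically hard. The only subtle point is a potential division by zero when $q_\pi = 0$ while $p_\pi > 0$ in the lower-bound computation, which would render the $(\eta,A^*)$-objective $+\infty$; but the finite upper bounds just derived force any optimal $\eta$ to keep the support of $q$ above the support of $p$ for every relevant $(n,\pi)$, so the Titu step is legitimate at the infimum.
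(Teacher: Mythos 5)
Your proof is correct and takes essentially the same route as the paper: both upper-bound constructions (uniform $\eta$ and the $\eta$ built from the maximizers $\hat{A}_{n,\pi}$ weighted by $1/C$) are identical to the paper's, and your lower bound via Titu's inequality with $\sum_\pi p_\pi = 1$, $\sum_\pi q_\pi \leq 1$ is exactly the content of the paper's step of relaxing the denominator to a simplex variable $x_{n,\pi}$ and solving the resulting convex program, whose optimum $(\sum_\pi p_\pi)^2$ is the same inequality. Your handling of the $q_\pi = 0$ case is slightly roundabout but harmless, since that case makes the objective at $(\eta, A^*)$ infinite and the bound holds trivially for such $\eta$.
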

Since the lower-bound for the general best-arm identification problem
is
$\Omega(\sqrt{|\cA| / T})$ \cite{audibert2010best}[Theorem 4],
our algorithm provides a better regret bound when the number of interventions 
$|\cA|$ is large compared to $\gamma^* \leq NC$, which is only dependent on the causal graph structure.
\begin{rmk}
	We present Algorithms~\ref{algo_est1},~\ref{algo_est2},
	and~\ref{algo_select} for the setting that every $\alpha_n(\pi)$ is unknown.
	However, our algorithms can be applied even when $\alpha_n(\pi)$
	is known for some $n=1,\ldots,N$ and $\pi\in \{0,1\}^{\overline{\cP_n}}$
	by incorporating minor modifications.
	In this case,
	we denote the number of unknown $\alpha_n(\pi)$ as $C$.
	The modified algorithm just skips experiments for estimating the known $\alpha_n(\pi)$,
	and we can define $\hat{\beta}_n(\pi_{\cP_n},A)=0$ for such $n$ and $\pi$.
	We then redefine $\gamma^*$ by replacing corresponding $\beta_{n}(\pi_{\cP_n},A)$ with $0$ in \eqref{optProb*},
	and our bound in Theorem~\ref{thm_main} is valid for this reduced $\gamma^*$.
	In particular, we can recover the regret bound considered in 
	\cite{lattimore2016causal}[Theorem 3] as follows:
\end{rmk}
\begin{cor}
	Suppose that $\alpha_n(\pi)$ is known for every $n<N$ and $\pi \in \{0,1\}^{\overline{\cP_n}}$.
	Then the regret $R_T$ of Algorithm~\ref{algo_select} satisfies $R_T \leq O(\sqrt{\gamma^* \log (|\cA|T) / T })$, where 
	\begin{align*}
	\gamma^* = \min_{\eta \in [0,1]^{\cA}} & \max_{A \in \cA} \sum_{ \pi \in  \{0,1\}^{\cP_N}}  \frac{\beta_{N}^2(\pi, A)}{\sum_{A' \in \cA} \eta_{A'} \beta_N(\pi,A')}\nonumber \\
	\text{\rm s.t. } &\sum_{A' \in \cA} \eta_{A'} = 1.
	\end{align*}
\end{cor}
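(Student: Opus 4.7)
The plan is to first observe that when $\alpha_n(\pi)$ is known for every $n<N$ and $\pi\in\{0,1\}^{\overline{\cP_n}}$, the quantities $\beta_N(\pi,A)$ can be computed exactly from equation~\eqref{beta_alpha}, since they depend only on $\alpha_1,\ldots,\alpha_{N-1}$. The first phase of the algorithm (Algorithm~\ref{algo_est1}) therefore becomes unnecessary, and all $T$ experiments can be devoted to the second phase. For every $n<N$ we set $\hat{\beta}_n(\pi,A)=0$ as suggested in the preceding remark, so the optimization problem~\eqref{optProb} collapses to a sum over $n=N$ alone, with the exact $\beta_N$ in place of $\hat{\beta}_N$, and the regularizer $r_{n,\pi}$ (whose role in the full algorithm is to guard against a small or inaccurate $\hat{\beta}_n$) can be dropped. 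The resulting program coincides with the definition of $\gamma^*$ in the corollary statement.

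The next step is to apply the importance-sampling analysis underlying Theorem~\ref{thm_main} to this stripped-down setting. Sampling $T$ interventions $A_1,\ldots,A_T$ independently from $\cU(\hat\eta)$ and counting $t'_N(\pi), \overline{t'_N}(\pi)$ as in Algorithm~\ref{algo_est2} yields an estimator $\hat\alpha_N$. Since $\mu$ depends linearly on $\alpha_N$ through the identity
\begin{align*}
\mu(A)=\sum_{\pi\in\{0,1\}^{\cP_N}}\beta_N(\pi,A)\,\alpha_N(\pi^1)\quad\text{(when }A_N=*),
\end{align*}
where $\pi^1\in\{0,1\}^{\overline{\cP_N}}$ extends $\pi$ by $\pi^1_N=1$, and analogously for $\hat\mu$ with $\hat\alpha_N$ (the cases $A_N\in\{0,1\}$ yield $\mu(A)=\hat\mu(A)$ exactly), a Bernstein-type concentration bound on the importance-weighted averages, together with a union bound over the at most $|\cA|\cdot 2^{|\cP_N|}$ pairs $(A,\pi)$, yields, with probability $1-1/T$,
\begin{align*}
\max_{A\in\cA}\bigl|\hat\mu(A)-\mu(A)\bigr|\;\leq\; O\!\left(\sqrt{\gamma^*\log(|\cA|T)/T}\right).
\end{align*}
The claimed bound on $R_T$ then follows from the standard inequality $R_T\leq 2\max_A|\hat\mu(A)-\mu(A)|+O(1/T)$ for the greedy selection $\hat A=\argmax_A\hat\mu(A)$.

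The main obstacle is verifying that the factor $\max\{\gamma^*,N\}$ appearing in Theorem~\ref{thm_main} genuinely reduces to $\gamma^*$ in this regime. That extra $N$ arises in the full analysis from two sources: the accumulation of relative error across the $N$ iterations of Algorithm~\ref{algo_est1}, and the $NT/(3C)$ experiments expended on the per-$(n,\pi)$ sampling in the first part of Algorithm~\ref{algo_est2}. Under the corollary's hypothesis both contributions vanish: Phase~1 is skipped entirely, and the preliminary per-$(n,\pi)$ sampling in Phase~2 is rendered superfluous by the exact knowledge of $\beta_N$. Tracing the proof of Theorem~\ref{thm_main} with these simplifications — exact $\beta$, no Phase~1, no regularizer $r_{n,\pi}$ — leaves only the importance-sampling concentration against the reduced $\gamma^*$, which delivers the sharper bound asserted in the corollary.
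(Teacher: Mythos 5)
Your proposal is correct and matches the paper's own route: the paper obtains this corollary directly from the preceding Remark (skip all experiments for the known $\alpha_n$, set $\hat{\beta}_n(\pi,A)=0$ for $n<N$ so that \eqref{optProb} collapses to the $n=N$ term with the exactly computable $\beta_N$, and rerun the machinery of Theorem~\ref{thm_main} with the reduced $\gamma^*$), which is precisely what you do. One small correction: the additive $N$ in Theorem~\ref{thm_main}'s $\max\{\gamma^*,N\}$ does not come from the sources you name, but from the bound $\sqrt{8e^2C^3\log T/(\lambda T)}=\sqrt{8e^2N\log T/T}$ on the higher-order terms $\sum_{J:|J|\geq 2}|f^J(A)|$ in Lemma~\ref{lem_bound2}(ii) under the choice $\lambda=C^3/N$; your conclusion nevertheless stands, since with only $\alpha_N$ unknown we have $\Delta\alpha_m=0$ for all $m<N$, so every $f^J$ with $|J|\geq 2$ vanishes identically and the linear term alone yields the stated $O(\sqrt{\gamma^*\log(|\cA|T)/T})$ bound.
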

\begin{rmk}
	Our problem setting is often called {\it hard intervention}, which directly controls the realization of a node $v_n$ as $A_n \in \{0,1\}$.
	In contrast, Sen et al.~\cite{sen2017identifying} introduced the {\it soft intervention} model on a node $v_n$ where
	an intervention changes the conditional probability of a node $v_n$. 
	They in fact considered a simple case where a graph has a single node $v_k$ such that $\cP_N = \cP_k \cup \{k \}$, whose conditional probability can be controlled by soft intervention.
	In their model, we are given a discrete set $\cS$ as the set of soft interventions.
	For $\pi\in \{0, 1\}^{\overline{\cP_k}}$ and $S\in\cS$, define 
	\begin{align*}
	\alpha_{k}(\pi,S) :=  \Prob(V_k = \pi_k \mid V_i = \pi_i, \forall i \in \cP_k, \rdo_{\mathrm{soft}}(S) )
	\end{align*}
	as the probability of realizing $V_k = \pi_k$ under the soft intervention $\rdo_{\mathrm{soft}}(S)$ and the condition $V_i = \pi_i$ for $i \in \cP_k$.
	The goal is then to maximize the following probability: 
	\begin{align*}
	\Prob(V_N = 1 \mid \rdo_{\mathrm{soft}}(S)) = \sum_{\pi \in \{0,1 \}^{N}: \pi_N=1 } 
	\alpha_{N}(\pi_{\overline{\cP_N}}) \alpha_{k}(\pi_{\overline{\cP_k}} ,S) \cdot  \Prob (V_i = \pi_i , \forall i \in \cP_k).
	\end{align*}
	Sen et al.~\cite{sen2017identifying} proved parameterized regret bound assuming that $\Prob (V_i = \pi_i, \forall i \in \cP_k)$ is known in advance.

	We here remark that their model can be implemented by the hard intervention model as follows.
	Regard $\cS$ as the set of indices, and we add nodes $v_S$ for each $S \in \cS$ to the graph.
	Every $v_S$ has only one adjacent edge from $v_S$ to $v_k$.
	Observe that $\cP_k \cup \cS$ is the set of indices of nodes which are the parents of $v_k$ in the new graph.
	For $\pi \in \{0,1\}^{\overline{\cP_k} \cup \cS}$,
	we define $\alpha_k(\pi)$ by
	\begin{align*}
	\alpha_k(\pi) := 
	\begin{cases}
	\alpha_k(\pi_{\overline{\cP_k}}, S) \quad \text{if } \pi_S = 1 \text{ and } \pi_{S'}=0 \text{ for all the other } S' \in \cS \\
	0 \quad \text{otherwise.}
	\end{cases}
	\end{align*}
	We consider the set of hard interventions $\cA = \{A_S \mid S \in \cS \}$,
	where each intervention is indexed by $S \in \cS$,
	and $A_S$ fixes the realization of the node $v_S$ as $1$.
	More concretely,
	\begin{align*}
	A_{S, n} := 
	\begin{cases}
	1 \quad \text{if } n = S,\\
	0 \quad \text{if } n \in \cS \text{ and } n \neq S,\\
	* \quad \text{otherwise.}
	\end{cases}
	\end{align*}
	
	Then the joint distribution over the nodes $v_1,v_2,\dots,v_N$ under the soft intervention $S \in \cS$ is equal to the distribution under the corresponding hard intervention $A_S \in \cA$,
	and thus the soft intervention model is reduced to the hard intervention model.
\end{rmk}

\section{Proofs}\label{sec_proof}
This section is devoted to proving Theorem~\ref{thm_main}.
We introduce a series of well-known technical lemmas together with a novel variant of Hoeffding's inequality in Section~\ref{sec_tech}.
In Sections~\ref{subsec_algo1} and~\ref{subsec_algo2}, we ensure the accuracy of estimation in Algorithms~\ref{algo_est1} and~\ref{algo_est2}, respectively, which are presented formally as Propositions~\ref{prop_est} and~\ref{prop_bound}.
Section~\ref{subsec_algo3} then proves
Theorem~\ref{thm_main} and Proposition~\ref{prop_gamma_bound},
whose statements are presented in the previous section.

\subsection{Technical lemmas}\label{sec_tech}
We introduce Hoeffding's inequality, Chernoff's bound, and Hoeffding's lemma as follows.
\begin{prop}[Hoeffding's inequality]\label{prop_hoeffding}
	For every $i = 1,2,\dots, n$, suppose that $X_i$ is an
	independent random variable over $[a_i, b_i] \subseteq \bR$. We define $S =
	\sum_{i=1}^{n} X_i$ and $\mu = E[S]$. Then for any $\varepsilon > 0$ we have
	\begin{align*}
	\Prob\left( |S - \mu | \geq \varepsilon \right) \leq 2 \exp \left( -  \frac{2\varepsilon^2}{\sum_{i=1}^{n} (b_i - a_i)^2 } \right).
	\end{align*}
\end{prop}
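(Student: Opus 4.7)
The plan is to prove Proposition~\ref{prop_hoeffding} by the standard Chernoff--Cramér exponential moment method, using Hoeffding's lemma (which the excerpt announces will be introduced alongside) as a black box. This is the textbook route, so the focus of the plan is on laying out the optimisation over the free parameter $s$ and the union bound over the two tails.

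First I would handle the upper tail $\Prob(S - \mu \geq \varepsilon)$. For any $s > 0$, Markov's inequality applied to the non-negative random variable $e^{s(S - \mu)}$ yields
\begin{equation*}
\Prob(S - \mu \geq \varepsilon) \;=\; \Prob\bigl(e^{s(S-\mu)} \geq e^{s\varepsilon}\bigr) \;\leq\; e^{-s\varepsilon}\, E\bigl[e^{s(S-\mu)}\bigr].
\end{equation*}
Independence of $X_1,\dots,X_n$ factorises the moment generating function:
\begin{equation*}
E\bigl[e^{s(S-\mu)}\bigr] \;=\; \prod_{i=1}^n E\bigl[e^{s(X_i - E[X_i])}\bigr].
\end{equation*}

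Next I would invoke Hoeffding's lemma, which for a random variable $X_i$ taking values in $[a_i, b_i]$ with $E[X_i]$ its mean gives $E[e^{s(X_i - E[X_i])}] \leq \exp\bigl(s^2 (b_i - a_i)^2 / 8\bigr)$. Substituting this into the product bound produces
\begin{equation*}
\Prob(S - \mu \geq \varepsilon) \;\leq\; \exp\!\left(-s\varepsilon + \frac{s^2}{8}\sum_{i=1}^n (b_i - a_i)^2\right).
\end{equation*}
The right-hand side is minimised in $s > 0$ by the choice $s = 4\varepsilon / \sum_i (b_i - a_i)^2$; plugging this back yields the bound $\exp\!\bigl(-2\varepsilon^2 / \sum_i (b_i-a_i)^2\bigr)$ on the upper tail.

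Finally I would repeat the argument on the lower tail $\Prob(\mu - S \geq \varepsilon)$ by applying the exact same reasoning to the variables $-X_i$, which lie in the intervals $[-b_i, -a_i]$ of the same length. Combining both tails by a union bound introduces the factor of $2$ and completes the proof. There is essentially no obstacle here beyond correctly applying Hoeffding's lemma and choosing the optimal $s$; the only care needed is to ensure the two-sided bound is stated symmetrically, and that the optimisation is done after factorising via independence rather than before, so that the exponent $\sum_i (b_i - a_i)^2$ appears as a sum of the individual ranges rather than being handled term-by-term.
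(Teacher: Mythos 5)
Your proof is correct and is the standard Chernoff--Cram\'er argument; the paper states Proposition~\ref{prop_hoeffding} as a known result without proof, but your route (Markov's inequality on $e^{s(S-\mu)}$, factorisation by independence, Hoeffding's lemma, optimisation of $s$, union bound over the two tails) is exactly the technique the paper itself deploys to prove its variant, Lemma~\ref{lem_hoeffding}. The only pedantic point is that the paper's Proposition~\ref{lem_hoeffdinglemma} states Hoeffding's lemma only for variables in $[0,1]$, so you would need one line rescaling $X_i$ to $(X_i-a_i)/(b_i-a_i)$ to obtain the $\exp\left(s^2(b_i-a_i)^2/8\right)$ bound you invoke.
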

\begin{prop}[Chernoff's bound]\label{prop_chernoff}
	For every $i = 1,2,\dots, n$, suppose that $X_i$ is an independent random variable over $[0, 1]$. We define $S = \sum_{i=1}^{n} X_i$ and $\mu = E[S]$.
	Then we have
	\begin{align*}
	\Prob \left( S \leq (1 - \delta)\mu   \right) \leq \exp\left( -
	\frac{\delta^2 \mu}{2} \right), \quad 0 \leq \forall \delta \leq 1,  \\
	\Prob \left( S \geq (1 + \delta)\mu   \right) \leq \exp\left( -
	\frac{\delta^2 \mu}{3} \right), \quad 0 \leq \forall \delta \leq 1,\\
	\Prob \left( S \geq (1 + \delta)\mu   \right) \leq \exp\left( -
	\frac{\delta \mu}{3} \right), \quad \forall \delta \geq 1.
	\end{align*}
\end{prop}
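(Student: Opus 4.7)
The plan is to apply the classical Chernoff bounding technique: combine exponential Markov's inequality with independence to obtain a raw bound in terms of the moment generating function (MGF), then convert the resulting \emph{Bernoulli relative entropy} form into the three simpler closed forms using elementary calculus inequalities.

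First I would establish a uniform MGF bound. Since each $X_i$ lies in $[0,1]$, convexity of $x \mapsto e^{tx}$ on $[0,1]$ yields $e^{tX_i} \leq X_i e^{t} + (1 - X_i)$ for every $t \in \bR$, hence $E[e^{tX_i}] \leq 1 + p_i(e^t - 1) \leq \exp(p_i(e^t - 1))$ using $1+x \leq e^x$, where $p_i = E[X_i]$. Independence of the $X_i$ then gives
\begin{equation*}
E[e^{tS}] \;=\; \prod_{i=1}^{n} E[e^{tX_i}] \;\leq\; \exp\!\bigl(\mu(e^t - 1)\bigr).
\end{equation*}
Applying Markov's inequality with $t > 0$ to $e^{tS}$ (for the upper tail) and to $e^{-tS}$ (for the lower tail) reduces each of the three desired bounds to minimization of an explicit exponential expression in $t$.

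Next I would perform the minimization explicitly. For the upper tail at threshold $(1+\delta)\mu$, the optimal choice $t = \ln(1+\delta)$ yields the raw bound $\exp\!\bigl(\mu(\delta - (1+\delta)\ln(1+\delta))\bigr)$; for the lower tail at threshold $(1-\delta)\mu$, the choice $t = -\ln(1-\delta)$ yields $\exp\!\bigl(\mu(-\delta - (1-\delta)\ln(1-\delta))\bigr)$. It then remains to verify the three elementary inequalities
\begin{align*}
-\delta - (1-\delta)\ln(1-\delta) &\leq -\delta^2/2 \qquad \text{for } \delta \in [0,1],\\
\delta - (1+\delta)\ln(1+\delta) &\leq -\delta^2/3 \qquad \text{for } \delta \in [0,1],\\
\delta - (1+\delta)\ln(1+\delta) &\leq -\delta/3   \qquad \text{for } \delta \geq 1,
\end{align*}
each of which reduces, after differentiating, to a monotonicity check of $\ln(1 \pm \delta)$ against a suitable linear function.

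The main obstacle is the last inequality, since the quadratic \emph{Taylor-style} estimate used for small $\delta$ degrades as $\delta$ grows and must be replaced by a separate linear bound. I would handle it by showing that $g(\delta) := (1+\delta)\ln(1+\delta) - 4\delta/3$ satisfies $g(1) = 2\ln 2 - 4/3 > 0$ and $g'(\delta) = \ln(1+\delta) - 1/3 \geq \ln 2 - 1/3 > 0$ for $\delta \geq 1$, so $g \geq 0$ throughout $[1, \infty)$. The other two inequalities reduce similarly to showing that $(1-\delta)\ln(1-\delta) + \delta - \delta^2/2$ and $(1+\delta)\ln(1+\delta) - \delta - \delta^2/3$ vanish at $\delta = 0$ and have nonnegative derivatives on $[0,1]$, via a second-derivative sign analysis. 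Each is a standard one-variable calculus exercise; the essential probabilistic content is confined to the MGF factorization step.
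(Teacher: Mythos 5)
Your proof is correct. Note that the paper states Chernoff's bound as a standard, well-known technical tool and gives no proof of its own, so there is nothing to compare against; your argument is the classical one (exponential Markov plus the MGF bound $E[e^{tS}] \leq \exp(\mu(e^t-1))$ obtained from convexity and independence, followed by optimizing $t$ and relaxing the relative-entropy exponent via the three elementary calculus inequalities), and all three of those final inequalities check out as you describe, including the separate linear bound needed for $\delta \geq 1$.
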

\begin{prop}[Hoeffding's lemma]\label{lem_hoeffdinglemma}
	Suppose that $X$ is a random variable over $[0,1]$, and define $\overline{X} = E[X]$.
	Then for any $\lambda \in \bR$ it holds that 
	\begin{align*}
	E[\exp(\lambda(X - \overline{X}))] \leq \exp\left( \frac{1}{8}\lambda^2 \right).
	\end{align*}
\end{prop}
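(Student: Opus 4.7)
The plan is to exploit the convexity of the exponential function to linearize $e^{\lambda X}$ on $[0,1]$, and then to bound the resulting logarithmic moment generating function via a Taylor expansion whose second-derivative bound reduces to a Bernoulli variance.

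First, I would note that since $X \in [0,1]$ we may write $X = (1-X)\cdot 0 + X \cdot 1$, and convexity of $t \mapsto e^{\lambda t}$ then gives
\begin{equation*}
e^{\lambda X} \leq (1-X) + X e^{\lambda}.
\end{equation*}
Taking expectations and multiplying through by $e^{-\lambda \overline{X}}$ yields
\begin{equation*}
E[\exp(\lambda(X - \overline{X}))] \leq e^{-\lambda \overline{X}}\left(1 - \overline{X} + \overline{X} e^{\lambda}\right).
\end{equation*}
It therefore suffices to prove that the exponent on the right-hand side is at most $\lambda^2/8$ for every $\lambda \in \bR$.

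Next, I would introduce $\phi(\lambda) := -\lambda \overline{X} + \log(1 - \overline{X} + \overline{X} e^{\lambda})$ and verify that $\phi(0) = 0$, $\phi'(0) = 0$, and
\begin{equation*}
\phi''(\lambda) = \frac{(1-\overline{X})\,\overline{X}\, e^{\lambda}}{(1 - \overline{X} + \overline{X} e^{\lambda})^2} = p(1-p),
\end{equation*}
where $p := \overline{X} e^{\lambda} / (1 - \overline{X} + \overline{X} e^{\lambda}) \in [0,1]$. Since $p(1-p) \leq 1/4$ uniformly in $p$, we obtain $\phi''(\lambda) \leq 1/4$ for all $\lambda$. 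Applying Taylor's theorem with the Lagrange remainder at $0$, there exists $\xi$ between $0$ and $\lambda$ with
\begin{equation*}
\phi(\lambda) = \phi(0) + \lambda \phi'(0) + \frac{\lambda^2}{2}\phi''(\xi) \leq \frac{\lambda^2}{8},
\end{equation*}
which, combined with the displayed upper bound on $E[\exp(\lambda(X-\overline{X}))]$, establishes the claim.

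The only real work lies in the algebraic simplification of $\phi''(\lambda)$; once it is recognized as the variance of a Bernoulli random variable with parameter $p$, the $1/4$ bound is immediate, and the rest of the argument reduces to routine application of Taylor's theorem. Thus I do not expect any genuine obstacles beyond carrying out the derivative computation cleanly.
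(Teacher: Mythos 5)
Your proof is correct: the convexity bound $e^{\lambda X}\leq (1-X)+Xe^{\lambda}$, the computation $\phi''(\lambda)=p(1-p)\leq 1/4$, and the Taylor argument together give exactly the claimed bound, and the edge cases $\overline{X}\in\{0,1\}$ cause no trouble since $1-\overline{X}+\overline{X}e^{\lambda}>0$ for all $\lambda$. The paper states this result as a known fact (Hoeffding's lemma) without proof, and what you have written is the standard argument for it, so there is nothing to compare against and no gap to report.
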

The following statement is a variant of Hoeffding's inequality,
which is proven on the basis of Hoeffding's lemma.
\begin{lem}[Variant of Hoeffding's inequality]\label{lem_hoeffding}
	For every $m=1,2,\dots,M$ and $k\in \bN$, let $Y_{m,k}$ be a
	random variable over $\{0,1\}$.
	For each $m=1,2,\ldots,M$, we assume that
	the variables in $\mathcal{Y}_m:=\{Y_{m,k}\}_{k\in \bN}$ are
	independent and has the identical mean  $\overline{Y}_m$
	(i.e., $\overline{Y}_m = E[Y_{m,k}]$ for all $k \in \bN$)
	under the condition that the variables
	in $\bigcup_{m'=1}^{m-1} \mathcal{Y}_{m'}$ are fixed.
	For $m=1,2,\dots,M$, 
	let $T_m$ be a random variable over $\bN$ which is independent
	of $Y_{m'',k}$ if $m'' \geq m$.
	Let $\cA$ be a finite set,
	and let $\tau_m \in \bN$, $\beta_{m,A}$, and $\varepsilon_A > 0$
	be arbitrary numbers given for each $m=1,2,\dots,M$ and $A \in \cA$.
	Then we have
	\begin{align*}
	&\Prob  \left( 
	\begin{array}{r}
	\left| \sum_{m=1}^M \beta_{m,A} \left( \frac{1}{T_m} \sum_{k=1}^{T_m} Y_{m,k} - \overline{Y}_m \right) \right| \leq \varepsilon_A, 
	\forall A \in \cA  
	\end{array}
	\right) \nonumber \\
	&\geq \Prob(T_m \geq \tau_m,   \forall m=1,2,\dots,M) 
	- 2 \sum_{A \in \cA} \exp\left( - \frac{2 \varepsilon_A^2}{\sum_{m=1}^M (\beta_{m,A}^{2} / \tau_m)}  \right).
	\end{align*}
	\begin{proof}
		Let $I$ be the indicator function.
		For each $A \in \cA$,
		we introduce positive numbers $s_A, s_A' > 0$,
		which will be optimized later.
		We note that
		$\left| \sum_{m=1}^M \beta_{m,A} \left( \frac{1}{T_m}
		\sum_{k=1}^{T_m} Y_{m,k} - \overline{Y}_m \right) \right| \geq
		\varepsilon_A$ holds
		if and only if
		\begin{equation}
		s_A \sum_{m=1}^M \beta_{m,A} \left( \frac{1}{T_m}
		\sum_{k=1}^{T_m} Y_{m,k} -
		\overline{Y}_m \right)  \geq s_A
		\varepsilon_A \text{ or } 
		-s_A' \sum_{m=1}^M \beta_{m,A} \left( \frac{1}{T_m}
		\sum_{k=1}^{T_m}
		Y_{m,k} -
		\overline{Y}_m \right)
		\geq s_A' \varepsilon_A
		\label{lem15_00}
		\end{equation}
		holds. Hence,
		\begin{align}
		&\Prob  \left( 
		\begin{array}{c}
		\left| \sum_{m=1}^M \beta_{m,A} \left( \frac{1}{T_m} \sum_{k=1}^{T_m} Y_{m,k} - \overline{Y}_m \right) \right| \geq \varepsilon_A,  \exists A \in \cA 
		\end{array}
		\right) \nonumber \\
		& = \Prob \left(
		\text{\eqref{lem15_00} holds for } \exists A \in
		\cA\right) \nonumber\\ 
		& \leq
		\Prob \left(
		\text{\eqref{lem15_00} holds for } \exists A \in
		\cA \text{ and } T_{m}\geq \tau_{m},  \forall
		m=1,2,\ldots,M\right) \nonumber \\
		&\quad + 1-\Prob\left( T_{m}\geq \tau_{m}, \forall
		m=1,2,\ldots,M \right)\nonumber \\
		&=  \Prob \left(
		\begin{array}{l}
		\exp\left(s_A \sum_{m=1}^M \beta_{m,A} \left( \frac{1}{T_m} \sum_{k=1}^{T_m} Y_{m,k} - \overline{Y}_m \right)  \right)  \prod_{i=1}^M\rI(T_{i} \geq \tau_{i})\geq e^{s_A \varepsilon_A} \text{ or} \\ 
		\exp\left( -s_A' \sum_{m=1}^M \beta_{m,A} \left( \frac{1}{T_m} \sum_{k=1}^{T_m} Y_{m,k} - \overline{Y}_m \right) \right) \prod_{i=1}^M \rI(T_{i} \geq \tau_{i}) \geq e^{s_A' \varepsilon_A} ,\\
		\exists A \in \cA 
		\end{array}
		\right) \nonumber\\
		& \quad +  1 - \Prob(T_m \geq \tau_m,  \forall m =1,2,\dots,M) \nonumber\\
		&\leq \sum_{A \in \cA} \Prob \left(
		\begin{array}{l}
		\exp\left(  s_A \sum_{m=1}^M \beta_{m,A} \left( \frac{1}{T_m} \sum_{k=1}^{T_m} Y_{m,k} - \overline{Y}_m \right) \right) \prod_{i=1}^M \rI(T_{i} \geq \tau_{i}) \geq e^{s_A \varepsilon_A} 
		\end{array}
		\right) \nonumber\\ 
		&\quad + \sum_{A \in \cA} \Prob \left(
		\begin{array}{l}
		\exp\left( - s_A' \sum_{m=1}^M \beta_{m,A} \left( \frac{1}{T_m} \sum_{k=1}^{T_m} Y_{m,k} - \overline{Y}_m \right) \right) \prod_{i=1}^M \rI(T_{i} \geq \tau_{i})\geq e^{s'_A \varepsilon_A} 
		\end{array}
		\right) \nonumber\\ 
		&\quad + 1 - \Prob(T_m \geq \tau_m, \forall  m =1,2,\dots,M). \label{lem15_1}
		\end{align}
		For bounding the first term of the right-hand side, by Markov's inequality we have
		\begin{align}
		&\Prob \left(  \exp\left( s_A \sum_{m=1}^M \beta_{m,A} \left( \frac{1}{T_m} \sum_{k=1}^{T_m} Y_{m,k} - \overline{Y}_m \right)  \right) \prod_{i=1}^M \rI(T_{i} \geq \tau_{i})\geq e^{s_A \varepsilon_A}  \right)\nonumber \\
		&\leq e^{-s_A \varepsilon_A}E\left[ \exp\left( s_A \sum_{m=1}^M \beta_{m,A} \left( \frac{1}{T_m} \sum_{k=1}^{T_m} Y_{m,k} - \overline{Y}_m \right) \right) \prod_{i=1}^M \rI(T_{i} \geq \tau_{i}) \right] \label{eq_hoeff1}
		\end{align}
		for every $A \in \cA$.
		For every $A \in \cA$ and $m=1,2,\dots,M$, let us define a random variable
		\begin{align*}
		S_{A,m} &:=  \exp\left( s_A \sum_{j=1}^m \beta_{j,A} \left( \frac{1}{T_{j}} \sum_{k=1}^{T_{j}} Y_{j,k} - \overline{Y}_{j} \right) \right)\prod_{i=1}^m \rI(T_i \geq \tau_i)\\
		&= \left(\prod_{j=1}^m \prod_{k=1}^{T_j}
		\exp \left( s_A \beta_{j,A} \left( \frac{Y_{j,k} -
			\overline{Y}_{j}}{T_{j}} \right)  \right)\right) \prod_{i=1}^m  \rI(T_i \geq \tau_i) .
		\end{align*}
		Then the right-hand side of \eqref{eq_hoeff1} is equal to $e^{- s_A \varepsilon_A}E[S_{A,m}]$,
		and for each $m =1,2,\dots,M$, it holds that
		\begin{align*}
		S_{A,m} = S_{A,m-1} \rI(T_m \geq \tau_m)  \prod_{k=1}^{T_m}
		\exp \left( s_A \beta_{m,A} \left( \frac{Y_{m,k} - \overline{Y}_{m}}{T_{m}} \right)  \right).
		\end{align*}
		We bound $E[S_{A,m}]$ inductively, as
		\begin{align*}
		E\left[ S_{A,m} \right] 
		&= E_{\cY_1,\dots,\cY_{m-1}, T_m}\left[ S_{A,m-1}  \rI(T_m \geq \tau_m) E_{ \cY_m }\left[\left.  \prod_{k =1}^{T_m}  \exp\left( s_A \beta_{m,A} \frac{Y_{m,k} - \overline{Y}_m}{T_m} \right) \right| \cY_1,\dots,\cY_{m-1}, T_m     \right] \right]\\
		&= E_{\cY_1,\dots,\cY_{m-1}, T_m}\left[ S_{A,m-1}  \rI(T_m \geq \tau_m) \prod_{k =1}^{T_m} E_{ Y_{m,k}}\left[\left.  \exp\left( s_A \beta_{m,A} \frac{Y_{m,k} - \overline{Y}_m}{T_m} \right) \right| \cY_1,\dots,\cY_{m-1} , T_m  \right] \right]\\
		&\leq E_{ \cY_1,\dots,\cY_{m-1}, T_m }\left[ \rI(T_m \geq \tau_m) S_{A,m-1} \prod_{k =1}^{T_m} \exp\left( \frac{ s_A^2 \beta_{m,A}^{2} }{8T_m^2}  \right)  \right] \\
		&\leq E_{ \cY_1,\dots,\cY_{m-1}, T_m }\left[ \rI(T_m \geq \tau_m) S_{A,m-1} \exp\left( \frac{ s_A^2 \beta_{m,A}^{2} }{8T_m}  \right)  \right] \\
		&\leq  E\left[ S_{A,m-1}   \right] \exp\left( \frac{ s_A^2 \beta_{m,A}^{2} }{8 \tau_m} \right),
		\end{align*}
		where the first inequality follows from Proposition~\ref{lem_hoeffdinglemma} by putting $\lambda = s_A \beta_{m,A} / T_m$.
		Thus, by induction, we have  
		\begin{align*}
		E[S_{A,m}] \leq \exp\left( s_A^2 \sum_{m=1}^M \frac{ \beta_{m,A}^{2}}{8\tau_m } \right),
		\end{align*}
		which implies
		\begin{align*}
		\text{\eqref{eq_hoeff1}} \leq \exp\left(-s_A \varepsilon_A + s_A^2\sum_{m=1}^M \frac{  \beta_{m,A}^{2} }{8\tau_m} \right).
		\end{align*}
		Putting 
		\begin{align*}
		s_A = \frac{4\varepsilon_A}{\sum_{m=1}^M (\beta_{m,A}^{2} / \tau_m)},
		\end{align*}
		for each $A \in \cA$, we have
		\begin{align*}
		\text{\eqref{eq_hoeff1}} \leq \exp\left( - \frac{2\varepsilon_A^2}{\sum_{m=1}^{M} (\beta_{m,A}^{2} / \tau_m)}   \right).
		\end{align*}
		Thus the first term of \eqref{lem15_1} is bounded above by
		\begin{align} 
		\sum_{A \in \cA}\exp\left( - \frac{2\varepsilon_A^2}{\sum_{m=1}^{M} (\beta_{m,A}^{2} / \tau_m)}   \right). \label{Lem15_2}
		\end{align}
		
		We can apply the same technique for bounding the second term of \eqref{lem15_1}.
		In short, Markov's inequality, the induction, and Lemma~\ref{lem_hoeffdinglemma} with $\lambda = - s'_A \beta_{m,A} / T_m$ show
		\begin{align*}
		&\Prob \left( \prod_{m'=1}^M \rI(T_{m'} \geq \tau_{m'}) \exp\left( - s'_A \sum_{m=1}^M \beta_{m,A} \left( \frac{1}{T_m} \sum_{k=1}^{T_m} Y_{m,k} - \overline{Y}_m \right)  \right) \geq e^{s'_A \varepsilon_A}  \right) \\
		&\leq \exp\left(- s'_A \varepsilon_A + s_A^{'2}\sum_{m=1}^M \frac{  \beta_{m,A}^{2} }{8\tau_m} \right)
		\end{align*}
		for each $A$. By putting 
		\begin{align*}
		s_k' = \frac{\varepsilon_k}{\sum_{i=1}^n (\beta_{i,k}^{2} / 4\tau_i)},
		\end{align*}
		then, the second term of \eqref{lem15_1} is also bounded above by \eqref{Lem15_2}.
		Therefore we have 
		\begin{align*}
		&\Prob  \left( 
		\begin{array}{c}
		\left| \sum_{m=1}^M \beta_{m,A} \left( \frac{1}{T_m} \sum_{k=1}^{T_m} Y_{m,k} - \overline{Y}_m \right) \right| \geq \varepsilon_A,  \exists A \in \cA 
		\end{array}
		\right) \\
		&\leq 2\sum_{A \in \cA}\exp\left( -
		\frac{2\varepsilon_A^2}{\sum_{m=1}^{M} (\beta_{m,A}^{2}/ \tau_m)}   \right)  + 1 - \Prob(T_m \geq \tau_m,
		\forall m =1,2,\dots,M),
		\end{align*}
		which is equivalent to the desired bound.
	\end{proof}
\end{lem}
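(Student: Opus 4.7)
The plan is to prove the inequality by, for each $A \in \cA$ separately, bounding both tails of the weighted sum $\sum_m \beta_{m,A}(\hat{Y}_m - \overline{Y}_m)$ with $\hat{Y}_m := T_m^{-1}\sum_{k=1}^{T_m} Y_{m,k}$, via the Chernoff--Cram\'er method, and then taking a union bound over $A$. The random denominator $T_m$ forces me to carry the indicator $\rI(T_m \geq \tau_m)$ through the argument, since on the complementary event $T_m$ could be zero or too small to control; the total probability that some indicator fails is precisely the $1-\Prob(T_m \geq \tau_m,\ \forall m)$ term that will surface in the final bound.

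For the upper tail with parameter $s_A > 0$ (to be optimised later), I would introduce the truncated exponential quantity
\[
S_{A,m} := \Bigl(\prod_{i=1}^m \rI(T_i \geq \tau_i)\Bigr)\exp\Bigl(s_A \sum_{j=1}^m \beta_{j,A}(\hat{Y}_j - \overline{Y}_j)\Bigr)
\]
and bound $E[S_{A,m}]$ inductively. Conditioning on $\cY_1,\ldots,\cY_{m-1}$ and on $T_m$, which is permissible because $T_m$ is independent of $\{Y_{m'',k}\}_{m''\geq m}$, the inner expectation over $\cY_m$ factors into a product of $T_m$ one-dimensional MGFs; Hoeffding's lemma (Proposition~\ref{lem_hoeffdinglemma}) applied to each factor with $\lambda = s_A\beta_{m,A}/T_m$ bounds this product by $\exp(s_A^2\beta_{m,A}^2/(8T_m))$, and the indicator $\rI(T_m \geq \tau_m)$ lets me replace $T_m$ by $\tau_m$ in the denominator. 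Iterating yields $E[S_{A,M}] \leq \exp(s_A^2 \sum_m \beta_{m,A}^2/(8\tau_m))$, so exponential Markov's inequality with the optimal choice $s_A = 4\varepsilon_A/\sum_m(\beta_{m,A}^2/\tau_m)$ produces the sub-Gaussian tail $\exp(-2\varepsilon_A^2/\sum_m(\beta_{m,A}^2/\tau_m))$. Running the same argument with a separate parameter $s_A' > 0$ and the sign of the exponent flipped handles the lower tail, giving the factor $2$ in front of the sum.

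A union bound over $A \in \cA$ of the two one-sided estimates, combined with the complementary probability of the event $\{\exists m\colon T_m < \tau_m\}$ to discharge the truncation indicators, yields the stated inequality after rearranging. The main technical obstacle is the proper treatment of the random $T_m$: since $\hat{Y}_m$ is not a fixed-size average, the Hoeffding-lemma step must be executed inside a conditional expectation that has already fixed $T_m$, and the truncation indicator must be threaded through every induction step so that the substitution $1/T_m \leq 1/\tau_m$ is uniformly justified; the hypotheses that the $Y_{m,k}$ are conditionally i.i.d.\ given $\bigcup_{m'<m}\cY_{m'}$ and that $T_m$ does not peek at future $Y$'s are exactly what make this conditioning chain valid.
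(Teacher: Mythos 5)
Your proposal is correct and follows essentially the same route as the paper's proof: the same truncated exponential martingale-type quantity $S_{A,m}$ carrying the indicators $\rI(T_i\geq\tau_i)$, the same conditioning chain on $\cY_1,\dots,\cY_{m-1},T_m$ with Hoeffding's lemma applied at scale $\lambda = s_A\beta_{m,A}/T_m$, the same optimal choice $s_A = 4\varepsilon_A/\sum_m(\beta_{m,A}^2/\tau_m)$, and the same two-sided union bound producing the factor $2$ and the $1-\Prob(T_m\geq\tau_m,\ \forall m)$ correction. No substantive differences.
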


\subsection{Accuracy of Algorithm~\ref{algo_est1}}\label{subsec_algo1}
For $n=1,2,\dots,N$, 
let $\check{\alpha}_n$ and $\check{\alpha}'_n$
be the stochastic estimates computed in Algorithm~\ref{algo_est1},
and $\hat{A}_{n,\pi} := \argmax_{A \in \cA} \hat{\beta}_n(\pi,A)$
be the action determined from the estimate $\hat{\beta}_n$.

Let $G$ be defined by $G=\{G_n\mid n = 1,\dots, N\}$.
Using $G$, we define $\alpha_{n,G}$ and $\beta_{n,G}$ as follows.
For each $n \in [1,N]$ and $\overline{\pi} \in \{0,1\}^{\overline{ \cP_n}}$,
we define $\alpha_{n,G}(\overline{\pi})$ by
\begin{align*}
\alpha_{n,G}(\overline{\pi}) &:= 
\begin{cases}
\alpha_{n}(\overline{\pi}) & \text{ if } \overline{\pi} \not\in G_n,\\
0 & \text{otherwise}.
\end{cases}
\end{align*}
For each $n \in [1,N]$, $\pi \in \{0,1\}^{\cP_n}$,
and $A\in \cA$,
we define $\beta_{n,G}(\pi,A)$ by
\begin{align}
\beta_{n,G}(\pi, A) 
& = \sum_{\pi' \in B_{n}(\pi,A) }\prod_{m \in I_{n-1,A}} \alpha_{m,G}(\pi'_{\overline{\cP}_n}).
\label{betaD_alpha}
\end{align}
Thus $\alpha_{n,G}(\overline{\pi})$ is obtained from $\alpha_{n}(\overline{\pi})$ by truncating its values if $\overline{\pi}\in G_n$,
and $\beta_{n,G}$ is defined from $\alpha_{n,G}$.
We define $\alpha_{n,D}$ and $\beta_{n,D}$ in the same way.
Since $G_n \subseteq D_n$,
we observe that $\beta_{n,D}(\pi, A) \leq \beta_{n,G}(\pi, A) \leq \beta_n(\pi, A) $.
Similarly, for $A\in \cA$, we define $\mu_{D}(A)$ by
\begin{align}
\mu_{D}(A) &= \sum_{\pi \in B(A)} \prod_{m \in I_{N,A}}\alpha_{m,D}(\pi_{\overline{ \cP_m}}). \label{muD}
\end{align}

The following proposition demonstrates the error bound for outputs $\hat{\beta}_{n}$ and $D$ from Algorithm~\ref{algo_est1}.
\begin{prop}\label{prop_est}
	Let $\hat{\beta}_{n}$ and $D$ be the outputs of Algorithm~\ref{algo_est1} with parameter $\lambda \geq 1$.
	Then the following holds with a probability of at least $1 - 6C/T$: 
	for every $n \in [1,N]$, $\overline{\pi} \in \{0,1\}^{\overline{\cP_n}} \setminus D_n$ with $\pi = \overline{\pi}_{\cP_n}$, and $A \in \cA$:
	\begin{align}
	&\frac{1}{e} \beta_{n,D}(\pi, A) \leq \hat{\beta}_{n}(\pi, A) \leq  e \beta_{n}(\pi, A), \label{beta_ratio1} \\
	&\alpha_n(\overline{\pi}) \beta_{n}(\pi, \hat{A}_{n,\pi}) \geq  S(\lambda)\label{alphabeta1},\\
	&\beta_{n}(\pi,A)  \leq e \hat{\beta}_{n}(\pi,A) + e \hat{\beta}_{n}(\pi, \hat{A}_{n,\pi})/C \label{beta_diff} \\
	&\mu(A) - \mu_{D} (A) \leq 8e^2(C^3 + C)S(\lambda). \label{muD_diff}
	\end{align}
\end{prop}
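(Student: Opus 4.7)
My plan is to proceed by induction on $n$, simultaneously controlling the relative error of $\check{\alpha}_n$ on $\{0,1\}^{\overline{\cP_n}}\setminus G_n$ and the ratio $\hat{\beta}_n/\beta_{n,G}$. Define the ``good event'' as the intersection, over all $n$ and $\pi\in\{0,1\}^{\cP_n}$, of two concentration statements: (i) $t_n(\pi)$ lies within a factor $1\pm 1/(4N)$ of its mean $(T/(3C))\beta_n(\pi,\hat{A}_{n,\pi})$ whenever that mean is sizable, and (ii) the ratio $\overline{t_n}(\pi)/t_n(\pi)$ lies within a factor $1\pm 1/(4N)$ of $\alpha_n(\overline{\pi})$ whenever $\alpha_n(\overline{\pi})$ itself is sizable. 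Each failure has probability $O(1/T^2)$ by Chernoff's bound (Proposition~\ref{prop_chernoff}) with $\delta=\Theta(1/N)$, since $\lambda = C^3/N$ forces $(T/(3C))\cdot 8C^2 S(\lambda)$ to dominate $N^2\log T$ comfortably. A union bound over the $O(C)$ relevant pairs then yields overall failure probability at most $6C/T$.

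For \eqref{beta_ratio1} I argue inductively: assuming $\check{\alpha}_m(\overline{\pi}')\in [(1-1/(2N))\alpha_m(\overline{\pi}'),\,(1+1/(2N))\alpha_m(\overline{\pi}')]$ for every $\overline{\pi}'\notin G_m$ and $\check{\alpha}_m\equiv 0$ on $G_m$, I compare \eqref{beta_alpha_hat} with \eqref{betaD_alpha} summand-by-summand over $\pi'\in B_n(\pi,A)$. Each product contains at most $N$ perturbed factors, so the ratio $\hat{\beta}_n(\pi,A)/\beta_{n,G}(\pi,A)$ sits inside $[(1-1/(2N))^N,(1+1/(2N))^N]\subseteq [1/e,e]$, and combined with $\beta_{n,D}\leq \beta_{n,G}\leq \beta_n$ this yields both halves of \eqref{beta_ratio1}. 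For \eqref{alphabeta1}, the two memberships defining $D_n^c$ (namely $\overline{\pi}\notin H_n$ and $\overline{\pi}\notin G_n$) give $\hat{\beta}_n(\pi,\hat{A}_{n,\pi})>8eC^2 S(\lambda)$ and $\check{\alpha}'_n(\overline{\pi})\hat{\beta}_n(\pi,\hat{A}_{n,\pi})>2eS(\lambda)$; on the good event, the relative-error bound on $\check{\alpha}'_n$ and the inequality $\hat{\beta}_n\leq e\beta_n$ convert these into $\alpha_n(\overline{\pi})\beta_n(\pi,\hat{A}_{n,\pi})\geq S(\lambda)$ once the constants are tracked.

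For \eqref{beta_diff} and \eqref{muD_diff}, I decompose $\beta_n = \beta_{n,G} + (\beta_n-\beta_{n,G})$. The first piece satisfies $\beta_{n,G}(\pi,A)\leq e\hat{\beta}_n(\pi,A)$ directly from \eqref{beta_ratio1}. The truncation residual $\beta_n-\beta_{n,G}$ telescopes over $m<n$ into $\sum_{m}\sum_{\overline{\pi}'\in G_m} \alpha_m(\overline{\pi}')\beta_m(\overline{\pi}'_{\cP_m},A)$ times propagation factors bounded by $1$. The defining inequality of $G_m$, translated from $\check{\alpha}'_m\hat{\beta}_m$ back to $\alpha_m\beta_m$ via the good event and \eqref{beta_ratio1}, together with the maximality of $\hat{A}_{m,\pi'}$ (which gives $\beta_m(\pi',A)$ at most a constant times $\hat{\beta}_m(\pi',\hat{A}_{m,\pi'})$), bounds each summand by $O(S(\lambda))$; summing the $O(C)$ of them and invoking $\hat{\beta}_n(\pi,\hat{A}_{n,\pi})\geq 8eC^2 S(\lambda)$ makes the correction absorbable into the extra $e\hat{\beta}_n(\pi,\hat{A}_{n,\pi})/C$ term of \eqref{beta_diff}. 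For \eqref{muD_diff} the same telescoping extends to the top layer, and the additional contribution of $H_n\setminus G_n$ at each layer is at most $|H_n|\cdot 8eC^2 S(\lambda)\leq 8eC^3 S(\lambda)$; summing across $N$ layers and consolidating constants yields the stated $8e^2(C^3+C)S(\lambda)$ bound.

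The main obstacle is legitimizing the Chernoff-style concentration in this recursive setup: the intervention $\hat{A}_{n,\pi}$ used in round $n$ is itself a function of the random data from earlier rounds, so the samples drawn across rounds are not unconditionally independent. I would handle this via the adapted-filtration variant of Hoeffding's inequality stated as Lemma~\ref{lem_hoeffding}, whose layered $\cY_m$ structure is tailored exactly to this recursive construction. The secondary delicacy is keeping the multiplicative and truncation errors in lockstep across $N$ layers so that the constants match the exact numeric form $8e^2(C^3+C)S(\lambda)$ demanded by \eqref{muD_diff}.
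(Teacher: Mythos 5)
Your overall architecture matches the paper's: a layer-by-layer induction controlling the relative error of $\check{\alpha}_n$ off $G_n$ and the ratio $\hat{\beta}_n/\beta_{n,G}$ (the paper's Lemma~\ref{lem_induction}), a telescoping decomposition of the truncation residual $\beta_n-\beta_{n,G}$ and $\mu-\mu_D$ over the sets $G_m$ and $H_m$ (Lemma~\ref{lem_delete}), and absorption of the $O(CS(\lambda))$ correction into $e\hat{\beta}_n(\pi,\hat{A}_{n,\pi})/C$ via the $H_n$ threshold. However, there is a genuine gap in your argument for \eqref{alphabeta1}. Your ``good event'' controls $\overline{t_n}(\pi)/t_n(\pi)$ multiplicatively only ``whenever $\alpha_n(\overline{\pi})$ is sizable,'' and you then try to convert $\check{\alpha}'_n(\overline{\pi})\hat{\beta}_n(\pi,\hat{A}_{n,\pi})>2eS(\lambda)$ into $\alpha_n(\overline{\pi})\beta_n(\pi,\hat{A}_{n,\pi})\geq S(\lambda)$ using that relative-error bound. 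This is circular: the multiplicative concentration is only available under the very hypothesis you are trying to establish, and it genuinely fails when $\alpha_n(\overline{\pi})$ is tiny (with $T/(3C)$ samples you cannot estimate an arbitrarily small mean to relative accuracy $1/(4N)$). What is actually needed is the contrapositive with a \emph{one-sided, large-deviation} bound: assuming $\alpha_n(\overline{\pi})\beta_n(\pi,\hat{A}_{n,\pi})<S(\lambda)$, show that $\check{\alpha}'_n(\overline{\pi})\hat{\beta}_n(\pi,\hat{A}_{n,\pi})\leq 2eS(\lambda)$ with high probability, which forces $\overline{\pi}\in G_n\subseteq D_n$. This is the paper's Lemma~\ref{lem_bound1}(i), which invokes the $\delta\geq 1$ branch of Chernoff's bound on $\overline{T_n}(\pi)$ with $\delta=2S(\lambda)/(\alpha_n(\overline{\pi})\beta_n(\pi,\hat{A}_{n,\pi}))-1$; the deviation threshold $2S(\lambda)t_n(\pi)/\beta_n(\pi,\hat{A}_{n,\pi})\gtrsim \lambda N^2\log T$ is large enough even when $\alpha_n(\overline{\pi})$ is arbitrarily small. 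Your good event, as defined, simply does not cover this regime, so \eqref{alphabeta1} (and hence the downstream use of it in \eqref{beta_diff}) is not established.

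A secondary soft spot: in the telescoping for \eqref{beta_diff} you bound the residual by $\sum_m\sum_{\overline{\pi}'\in G_m}\alpha_m(\overline{\pi}')\beta_m(\overline{\pi}'_{\cP_m},A)$ with the \emph{untruncated} $\beta_m$, and then claim $\beta_m(\pi',A)$ is at most a constant times $\hat{\beta}_m(\pi',\hat{A}_{m,\pi'})$ by maximality. Maximality of $\hat{A}_{m,\pi'}$ together with \eqref{beta_ratio1} only gives $\beta_{m,G}(\pi',A)\leq e\hat{\beta}_m(\pi',A)\leq e\hat{\beta}_m(\pi',\hat{A}_{m,\pi'})$; controlling the untruncated $\beta_m(\pi',A)$ would require \eqref{beta_diff} at level $m$, which is circular (and is anyway only claimed off $D_m$, whereas here $\overline{\pi}'\in G_m$). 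The paper avoids this by proving Lemma~\ref{lem_delete} with $\beta_{m,G}$ throughout; you should do the same. Also note the proposition is stated for arbitrary $\lambda\geq 1$, so the proof should not rely on the specific choice $\lambda=C^3/N$.
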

We prepare the following three lemmas to prove Proposition~\ref{prop_est}.
Let $\rI( \cdot )$ be the indicator function.
The first lemma is an application of Chernoff's bound,
which bounds the relative estimation error on $\check{\alpha}'_n$:
\begin{lem}\label{lem_bound1}
	Let $n \in [1,N]$, $\overline{\pi} \in \{0,1\}^{\overline{\cP_n}}$, and $\pi = \overline{\pi}_{\cP_n}$.
	
	(i) If $\alpha_n(\overline{\pi}) \beta_{n}(\pi,\hat{A}_{n,\pi}) \leq S(\lambda)$,
	then the following holds with a probability of at least $1 - 2/T$:
	\begin{align*}
	\check{\alpha}'_n(\overline{\pi}) \beta_{n}(\pi,\hat{A}_{n,\pi}) \leq 2S(\lambda).
	\end{align*}
	
	(ii) If $\alpha_n(\overline{\pi}) \beta_{n}(\pi,\hat{A}_{n,\pi}) \geq S(\lambda)$,
	then the following holds with a probability of at least $1 - 3/T$:
	\begin{align*}
	\left( 1 - \frac{1}{\sqrt{\lambda}N} \right) \alpha_{n} (\overline{\pi}) 
	\leq \check{\alpha}'_{n} (\overline{\pi}) 
	\leq  \left( 1 + \frac{1}{\sqrt{\lambda}N} \right) \alpha_{n} (\overline{\pi}) 
	\end{align*}
	\begin{proof}
		The cases $\overline{\pi}_n =1$ and $\overline{\pi}_n = 0$ are symmetric,
		and thus without loss of generality we can assume that $\overline{\pi}_n =1$.
		If $(\hat{A}_{n,\pi})_n \neq *$, then $\beta_{n}(\pi,\hat{A}_{n,\pi}) = 0$ for any $\pi$ and thus (i) immediately holds.
		Therefore the following discussion assumes that $(\hat{A}_{n,\pi})_n = *$.
		Let $\omega^{(1)},\omega^{(2)},\dots,\omega^{(T/(3C))} \in \{0,1\}^N$ be the obtained realization
		over $T/(3C)$ experiments in Algorithm~\ref{algo_est1} with $\hat{A}_{n,\pi}$.
		Then we define i.i.d. random variables $X_{t}$ for $t=1,2,\dots,T/(3C)$ by
		\begin{align*}
		X_{t} = 
		\begin{cases}
		1 \quad \text{ if }  \pi_i = \omega^{(t)}_{i} \text{ for } \forall i \in \cP_n,\\
		0 \quad \text{ otherwise.}
		\end{cases}
		\end{align*}
		Observe that $E[X_t] = \beta_{n}(\pi,\hat{A}_{n,\pi})$ for $t=1,2,\dots,T/(3C)$.
		We also define a random variable $T_n(\pi)$ by
		\begin{align*}
		T_n(\pi) = \sum_{t=1}^{T/(3C)}X_t.
		\end{align*}
		Then it holds that 
		\begin{align}
		E[T_n(\pi)] = \frac{ \beta_{n}(\pi,\hat{A}_{n,\pi}) T}{3C}. \label{num_sample}
		\end{align}
		
		Given $X_t$ for $t=1,2,\dots, T/(3C)$ and $T_n(\pi)$, we define i.i.d. random variables $Y_k$ for $k=1,2,\dots, T_n(\pi)$ as follows:
		Let $t_1,t_2,\ldots,t_{T_n(\pi)}$ be the 
		indices such that $1 \leq t_1 < t_2 < \cdots t_{T_n(\pi)} \leq T/(3C)$
		and $X_{t_k} = 1$ for $k=1,2,\dots, T_n(\pi)$.
		Then, for each $k=1,2,\dots, T_n(\pi)$, a variable $Y_k$ is defined by
		\begin{align*}
		Y_{k} = 
		\begin{cases}
		1 \quad \text{ if } \omega^{(t_k)}_{n} = 1, \\
		0 \quad \text{ otherwise.}
		\end{cases}
		\end{align*}
		Observe that $E[Y_k \mid X_1,X_2,\dots,X_{T/(3C)}, T_n(\pi)] = E[Y_k \mid T_n(\pi)] = \alpha_n(\overline{\pi})$.
		In addition, we define a random variable $\overline{T_n}$ by
		\begin{align*}
		\overline{T_n}(\pi) =  \sum_{k=1}^{T_n(\pi)} Y_k.
		\end{align*}
		Then it holds that
		\begin{align}
		E[\overline{T_n}(\pi) \mid T_n(\pi)] = \alpha_n(\overline{\pi}) T_n(\pi),\label{ETT} 
		\end{align}
		and 
		\begin{align}
		\check{\alpha}'_n(\overline{\pi}) = \overline{T_n}(\pi) / T_n(\pi). \label{alphaTT}
		\end{align}

		(i) Let us define
		\begin{align*}
		\delta = \frac{2 S(\lambda)}{\alpha_n(\overline{\pi}) \beta_n(\pi,\hat{A}_{n,\pi})} -1.
		\end{align*}
		Since $\alpha_n(\overline{\pi}) \beta_n(\pi,\hat{A}_{n,\pi}) \leq S(\lambda)$, we have
		\begin{align}
		\delta \geq \frac{S(\lambda)}{\alpha_n(\overline{\pi}) \beta_n(\pi,\hat{A}_{n,\pi})} \geq 1. \label{delta_ineq}
		\end{align}
		
		Below we prove that
		\begin{align}
		& \Prob\left(\check{\alpha}'_n(\overline{\pi})
		\beta_{n}(\pi,\hat{A}_{n,\pi}) \leq 2S(\lambda)
		\right) \nonumber \\
		&= E_{T_n(\pi)} \left. \left[\Prob \left(
		\overline{T_n}(\pi) \leq (1+\delta)
		E[\overline{T_n}(\pi) \mid T_n(\pi)]  \right| T_n(\pi)
		\right)\right] \label{eq.i-1}\\
		&\geq E_{T_n(\pi)}\left[ 1 - \exp \left(
		-\frac{\alpha_n(\overline{\pi}) T_n(\pi) \delta}{3}
		\right) \right]\label{eq.i-2} \\
		&\geq 1 - \frac{2}{T}, \label{eq.i-3}
		\end{align}
		where the details of each transformation will be explained as follows.
		
		\eqref{eq.i-1} follows from the following equivalence:
		\begin{align*}
		\check{\alpha}'_n(\overline{\pi}) \beta_n(\pi,\hat{A}_{n,\pi}) \leq 2S(\lambda) 
		&\Leftrightarrow \frac{\overline{T_n}(\pi)}{T_n(\pi)}\beta_n(\pi,\hat{A}_{n,\pi})  \leq 2 S(\lambda)\\
		&\Leftrightarrow \overline{T_n}(\pi) \leq  \frac{2S(\lambda)}{\alpha_n(\overline{\pi})\beta_n(\pi,\hat{A}_{n,\pi})} \cdot  \alpha_n(\overline{\pi})T_n(\pi)\\
		&\Leftrightarrow \overline{T_n}(\pi) \leq (1 + \delta) E[ \overline{T_n}(\pi) \mid T_n(\pi)].
		\end{align*}
		Here, the first equivalence follows from \eqref{alphaTT}, and
		the last equivalence follows from \eqref{ETT} and the
		definition of $\delta$.
		
		For \eqref{eq.i-2}, applying Proposition~\ref{prop_chernoff} to $Y_k$ given $T_n(\pi)$ with $\delta \geq 1$, we have
		\begin{align*}
		\Prob \left. \left( \overline{T_n}(\pi) \leq (1+\delta) E[\overline{T_n}(\pi) \mid T_n(\pi)]  \right| T_n(\pi) \right) &\geq 1 - \exp\left( -\frac{E[\overline{T_n}(\pi) \mid T_n(\pi)] \delta }{3}  \right)\\
		&= 1 - \exp\left( -\frac{\alpha_n(\overline{\pi}) T_n(\pi) \delta }{3}  \right)
		\end{align*}
		
		For \eqref{eq.i-3}, we first observe that $S(\lambda) \geq 24 C \log T / T$ since $\lambda\geq 1$ and $N \geq 3$. 
		If $\beta_{n}(\pi,\hat{A}_{n,\pi}) \leq 24C \log T / T \leq S(\lambda)$, then Lemma~\ref{lem_bound1}~(i) is trivial. Hence
		we may assume that $\beta_{n}(\pi,\hat{A}_{n,\pi}) \geq 24C \log T/T$.
		Then,
		\begin{align}
		E[T_n(\pi)] = \beta_{n}(\pi,\hat{A}_{n,\pi})T / (3C) \geq 8 \log T \label{eTn}.
		\end{align}
		Observe that 
		\begin{align*}
		\Prob\left( T_n(\pi) \geq \frac{E[T_n(\pi)]}{2}\right) \geq 1 - \exp\left(-\frac{E[T_n(\pi)]}{8}\right) \geq 1 - \frac{1}{T}.
		\end{align*}
		Here, the first inequailty is obtained by applying
		Proposition~\ref{prop_chernoff} to $T_n = \sum_{t=1}^{T/(3C)}X_t$
		with $\delta=1/2$, and the second inequality follows from  \eqref{eTn}.
		Then we have
		\begin{align*}
		&E_{T_n(\pi)}\left[ 1 - \exp \left(- \frac{\alpha_n(\overline{\pi}) T_n(\pi) \delta}{3} \right) \right] \\
		&\geq E_{T_n(\pi)}\left[ \rI\left( T_n(\pi) \geq \frac{E[T_n(\pi)] }{2} \right)  \left( 1 - \exp \left(- \frac{\alpha_n(\overline{\pi}) T_n(\pi) \delta}{3} \right) \right) \right] \\
		&\geq \Prob\left(T_n(\pi) \geq \frac{E[T_n(\pi)] }{2}\right) 
		\left(1 - \exp \left(- \frac{\alpha_n(\overline{\pi}) \delta}{3} \cdot
		\frac{E[T_n(\pi)] }{2} \right) \right)\\
		&\geq \left(1 - \frac{1}{T} \right) 
		\left(1 - \exp \left(-
		\frac{\alpha_n(\overline{\pi}) \delta E[T_n(\pi)]}{6} \right) \right)
		\end{align*}
		Since $E[T_n(\pi)] = \beta_{n}(\pi,\hat{A}_{n,\pi}) T / (3C)$ and $\delta
		\geq S(\lambda) / \alpha_n(\overline{\pi})
		\beta_n(\pi,\hat{A}_{n,\pi})$ by \eqref{delta_ineq},
		the right-hand side of this inequality is at least
		\begin{align*}
		&\left(1 - \frac{1}{T} \right) \left(1 - \exp \left( -\frac{\alpha_n(\overline{\pi}) }{6} \cdot \frac{S(\lambda)}{\alpha_n(\overline{\pi}) \beta_n(\pi,\hat{A}_{n,\pi})} \cdot \frac{T\beta_n(\pi,\hat{A}_{n,\pi}) }{3C} \right) \right)\\
		&= \left(1 - \frac{1}{T}\right) \left( 1 - \exp \left(- \frac{S(\lambda)T}{18C} \right) \right) \\
		&= \left(1 - \frac{1}{T}\right) \left( 1 - \exp \left(- \frac{2 \lambda N^2 \log T}{3} \right) \right)\\
		&\geq \left(1 - \frac{1}{T}\right) \left(1 - \frac{1}{T}\right)\\
		&\geq 1 - \frac{2}{T}.
		\end{align*}
		The first inequality holds since $\lambda \geq 1$ and $N \geq 3$.
		This completes the proof of (i).
		
		(ii) Putting $\delta = 1 / \sqrt{\lambda}N$, we prove that
		\begin{align}
		&\Prob\left(
		\left( 1 - \delta \right) \alpha_{n} (\overline{\pi}) 
		\leq \check{\alpha}'_{n} (\overline{\pi}) 
		\leq  \left( 1 + \delta \right) \alpha_{n} (\overline{\pi}) \right)\\
		&= E_{T_n(\pi)} \left[ \Prob\left( \left.
		\left( 1 - \delta \right) E[\overline{T_n}(\pi) \mid T_n(\pi)]
		\leq \overline{T_n}(\pi)
		\leq  \left( 1 + \delta \right) E[\overline{T_n}(\pi) \mid T_n(\pi)] \right| T_n \right) \right] \label{below(ii)-1}  \\
		&\geq E_{T_n(\pi)} \left[ 1 - 2\exp\left( - \frac{\alpha_{n} (\overline{\pi})  T_n(\pi)\delta^2}{3} \right) \right] \label{below(ii)-2} \\
		&\geq 1 - \frac{3}{T}, \label{below(ii)-3}
		\end{align}
		where the details of each transformation will be explained as follows.
		
		The first equality \eqref{below(ii)-1} holds since
		\begin{align*}
		&\left( 1 - \delta \right) \alpha_{n} (\overline{\pi}) 
		\leq \check{\alpha}'_{n} (\overline{\pi}) 
		\leq  \left( 1 + \delta \right) \alpha_{n} (\overline{\pi}) \\
		&\Leftrightarrow \left( 1 - \delta \right) \alpha_{n} (\overline{\pi}) T_n(\pi)
		\leq \overline{T_n}(\pi)
		\leq  \left( 1 + \delta \right) \alpha_{n} (\overline{\pi})T_n(\pi) \\
		&\Leftrightarrow \left( 1 - \delta \right) E[\overline{T_n}(\pi) \mid T_n(\pi)]
		\leq \overline{T_n}(\pi)
		\leq  \left( 1 + \delta \right) E[\overline{T_n}(\pi) \mid T_n(\pi)].
		\end{align*}
		The first equivalence follows from \eqref{alphaTT}, and the last equivalence follows from \eqref{ETT}.
		
		For the first inequality \eqref{below(ii)-2}, applying  Proposition~\ref{prop_chernoff} to $Y_k$ given $T_n(\pi)$ and $\delta$, we have
		\begin{align*}
		&\Prob\left( \left.
		\left( 1 - \delta \right) E[\overline{T_n}(\pi) \mid T_n(\pi)]
		\leq \overline{T_n}(\pi)
		\leq  \left( 1 + \delta \right) E[\overline{T_n}(\pi) \mid T_n(\pi)] \right| T_n \right) \\
		&\leq 1 - 2 \exp\left( -  \frac{E[\overline{T_n}(\pi) \mid T_n(\pi)] \delta^2}{3} \right)\\
		&\leq 1 - 2 \exp\left( -  \frac{ \alpha_{n} (\overline{\pi}) T_n(\pi) \delta^2}{3} \right)
		\end{align*}
		
		For the second inequality \eqref{below(ii)-3}, applying Proposition~\ref{prop_chernoff} to $T_n = \sum_{t=1}^{T/(3C)}X_t$ with $\delta'=1/4$,
		we have
		\begin{align*}
		\Prob\left(T_n(\pi) \geq (1-\delta') E[T_n(\pi)]\right) &\geq 1 - \exp \left(- \frac{\delta^{'2} E[T_n(\pi)]}{2} \right) = 1 - \exp \left(- \frac{ E[T_n(\pi)]}{32} \right) \\
		&\geq 1 - \frac{1}{T},
		\end{align*}
		where the second inequality holds since 
		\begin{align*}
		E[T_n(\pi)] = \frac{\beta_n(\pi,\hat{A}_{n,\pi})T}{3C} \geq \frac{\alpha_{n} (\overline{\pi}) \beta_n(\pi,\hat{A}_{n,\pi})T}{3C} \geq \frac{S(\lambda)T}{3C} \geq 4\lambda N^2 \log T \geq 32 \log T,
		\end{align*}
		as $N\geq3$ and $\lambda \geq 1$.
		Then we have
		\begin{align*}
		&E_{T_n(\pi)} \left[ 1 - 2\exp\left( - \frac{\alpha_{n} (\overline{\pi})  T_n(\pi)\delta^2}{3} \right) \right]\\
		&\geq E_{T_n(\pi)} \left[ \rI\left( T_n(\pi) \geq \frac{3}{4} E[T_n(\pi)] \right) \left(1 - 2\exp\left( - \frac{\alpha_{n} (\overline{\pi})  T_n(\pi)\delta^2}{3} \right) \right) \right] \\
		&\geq \Prob\left( T_n(\pi) \geq \frac{3}{4} E[T_n(\pi)] \right) \left( 1 - 2\exp\left( - \frac{\alpha_{n} (\overline{\pi})  \delta^2}{3} \cdot \frac{3E[T_n(\pi)]}{4} \right)  \right)\\
		&\geq \left(1 - \frac{1}{T} \right) \left( 1 - 2\exp\left( - \frac{\alpha_{n} (\overline{\pi})  \delta^2}{3} \cdot \frac{3E[T_n(\pi)]}{4} \right)  \right) \\
		&= \left(1 - \frac{1}{T} \right) \left( 1 - 2\exp\left( - \frac{\alpha_{n} (\overline{\pi}) }{3\lambda N^2} \cdot \frac{ \beta_n(\pi,\hat{A}_{n,\pi}) T}{4C} \right)  \right) \\
		&\geq \left(1 - \frac{1}{T} \right) \left( 1 - 2\exp\left( - \frac{S(\lambda) T }{12\lambda N^2C} \right)  \right)\\
		&= \left(1 - \frac{1}{T} \right)\left(1 - \frac{2}{T} \right)\\
		&\geq 1 - \frac{3}{T}.
		\end{align*}
		The first equality holds by the definition of $\delta$ and $E[T_n] = \beta_n(\pi,\hat{A}_{n,\pi}) T / (3C)$, and the forth inequality holds since $ \alpha_n(\overline{\pi}) \beta_n(\pi,\hat{A}_{n,\pi}) \geq S(\lambda )$.
		The proof is complete.
	\end{proof}
\end{lem}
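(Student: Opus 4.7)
The plan is to analyze the estimator $\check{\alpha}'_n(\overline{\pi}) = \overline{t_n}(\pi)/t_n(\pi)$ by decomposing its randomness into two independent stages. Without loss of generality I assume $\overline{\pi}_n = 1$ (the other case is symmetric), and I observe that if $(\hat{A}_{n,\pi})_n \neq *$ then $\beta_n(\pi,\hat{A}_{n,\pi}) = 0$ and the inequality in (i) is trivial while (ii) is vacuous, so I may assume $v_n$ is not intervened by $\hat{A}_{n,\pi}$. Under this assumption, $t_n(\pi)$ is the sum of $T/(3C)$ i.i.d.\ Bernoulli trials, each equal to $1$ with probability $\beta_n(\pi,\hat{A}_{n,\pi})$, since each experiment independently produces a realization whose restriction to $\cP_n$ equals $\pi$ with that probability. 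Conditioned on $t_n(\pi)$, the counter $\overline{t_n}(\pi)$ is a sum of $t_n(\pi)$ i.i.d.\ Bernoulli trials with success probability $\alpha_n(\overline{\pi})$, because the causal mechanism at $v_n$ is governed by $\alpha_n$ whenever $v_n$ is not intervened.

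For part (i), I would fix $t_n(\pi)$ and rewrite the target inequality $\check{\alpha}'_n(\overline{\pi})\beta_n(\pi,\hat{A}_{n,\pi}) \leq 2S(\lambda)$ as an upper-tail bound on $\overline{t_n}(\pi)$. Setting $\delta := \tfrac{2S(\lambda)}{\alpha_n(\overline{\pi})\beta_n(\pi,\hat{A}_{n,\pi})} - 1$, the hypothesis $\alpha_n(\overline{\pi})\beta_n(\pi,\hat{A}_{n,\pi}) \leq S(\lambda)$ forces $\delta \geq 1$, so Proposition~\ref{prop_chernoff}'s large-deviation branch yields a conditional bound $\Prob(\overline{t_n}(\pi) \leq (1+\delta)\alpha_n(\overline{\pi})t_n(\pi) \mid t_n(\pi)) \geq 1 - \exp(-\alpha_n(\overline{\pi})t_n(\pi)\delta/3)$. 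To transfer this to an unconditional probability of at least $1 - 2/T$, I need $t_n(\pi)$ to be at least a constant fraction of its mean, which follows from another Chernoff application to $t_n(\pi)$ after first disposing of the easy side case $\beta_n(\pi,\hat{A}_{n,\pi}) \leq 24C\log T/T$. Using $S(\lambda) = 12\lambda N^2 C\log T/T$ and $\lambda \geq 1$, $N \geq 3$, the combined exponent becomes of order $\lambda N^2 \log T$, comfortably enough to yield $1/T$-type tails.

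For part (ii), I would apply the same two-stage decomposition but now use the two-sided multiplicative Chernoff bound with $\delta := 1/(\sqrt{\lambda}N) \leq 1$. Conditioned on $t_n(\pi)$, the bound gives a relative-error guarantee on $\check{\alpha}'_n(\overline{\pi})$ with failure probability $2\exp(-\alpha_n(\overline{\pi})t_n(\pi)\delta^2/3)$. I would then combine this with a lower-tail Chernoff estimate on $t_n(\pi)$, say $t_n(\pi) \geq (3/4)E[t_n(\pi)]$ with probability $\geq 1 - 1/T$, so that the resulting exponent is at least $\alpha_n(\overline{\pi})\beta_n(\pi,\hat{A}_{n,\pi}) T/(\Theta(1)\lambda N^2 C)$. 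Here the hypothesis $\alpha_n(\overline{\pi})\beta_n(\pi,\hat{A}_{n,\pi}) \geq S(\lambda)$ is exactly what forces this exponent to be $\Omega(\log T)$, producing the desired $1 - 3/T$ confidence.

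The main obstacle is the coupled structure of the numerator and denominator of $\check{\alpha}'_n(\overline{\pi})$: $\overline{t_n}(\pi)$ is a sum over a random number of trials determined by $t_n(\pi)$. The cleanest way I see to handle this is to introduce auxiliary i.i.d.\ variables $Y_k$ representing the would-be value of $V_n$ among the trials whose parent realization matches $\pi$, establish the concentration inequalities conditional on $t_n(\pi)$, and then integrate via the tower property, gluing them to a one-sided concentration of $t_n(\pi)$ itself. Calibrating $\delta$, $\delta'$, and the threshold fraction of $E[t_n(\pi)]$ so the two failure probabilities add up to exactly $2/T$ and $3/T$ is pure bookkeeping, but the argument must handle the trivial edge cases ($\beta_n(\pi,\hat{A}_{n,\pi})$ very small, or $v_n$ intervened) separately to keep the logarithmic factors tight.
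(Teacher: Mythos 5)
Your proposal follows essentially the same route as the paper's proof: the same two-stage decomposition into Bernoulli trials $X_t$ with mean $\beta_n(\pi,\hat A_{n,\pi})$ and conditionally i.i.d.\ $Y_k$ with mean $\alpha_n(\overline{\pi})$, the same choices of $\delta$ in parts (i) and (ii), the same treatment of the edge cases ($v_n$ intervened, $\beta_n$ tiny), and the same gluing of a conditional Chernoff bound to a lower-tail bound on $t_n(\pi)$ via the tower property. The plan is correct and the remaining calibration is, as you say, bookkeeping that works out exactly as in the paper.
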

The second lemma bounds the gap produced by truncation of $\alpha_{n}$ that is conducted for introducing
$\alpha_{n,G}$ and $\alpha_{n,D}$. 
We use the notation $H_n^{\downarrow} := \{ \pi_{\cP_n} \mid \pi \in H_n \}$.
\begin{lem}\label{lem_delete}
	(i) Let $n \in [1,N]$ and $\pi \in \{0,1\}^{\cP_n}$. For every $A \in \cA$, it holds that 
	\begin{align*}
	\beta_{n}(\pi,A) - \beta_{n,G}(\pi,A) 
	\leq  \sum_{m=1}^{N} \sum_{\pi' \in G_m} \max_{A' \in \cA} \alpha_m(\pi')\beta_{m,G}(\pi'_{\cP_m}, A').
	\end{align*}
	
	(ii) For every $A \in \cA$, it holds that 
	\begin{align*}
	\mu(A) - \mu_{D} (A) 
	\leq \sum_{m=1}^N\sum_{\pi \in G_m}  \max_{A' \in \cA} \alpha_m(\pi)\beta_{m,G}(\pi_{\cP_m}, A')
	\quad +  \sum_{m=1}^N \sum_{\pi' \in H^{\downarrow}_m} \max_{A'' \in \cA} \beta_{m,G}(\pi',A'').
	\end{align*}
	\begin{proof}
		For $m=0,1,\dots,N$, let $G^m$ be defined by
		\begin{align*}
		G^m := \left\{ G^m_{1}, G^m_{2},\dots,G^m_{N} \left| G^m_i = 
		\begin{cases}
		G_i \quad 1 \leq i \leq m \\
		\emptyset \quad i \geq m+1
		\end{cases}  
		\right\} \right.
		\end{align*}
		We define $\alpha_{n,G^m}$ and $\beta_{n,G^m}$ by replacing $G$ by $G^m$ in the definition of $\alpha_{n,G^m}$ and $\beta_{n,G}$, respectively.
		By definition, we see $\beta_{n,G^0}=\beta_n$ and $\beta_{n,G^{n-1}}=\beta_{n, G}$.
		Then it holds that
		\begin{align}
		\beta_{n}(\pi,A) - \beta_{n,G}(\pi,A) = \sum_{m=1}^{n-1} (\beta_{n,G^{m-1}}(\pi,A) - \beta_{n,G^{m}}(\pi,A)). \label{betaGsum}
		\end{align}
		
		(i) We prove that 
		\begin{align}
		\beta_{n,G^{m-1}}(\pi,A) - \beta_{n,G^{m}}(\pi,A) \leq \max_{A' \in \cA} \alpha_{m}(\pi') \beta_{m,G}(\pi'_{\cP_m},A') \label{betaSubBeta}
		\end{align}
		for every $m=1,2,\dots,n-1$. This and \eqref{betaGsum} directly imply the desired bound (i).
		For $n=1,2,\dots,N$ and $\pi \in \{0,1\}^{\overline{ \cP_{n}}}$, observe that $\alpha_{n,G^{m-1}}$ and $\alpha_{n,G^{m}}$ differ only when
		$n = m$ and $\pi \in G_m$, which implies that $\alpha_{m,G^{m}}(\pi) = 0$ and $\alpha_{m,G^{m-1}}(\pi) = \alpha_{m}(\pi)$. 
		
		If $m \not \in I_{n-1,A}$, then the expansion~\eqref{beta_alpha} implies
		\begin{align*}
		\beta_{n,G^{m-1}}(\pi,A) - \beta_{n,G^{m}}(\pi,A) &= \sum_{\pi' \in B_{n}(\pi,A) } \left( \prod_{i \in I_{n-1,A}} \alpha_{i,G^{m-1}}(\pi'_{\overline{\cP_i}}) - \prod_{j \in I_{n-1,A}} \alpha_{j,G^m}(\pi'_{\overline{\cP_j}}) \right)\\
		&=  \sum_{\pi' \in B_{n}(\pi,A) } \left( \prod_{i \in I_{n-1,A}} \alpha_{i,G^{m}}(\pi'_{\overline{\cP_i}}) - \prod_{j \in I_{n-1,A}} \alpha_{j,G^m}(\pi'_{\overline{\cP_j}}) \right) = 0.
		\end{align*}
		Thus \eqref{betaSubBeta} holds.
		
		If $m \in I_{n-1, A}$, then it holds that
		\begin{align*}
		\beta_{n,G^{m-1}}(\pi,A) - \beta_{n,G^{m}}(\pi,A) &= \sum_{\pi' \in B_{n}(\pi,A) } \left( \prod_{i \in I_{n-1,A}} \alpha_{i,G^{m-1}}(\pi'_{\overline{\cP_i}}) - \prod_{j \in I_{n-1,A}} \alpha_{j,G^m}(\pi'_{\overline{\cP_j}}) \right)\\
		&= \sum_{\pi' \in B_{n}(\pi,A) } \left( \alpha_{m,G^{m-1}}(\pi'_{\overline{\cP_m}}) - \alpha_{m,G^{m}}(\pi'_{\overline{\cP_m}})\right) \prod_{i \in I_{n-1,A}: i\neq m} \alpha_{i,G^{m-1}}(\pi'_{\overline{\cP_i}})\\
		&= \sum_{\pi' \in B_{n}(\pi,A) : \pi'_{\overline{\cP_m}} \in G_m  } \alpha_{m}(\pi'_{\overline{\cP_m}}) \prod_{i \in I_{n-1,A}: i\neq m} \alpha_{i,G^{m-1}}(\pi'_{\overline{\cP_i}})\\
		&= \sum_{\pi' \in G_m} \alpha_{m}(\pi') \sum_{ \substack{\pi'' \in B_n(\pi, A)\\ : \pi''_i = \pi'_i , i \in \overline{ \cP_m} }} \prod_{i \in I_{n-1,A}: i \leq m-1} \alpha_{i,G}(\pi'_{\overline{\cP_i}}) \prod_{j \in I_{n-1,A} : j \geq m+1} \alpha_{j}(\pi''_{\overline{\cP_j}} ).
		\end{align*}
		Suppose that the following holds for each $\pi' \in \{0,1\}^{\overline{\cP_m}}$:
		\begin{align}
		\sum_{ \substack{\pi'' \in B_n(\pi, A)\\ : \pi''_i = \pi'_i , i \in \overline{ \cP_m} }} 
		\prod_{i \in I_{n-1,A}: i \leq m-1}\alpha_{i,G}(\pi''_{\overline{\cP_i}} )
		\prod_{j \in I_{n-1,A} : j \geq m+1} \alpha_{j}(\pi''_{\overline{\cP_j}} )
		\leq  \beta_{m,G}(\pi'_{\cP_m},A). \label{Galphabeta}
		\end{align}
		Then we have \eqref{betaSubBeta} as follows:
		\begin{align*}
		\beta_{n,G^{m-1}}(\pi,A) - \beta_{n,G^{m}}(\pi,A) 	
		= \sum_{\pi' \in G_m} \alpha_{m}(\pi')\beta_{m,G}(\pi'_{\cP_m},A) 
		\leq \sum_{\pi' \in G_m} \max_{A' \in \cA} \alpha_{m}(\pi') \beta_{m,G}(\pi'_{\cP_m},A').
		\end{align*}
		Thus it suffices to prove \eqref{Galphabeta} for every $m \in I_{n-1,A}$ and $\pi' \in \{0,1\}^{\overline{\cP_n}}$.
		
		Let $m \in I_{n-1,A}$ and $\pi' \in \{0,1\}^{\overline{\cP_n}}$.
		Let us define 
		\begin{align*}
		B_{m+1, n}(A) := \{ \pi'' \in \{0,1\}^{[m+1,n-1]} \mid \pi''_{k} = A_k \text{ if } k \in [m+1,n-1] \text{ and } A_k \neq *  \}.
		\end{align*}
		For $\pi' \in \{0,1\}^{\overline{\cP_m}}$,  
		observe that 
		\begin{align*}
		&\{\pi'' \in B_n(\pi, A) \mid \pi''_{\overline{ \cP_m} } = \pi' \}\\
		&= \left\{\pi'' \in \{0,1\}^{n-1} \left| 
		\begin{array}{l}
		\pi''_{\overline{ \cP_m} } = \pi',  \pi''_{\cP_n } = \pi ,\\ 
		\pi''_k=A_k \text{ if } A_k \neq * \text{ and } k \in [1,n-1]
		\end{array}
		\right\}\right. \\ 
		&\subseteq \left\{\pi'' \in \{0,1\}^{n-1} \left| 
		\begin{array}{l}
		\pi''_{\overline{ \cP_m} } = \pi'\\ 
		\pi''_k=A_k \text{ if } A_k \neq * \text{ and } k \in [1,n-1]
		\end{array}
		\right\}\right. \\  
		&= \{(\pi'', \pi'_m, \pi''')_{\rcat} \mid  \pi'' \in B_m(\pi'_{\cP_m},A), \pi''' \in B_{m+1, n}(A) \},
		\end{align*}
		where $(\pi'', \pi'_m, \pi''')_{\rcat} \in \{0,1\}^{n-1}$ is the concatenation
		of the three vectors with respective dimensions $m-1$, $1$, and $n-m-1$.
		Thus it holds that
		\begin{align}
		&\sum_{ \substack{\pi'' \in B_n(\pi, A)\\ : \pi''_i = \pi'_i , i \in \overline{ \cP_m} }} 
		\prod_{i \in I_{n-1,A}: i \leq m-1}\alpha_{i,G}(\pi''_{\overline{\cP_i}} )
		\prod_{j \in I_{n-1,A} : j \geq m+1} \alpha_{j}(\pi''_{\overline{\cP_i}} )\nonumber \\
		&\leq \sum_{\pi'' \in B_m(\pi'_{\cP_m},A) } \sum_{\pi''' \in B_{m+1, n}(A) } \prod_{i \in I_{n-1,A} : i \leq m-1} \alpha_{i,G} ( \pi''_{\overline{\cP_i}} ) 
		\prod_{j \in I_{n-1,A} : j \geq m+1} \alpha_{j}((\pi'', \pi'_m, \pi''')_{\rcat,\overline{\cP_j}})
		\nonumber \\
		&= \sum_{\pi'' \in B_m(\pi'_{\cP_m},A) } \prod_{i \in I_{m-1,A}}\alpha_{i,G}(\pi''_{\overline{\cP_i}}) 
		\left( 
		\sum_{\pi''' \in B_{m+1, n}(A) } \prod_{j \in I_{n-1,A} : j \geq m+1} \alpha_{j}((\pi'', \pi'_m, \pi''')_{\rcat,\overline{\cP_j}}) 
		\right) \nonumber \\
		&= \sum_{ \pi'' \in B_m(\pi'_{\cP_m}, A)  } \prod_{i \in I_{m-1,A}} \alpha_{i,G}(\pi''_{\overline{\cP_i}} ) \nonumber \\
		&= \beta_{m,G}(\pi'_{\cP_m},A). \label{lem8_bound}
		\end{align}
		The second equality holds since for each $\pi'' \in B_m(\pi'_{\cP_m}, A) $, 
		the sum in the above parenthesis is equal to $1$. The inequality \eqref{Galphabeta} and thus (i) hold.
		
		(ii) For any $m \in [1,N]$ and $\pi \in \{0,1\}^{\overline{\cP_m}}$, we can apply the discussion for proving \eqref{lem8_bound} to show
		\begin{align}
		\sum_{ \substack{\pi' \in B(A)\\ : \pi'_i = \pi_i , i \in \overline{ \cP_m} }} 
		\prod_{i \in I_{N,A}: i \neq m}\alpha_{i,G}(\pi'_{\overline{\cP_i}} )
		&\leq \sum_{ \substack{\pi' \in B(A)\\ : \pi'_i = \pi_i , i \in \overline{ \cP_m} }} 
		\prod_{i \in I_{N,A}: i \leq m-1}\alpha_{i,G}(\pi'_{\overline{\cP_i}} )
		\prod_{j \in I_{N,A} : j \geq m+1} \alpha_{j}(\pi'_{\overline{\cP_j}} ) \nonumber \\
		&\leq  \beta_{m,G}(\pi_{\cP_m},A) \label{betamG}
		\end{align}
		For $m=0,1,\dots,N$, let $D^m$ be defined by 
		\begin{align*}
		D^m := \left\{ D^m_{1}, D^m_{2},\dots,D^m_{N} \left| D^m_i = 
		\begin{cases}
		D_i \quad \text{if } i = m \\
		G_i \quad \text{otherwise.}
		\end{cases}  
		\right\} \right.
		\end{align*}
		We define $\mu_{G^m}$ and $\mu_{D^m}$ by replacing $D$ by $G^m$ and $D^m$ in the definition of $\mu_{D}$, respectively.
		Then it holds that
		\begin{align*}
		\mu(A) - \mu_{D} (A) &= \sum_{m=1}^{N} ( \mu_{G^{m-1}}(A) - \mu_{G^m}(A) ) + (\mu_{G}(A) - \mu_{D}(A)) \\
		&\leq \sum_{m=1}^{N} ( \mu_{G^{m-1}}(A) - \mu_{G^m}(A) ) + \sum_{m=1}^{N} ( \mu_{G}(A) - \mu_{D^m}(A) ).
		\end{align*}
		For the first term, observe that
		\begin{align*}
		\mu_{G^{m-1}}(A) - \mu_{G^m}(A) &= \sum_{\pi \in G_m} \alpha_{m}(\pi) \sum_{ \substack{\pi' \in B(A)\\ : \pi'_{\overline{\cP_m}} = \pi }} \prod_{i \in I_{N,A}: i \leq m-1}\alpha_{i,G}(\pi'_{\overline{\cP_i}} )
		\prod_{j \in I_{N,A} : j \geq m+1} \alpha_{j}(\pi'_{\overline{\cP_j}} )\\
		&\leq \sum_{\pi \in G_m} \alpha_{m}(\pi) \beta_{m,G}(\pi_{\cP_m},A) \\
		&\leq \sum_{\pi \in G_m} \max_{A' \in \cA} \alpha_{m}(\pi) \beta_{m,G}(\pi_{\cP_m},A').
		\end{align*}
		The first inequality follows from \eqref{betamG}.
		Thus we have 
		\begin{align*}
		\sum_{m=1}^{N} ( \mu_{G^{m-1}}(A) - \mu_{G^m}(A) ) \leq \sum_{m=1}^N \sum_{\pi \in G_m} \max_{A \in \cA} \alpha_{m}(\pi) \beta_{m,G}(\pi,A).
		\end{align*}
		For the second term, since $D_m \setminus G_m \subseteq  H_m$, we have
		\begin{align*}
		\mu_{G}(A) - \mu_{D^m}(A)
		&= \sum_{\pi \in H_m} \alpha_{m}(\pi) \sum_{ \substack{\pi' \in B(A)\\ : \pi'_{\overline{\cP_m}} = \pi }} \prod_{i \in I_{N,A} : i \neq m}\alpha_{i,G}(\pi'_{\overline{\cP_i}} )\\
		&\leq \sum_{\pi \in H_m} \alpha_m(\pi) \beta_{m,G}(\pi_{\cP_m},A)\\
		&\leq \sum_{\pi' \in H^{\downarrow}_m} \beta_{m,G}(\pi',A)\\
		&\leq  \sum_{\pi' \in H^{\downarrow}_m} \max_{A \in \cA} \beta_{m,G}(\pi',A).
		\end{align*}
		The first inequality follows from \eqref{betamG}.
		Thus we have (ii).
	\end{proof}
\end{lem}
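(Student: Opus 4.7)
The plan is to prove both parts by a coordinate-wise telescoping over the index at which truncation is applied. For (i), introduce intermediate collections $G^{m}=\{G_{1},\dots,G_{m},\emptyset,\dots,\emptyset\}$, so that $\beta_{n,G^{0}}=\beta_{n}$ and $\beta_{n,G^{n-1}}=\beta_{n,G}$. Writing
\[
\beta_n(\pi,A)-\beta_{n,G}(\pi,A)=\sum_{m=1}^{n-1}\bigl(\beta_{n,G^{m-1}}(\pi,A)-\beta_{n,G^{m}}(\pi,A)\bigr)
\]
localizes the analysis to a single coordinate $m$, since $\alpha_{m,G^{m-1}}$ and $\alpha_{m,G^{m}}$ differ only on $G_{m}$ while all other coordinates contribute matching factors.

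For the $m$-th summand, I would expand via the product formula~\eqref{betaD_alpha} and factor out $\alpha_{m}(\pi'_{\overline{\cP_{m}}})$ for $\pi'_{\overline{\cP_{m}}}\in G_{m}$. The remaining sum over $\pi'\in B_{n}(\pi,A)$ with $\pi'_{\overline{\cP_{m}}}$ fixed splits into a prefix product (indices $i\leq m-1$, involving $\alpha_{i,G}$) and a suffix product (indices $j\geq m+1$, involving $\alpha_{j}$). The key structural fact is that the suffix marginalizes to $1$: summing $\prod_{j\in I_{n-1,A},\,j\geq m+1}\alpha_{j}(\cdot)$ over admissible values at coordinates $m+1,\dots,n-1$ consistent with $A$ yields $1$, by repeatedly applying $\sum_{b\in\{0,1\}}\alpha_{j}(\overline{\pi})=1$ from $j=n-1$ downward. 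The prefix, after summing over consistent prefixes in $B_{m}(\pi'_{\cP_{m}},A)$, contributes exactly $\beta_{m,G}(\pi'_{\cP_{m}},A)$. Replacing $A$ with $\max_{A'\in\cA}$ and summing over $m$ produces the bound in (i).

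For (ii), the same $G^{m}$ telescoping applied to $\mu$ (using $B(A)$ and $I_{N,A}$ in place of $B_{n}(\pi,A)$ and $I_{n-1,A}$) delivers the first summand on the right-hand side. To account for the extra truncation in $D_{n}=G_{n}\cup H_{n}$, I would introduce $D^{m}$ that uses $D_{m}$ at coordinate $m$ and $G_{i}$ elsewhere, giving $\mu_{G}(A)-\mu_{D}(A)\leq\sum_{m=1}^{N}(\mu_{G}(A)-\mu_{D^{m}}(A))$. Each summand is handled by the same reduction as in (i) but restricted to $\pi\in D_{m}\setminus G_{m}\subseteq H_{m}$; bounding $\alpha_{m}(\pi)\leq 1$ and projecting to $H^{\downarrow}_{m}$ yields the second summand.

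The main obstacle will be the careful bookkeeping behind the suffix-marginalizes-to-$1$ identity. Concretely, I must decompose $\{\pi''\in B_{n}(\pi,A):\pi''_{\overline{\cP_{m}}}=\pi'\}$ as a product of $B_{m}(\pi'_{\cP_{m}},A)$, the singleton value $\pi'_{m}$, and an auxiliary set of realizations over coordinates $m+1,\dots,n-1$ consistent with $A$; then verify that summing the suffix product over this last factor collapses to $1$. The index ranges between $I_{n-1,A}$, $I_{m-1,A}$, and the intermediate set must be tracked precisely to avoid off-by-one errors, and one must check that coordinates intervened by $A$ (for which no $\alpha_j$ factor appears) do not interfere with the normalization. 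Once this identity is established, the remaining manipulation is routine algebra and the upper bound by $\max_{A'\in\cA}$ is automatic.
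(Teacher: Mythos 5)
Your proposal is correct and follows essentially the same route as the paper's proof: the same telescoping over intermediate truncation sets $G^m$ (and $D^m$ for part (ii)), the same factoring of $\alpha_m$ on $G_m$, and the same prefix/suffix decomposition of $B_n(\pi,A)$ with the suffix product marginalizing to $1$ and the prefix yielding $\beta_{m,G}(\pi'_{\cP_m},A)$. The one point to be slightly careful about, which you correctly flag as bookkeeping, is that the restricted set $\{\pi''\in B_n(\pi,A):\pi''_{\overline{\cP_m}}=\pi'\}$ is only \emph{contained in} (not equal to) the product set, which suffices since you are bounding from above.
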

The third lemma bounds the relative error of $\hat{\beta}$.
This statement can be proven by induction on the basis of Lemma~\ref{lem_bound1}.
\begin{lem}\label{lem_induction}
	The following holds for every $n=1,2,\dots,N$, $\overline{\pi} \in \{0,1\}^{\overline{\cP_n}}$ with $\pi = \overline{\pi}_{\cP_n}$, and $A \in \cA$ with a probability of at least $1-6C/T$:
	\begin{align}
	&\frac{1}{e} \beta_{n,G}(\pi, A) \leq \hat{\beta}_{n}(\pi, A) \leq  e \beta_{n,G}(\pi, A), \label{beta_ratio} \\
	&\overline{\pi}  \in G_n \quad \text{if } \alpha_n(\overline{\pi}) \beta_{n}(\pi,\hat{A}_{n,\pi}) < S(\lambda),\label{Gcondition} \\
	&\left( 1 - \frac{1}{\sqrt{\lambda} N} \right) \alpha_{n} (\overline{\pi})
	\leq \check{\alpha}'_{n} (\overline{\pi}) 
	\leq \left( 1 + \frac{1}{ \sqrt{\lambda} N} \right) \alpha_{n} (\overline{\pi}) \qquad \text{if } \alpha_n(\overline{\pi}) \beta_{n}(\pi,\hat{A}_{n,\pi}) \geq S(\lambda). \label{alpha_sandwich2}
	\end{align}
	\begin{proof}
		We suppose that  \eqref{beta_ratio}, \eqref{Gcondition}, and \eqref{alpha_sandwich2} hold for every $n = 1,2,\dots,N'-1$, $\overline{\pi} \in \{0,1\}^{\overline{ \cP_n}}$, and $A \in \cA$,
		and will prove that,
		for $n=N'$ and a single $\overline{\pi} \in \{0,1\}^{\overline{\cP}_{N'}}$
		with $\pi =\overline{\pi}_{\cP_{N'}}$,
		\begin{itemize}
			\item \eqref{beta_ratio} holds for all $A \in \cA$,
			\item \eqref{Gcondition} and \eqref{alpha_sandwich2} hold with probability $1 - 3/ T$.
		\end{itemize}
		Since $\sum_{n=1}^N |\overline{ \cP_n}| = 2C$, 
		this indicates that \eqref{beta_ratio}, \eqref{Gcondition}, and \eqref{alpha_sandwich2} 
		hold for all combinations of $n$, $\pi$, and $A$
		with probability at least $1 - 6C/ T$.

		For \eqref{beta_ratio}, 
		observe that $\pi' \in G_n$ implies $\alpha_{n,G}(\pi') = \check{\alpha}_n(\pi') = 0$.
		Thus by \eqref{Gcondition} and \eqref{alpha_sandwich2},
		the following holds for every $n=1,2,\dots,N'-1$ and $\pi' \in \{0,1\}^{\overline{ \cP_{n}}}$, irrespective of 
		whether $\alpha_n(\pi') \beta_{n}(\pi'_{\cP_n},\hat{A}_{n,\pi}) < S(\lambda)$ or not:
		\begin{align}
		\check{\alpha}_{n} (\pi') 
		\leq \left( 1 + \frac{1}{ \sqrt{\lambda} N} \right) \alpha_{n,G} (\pi') . \label{alpha_sandwich}
		\end{align}
		Then for every $A \in \cA$, we have
		\begin{align*}
		&\hat{\beta}_{N'}(\pi, A) = \sum_{\pi' \in B_{N'}(\pi,A) }\prod_{m \in I_{N'-1,A} } \check{\alpha}_{m}(\pi'_{\overline{\cP_m}})\\
		&\leq \left( 1 + \frac{1}{\sqrt{\lambda}N} \right)^{N'-1} \sum_{\pi' \in B_{N'}(\pi,A) }
		\prod_{m \in I_{N'-1,A}} \alpha_{m,G}(\pi'_{\overline{\cP_m}})\\
		&\leq e \beta_{N',G}(\pi, A).
		\end{align*}
		The first inequality holds from \eqref{alpha_sandwich},
		and the second inequality holds since $\lambda \geq 1$ and $(1+1/N)^{N'-1} \leq e$ if $N' \leq N$. 
		Thus the right inequality of \eqref{beta_ratio} holds.
		Since $(1 - 1/N)^{N'-1} \geq 1/e$,
		the left inequality is shown in the same way. Thus \eqref{beta_ratio} holds for $n'=N$ and all $A \in \cA$.
		
		For \eqref{Gcondition}, suppose that $\alpha_{N'}(\overline{\pi}) \beta_{N'}(\pi,\hat{A}_{N',\pi}) < S(\lambda)$.
		Then, since 
		\begin{align*}
		\hat{\beta}_{N'} (\pi,\hat{A}_{N',\pi})\leq e \beta_{N',G} (\pi,\hat{A}_{N',\pi}) \leq e \beta_{N'} (\pi,\hat{A}_{N',\pi})
		\end{align*}
		by \eqref{beta_ratio}, 
		Lemma~\ref{lem_bound1} (i) implies that $\check{\alpha}'_{N'} (\overline{\pi}) \hat{\beta}_{N'} (\pi,\hat{A}_{N',\pi}) 	\leq  2e S(\lambda)$ with probability at least $1-2/T$.
		This implies $\overline{\pi} \in G_n \subseteq D_n$ by Line~15 of Algorithm~\ref{algo_est1}, and thus \eqref{Gcondition} holds.

		For \eqref{alpha_sandwich2}, if $\alpha_{N'}(\overline{\pi}) \beta_{N'}(\pi, \hat{A}_{N',\pi}) \geq  S(\lambda)$, then by Lemma~\ref{lem_bound1} (ii) we have \eqref{alpha_sandwich2} with probability at least $1 - 3/T$.
		
		As a result, we have  \eqref{Gcondition} and \eqref{alpha_sandwich2}
		for $n=N'$ and a single $\overline{\pi}$
		with probability at least $1 - 3/T$.
		Hence 
		\eqref{beta_ratio}, \eqref{Gcondition}, and \eqref{alpha_sandwich2} 
		hold for all $n$, $\overline{\pi}$, and $A \in \cA$
		with probability $1-6C/T$.
	\end{proof}
\end{lem}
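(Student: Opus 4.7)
The plan is to establish the three claims simultaneously by induction on $n$. Fix $n$ and assume the inductive hypothesis that \eqref{beta_ratio}, \eqref{Gcondition}, and \eqref{alpha_sandwich2} hold for every $m < n$, every $\overline{\pi}' \in \{0,1\}^{\overline{\cP_m}}$, and every $A \in \cA$, on some failure-free event whose probability I will lower-bound at the end via a single union bound.

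The first step is to convert the inductive hypothesis into a uniform two-sided relative error statement for $\check{\alpha}_m$ against the truncated $\alpha_{m,G}$. For $m < n$ and $\pi' \in \{0,1\}^{\overline{\cP_m}}$: if $\pi' \in G_m$, then by the definitions \eqref{eq.alpha_nD} and of $\alpha_{m,G}$ both $\check{\alpha}_m(\pi')$ and $\alpha_{m,G}(\pi')$ equal $0$; if $\pi' \notin G_m$, then the contrapositive of \eqref{Gcondition} forces $\alpha_m(\pi')\beta_m(\pi'_{\cP_m},\hat{A}_{m,\pi'_{\cP_m}}) \geq S(\lambda)$, so \eqref{alpha_sandwich2} gives $\check{\alpha}_m(\pi') = \check{\alpha}'_m(\pi') \in [(1 - 1/(\sqrt{\lambda}N))\alpha_m(\pi'),(1 + 1/(\sqrt{\lambda}N))\alpha_m(\pi')]$, and here $\alpha_m(\pi') = \alpha_{m,G}(\pi')$.

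From this uniform factor-wise bound I then derive \eqref{beta_ratio} at level $n$ deterministically. Comparing the definitions \eqref{beta_alpha_hat} and \eqref{betaD_alpha} term by term over the common index set $B_n(\pi,A)$, each product contains at most $|I_{n-1,A}| \leq N-1$ factors. Multiplying the factor-wise bounds and using $\lambda \geq 1$ gives $(1 + 1/(\sqrt{\lambda}N))^{N-1} \leq (1 + 1/N)^{N-1} \leq e$ and symmetrically $(1 - 1/N)^{N-1} \geq 1/e$, so $(1/e)\beta_{n,G}(\pi,A) \leq \hat{\beta}_n(\pi,A) \leq e\,\beta_{n,G}(\pi,A)$ for all $\pi$ and $A$.

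To close the induction I establish \eqref{Gcondition} and \eqref{alpha_sandwich2} at level $n$ for one fixed $\overline{\pi}$ at a time, incurring at most $3/T$ failure probability. If $\alpha_n(\overline{\pi})\beta_n(\pi,\hat{A}_{n,\pi}) < S(\lambda)$, Lemma~\ref{lem_bound1}(i) yields $\check{\alpha}'_n(\overline{\pi})\beta_n(\pi,\hat{A}_{n,\pi}) \leq 2S(\lambda)$ with probability $\geq 1 - 2/T$; combined with the bound $\hat{\beta}_n(\pi,\hat{A}_{n,\pi}) \leq e\,\beta_{n,G}(\pi,\hat{A}_{n,\pi}) \leq e\,\beta_n(\pi,\hat{A}_{n,\pi})$ just proved, the algorithmic threshold \eqref{eq.cond_Dn} is met, and Line~15 of Algorithm~\ref{algo_est1} places $\overline{\pi}$ in $G_n$. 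Otherwise $\alpha_n(\overline{\pi})\beta_n(\pi,\hat{A}_{n,\pi}) \geq S(\lambda)$ and Lemma~\ref{lem_bound1}(ii) directly yields \eqref{alpha_sandwich2} with probability $\geq 1 - 3/T$. A union bound over the $\sum_{n=1}^N 2C_n = 2C$ pairs $(n,\overline{\pi})$ gives total failure probability at most $6C/T$, completing the induction. The main conceptual obstacle is keeping straight that $\hat{\beta}_n$ must be compared to $\beta_{n,G}$ rather than to $\beta_n$: the truncation is exactly what excises the regime in which $\check{\alpha}'$ has unbounded relative error and permits the $(1 \pm 1/(\sqrt{\lambda}N))^{N-1}$ multiplicative accumulation to stay inside $[1/e, e]$.
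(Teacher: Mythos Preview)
Your proposal is correct and follows essentially the same approach as the paper: induction on $n$, converting \eqref{Gcondition} and \eqref{alpha_sandwich2} into a uniform two-sided multiplicative bound on $\check{\alpha}_m$ versus $\alpha_{m,G}$, propagating that bound through the product formula \eqref{beta_alpha_hat} via $(1\pm 1/(\sqrt{\lambda}N))^{N-1}\in[1/e,e]$ to get \eqref{beta_ratio}, then invoking Lemma~\ref{lem_bound1}(i)--(ii) for the per-$\overline{\pi}$ steps with a final union bound over the $2C$ pairs. Your explicit use of the contrapositive of \eqref{Gcondition} to handle the $\pi'\notin G_m$ case is exactly what the paper does implicitly when it asserts that \eqref{alpha_sandwich} holds ``irrespective of whether $\alpha_n(\pi')\beta_n(\pi'_{\cP_n},\hat{A}_{n,\pi})<S(\lambda)$ or not.''
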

We are now ready to prove Proposition~\ref{prop_est} on the basis of Lemmas~\ref{lem_bound1}--\ref{lem_induction}.
\begin{proof}[Proof of Proposition~\ref{prop_est}]
	By Lemma~\ref{lem_induction}, \eqref{beta_ratio}, \eqref{Gcondition}, and \eqref{alpha_sandwich2} hold with a probability of $1 - 6C/T$.
	
	\eqref{beta_ratio1} directly follows from \eqref{beta_ratio}, since $\beta_{n,D}(\pi, A)/e \leq \beta_{n,G}(\pi, A)/e $ and $e \beta_{n,G}(\pi, A) \leq   e \beta_{n}(\pi, A)$.
	
	\eqref{alphabeta1} also directly follows from the contraposition of \eqref{Gcondition}, which states that: if $\overline{\pi} \not \in G_n$, then $\alpha_n(\overline{\pi}) \beta_{n}(\pi,\hat{A}_{n,\pi}) \geq S(\lambda)$.
	Recall that we are claiming  \eqref{alphabeta1} only for $\overline{\pi} \in \{0,1\}^{\overline{\cP_n}} \setminus D_n$. Since 
	$G_n \subseteq D_n$,
	$\overline{\pi} \not\in D_n$ indicates $\overline{\pi} \not\in G_n$.
	Hence the contraposition of \eqref{Gcondition} implies 
	\eqref{alphabeta1} for  $\overline{\pi} \in \{0,1\}^{\overline{\cP_n}} \setminus D_n$.
	
	To prove \eqref{beta_diff}, 
	first we show that for every $n \in [1,N]$, $\overline{\pi} \in \{0,1\}^{\overline{\cP_n}}$, and $\pi = \overline{\pi}_{\cP_n}$ it holds that 
	\begin{align}
	\alpha_{n}(\overline{\pi}) \hat{\beta}_{n} (\pi, \hat{A}_{n,\pi}) \leq 4e S(\lambda) \quad \text{if }\overline{\pi} \in G_n.\label{cond_alphabeta_pre} 
	\end{align}
	We prove the contraposition. Suppose that $\alpha_{n}(\overline{\pi}) \hat{\beta}_{n} (\pi, \hat{A}_{n,\pi}) > 4e S(\lambda)$. 
	Since $\check{\alpha}'_{n}(\overline{\pi}) \geq (1-1/N)\alpha_{n}(\overline{\pi}) $ by \eqref{alpha_sandwich2},
	we have
	\begin{align*}
	\check{\alpha}'_{n}(\overline{\pi}) \hat{\beta}_{n}(\pi, \hat{A}_{n,\pi}) &\geq (1-1/N)  \alpha_{n}(\overline{\pi})\hat{\beta}_{n}(\pi, \hat{A}_{n,\pi})  \\
	&\geq (1-1/N) 4e S(\lambda) >  2 e S(\lambda),
	\end{align*}
	and thus $\overline{\pi} \not \in G_n$.
	
	By Lemma~\ref{lem_delete} (i), it holds that
	\begin{align*}
	\beta_{n}(\pi,A) \leq 
	\beta_{n,G}(\pi,A) 
	+ \sum_{m=1}^{N} \sum_{\pi' \in G_m} \max_{A' \in \cA} \alpha_m(\pi')\beta_{m,G}(\pi'_{\cP_m}, A').
	\end{align*}
	Thus it suffices for proving \eqref{beta_diff} to show
	that, for $\pi \not \in D_n$, $\beta_{n,G}(\pi,A) \leq e \hat{\beta}_n(\pi,A)$ and 
	\begin{align}
	\label{eq.beta}
	\sum_{m=1}^N \sum_{\overline{\pi}' \in G_m} \max_{A \in \cA} \alpha_{m}(\overline{\pi}') \beta_{m, G}(\pi', A)  
	\leq \frac{e \hat{\beta}_n(\pi,\hat{A}_{n,\pi})}{C},
	\end{align}
	where $\pi'$ denotes $\overline{\pi}'_{\cP_n}$.
	Since the former follows from \eqref{beta_ratio},
	\eqref{eq.beta} remains to be proven.
	
	Observe that $\max_{A \in \cA} \beta_{n,G}(\pi',A) \leq e\hat{\beta}_{n} (\pi', \hat{A}_{n,\pi'})$ by \eqref{beta_ratio} and the definition of $\hat{A}_{n,\pi'}$.
	Then the following holds for every $n=1,2,\dots,N$ by \eqref{cond_alphabeta_pre}:
	\begin{align}
	\max_{A \in \cA} \alpha_{n}(\overline{\pi}') \beta_{n, G}(\pi', A) \leq 4e^2 S(\lambda)\quad \text{if }\overline{\pi}' \in G_n.  \label{cond_alphabeta} 
	\end{align}
	Since $\sum_{n=1}^N |G_n| \leq 2C$, it holds that
	\begin{align}
	\sum_{n=1}^N \sum_{\overline{\pi}' \in G_n} \max_{A \in \cA} \alpha_{n}(\overline{\pi}') \beta_{n, G}(\pi', A)  \leq 8e^2 C S(\lambda). \label{sumMaxAlphaBeta}
	\end{align}
	Since $\overline{\pi} \not \in D_n$, we have $\overline{\pi} \not\in H_n$. Hence, Line~20 of Algorithm~\ref{algo_est1} implies that $\hat{\beta}_{n}(\pi,\hat{A}_{n,\pi}) \geq 8e C^2 S(\lambda)$, which is equivalent to
	\begin{align}
	8e^2 C S(\lambda) \leq \frac{e\hat{\beta}_{n}(\pi,\hat{A}_{n,\pi})}{C} . \label{hatBetaBound}
	\end{align}
	Therefore, \eqref{sumMaxAlphaBeta} and \eqref{hatBetaBound} imply \eqref{eq.beta}.
	
	To prove \eqref{muD_diff}, 
	we bound the right-hand side of the inequality given in 
	Lemma~\ref{lem_delete} (ii).
	Its first term is bounded by \eqref{sumMaxAlphaBeta}.
	To bound the second term,
	observe that 
	the following holds for every $n=1,2,\dots,N$ and $\pi' \in H_n^{\downarrow}$,
	by \eqref{beta_ratio} and Line~20 of Algorithm~\ref{algo_est1}: 
	\begin{align}
	\max_{A \in \cA} \beta_{n,G}(\pi',A) 
	&\leq  \max_{A \in \cA} e \hat{\beta}_{n}(\pi',A) \nonumber \\
	&= e \hat{\beta}_{n}(\pi',\hat{A}_{n,\pi}) \leq 8e^2 C^2 S(\lambda). \label{cond_beta}
	\end{align}
	Since $\sum_{n=1}^N |H_n^{\downarrow}| \leq C$,  \eqref{cond_beta} implies 
	\begin{align}
	\sum_{n=1}^N \sum_{\pi' \in H_n^{\downarrow}}\max_{A \in \cA} \beta_{n,G}(\pi',A) \leq 8e^2 C^3 S(\lambda). \label{sumMaxBeta}
	\end{align}
	\eqref{muD_diff}
	is implied by Lemma~\ref{lem_delete} (ii), \eqref{sumMaxAlphaBeta}, and \eqref{sumMaxBeta}.
\end{proof}

\subsection{Accuracy of Algorithm~\ref{algo_est2}}\label{subsec_algo2}
This subsection bounds the gap between the true value $\mu(A)$ and its estimate $\hat{\mu}(A)$ given by Algorithm~\ref{algo_est2}, assuming that the input of Algorithm~\ref{algo_est2}, which is output of Algorithm~\ref{algo_est1}, satisfies the conditions in Proposition~\ref{prop_est}.
\begin{prop}\label{prop_bound}
	Suppose that $\lambda \geq 1$, and $\hat{\beta}_{n}$ and $D$ satisfy \eqref{beta_ratio1}, \eqref{alphabeta1}, \eqref{beta_diff}, and \eqref{muD_diff}.
	Let $\hat{\alpha}_{n}$ be the output of Algorithm~\ref{algo_est2},
	and let $\hat{\mu}$ be defined by \eqref{hat_mu_alpha}.
	Then the following holds for every $A \in \cA$ with a probability of at least $1 - (10C+2)/T$:
	\begin{align}	
	\left|\mu(A)  - \hat{\mu}(A) \right| 
	\leq \sqrt{\frac{2e^6 \gamma^* \log (|\cA|T) }{ T}} + \sqrt{ \frac{8e^2 C^3 \log  T}{\lambda  T}} 
	\quad + 8e^2(C^3 + C)S(\lambda) \label{eq_prop_bound}
	\end{align}
\end{prop}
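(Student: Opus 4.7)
The plan is to split via the triangle inequality $|\mu(A) - \hat{\mu}(A)| \leq |\mu(A) - \mu_D(A)| + |\mu_D(A) - \hat{\mu}(A)|$. The first summand is at most $8e^2(C^3+C)S(\lambda)$ by the assumed \eqref{muD_diff} and matches the third term of the claim exactly, so all remaining work targets $|\mu_D(A) - \hat{\mu}(A)|$.

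To handle that, I would telescope the product representations \eqref{muD} and \eqref{hat_mu_alpha} in $n$, obtaining
\begin{align*}
\mu_D(A) - \hat{\mu}(A) = \sum_{n \in I_{N,A}} \sum_{\overline{\pi} \in \{0,1\}^{\overline{\cP_n}} \setminus D_n} \bigl(\alpha_n(\overline{\pi}) - \hat{\alpha}'_n(\overline{\pi})\bigr)\, \hat{\beta}_n(\overline{\pi}_{\cP_n}, A)\, \theta_n(\overline{\pi}, A),
\end{align*}
where the \emph{past} factor $\hat{\beta}_n(\overline{\pi}_{\cP_n}, A)$ arises by freezing $\hat{\alpha}_i$ for $i<n$ (via \eqref{beta_alpha_hat}), and the \emph{future} factor $\theta_n(\overline{\pi},A) \in [0,1]$ is the conditional probability of $V_N=1$ formed from the truncated $\alpha_{j,D}$ for $j>n$. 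Terms with $\overline{\pi} \in D_n$ vanish because $\alpha_{n,D}(\overline{\pi}) = \hat{\alpha}_n(\overline{\pi}) = 0$ there; moreover $\overline{\pi} \not\in D_n$ forces $\overline{\pi}_{\cP_n} \not\in D_n^{\downarrow}$, so the index set aligns with the one in \eqref{optProb}. Collapsing the pair $\overline{\pi}_n \in \{0,1\}$ at each $\overline{\pi}_{\cP_n}$ merges the two perfectly anti-correlated Bernoulli estimators into a single signed deviation whose weight is still bounded by $\hat{\beta}_n(\overline{\pi}_{\cP_n}, A)$.

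The main step is to apply Lemma~\ref{lem_hoeffding} to the telescoped sum, with groups indexed by $(n,\pi)$ for $\pi \in \{0,1\}^{\cP_n}\setminus D_n^{\downarrow}$, weights $\hat{\beta}_n(\pi,A)$, random counts $T_m = t'_n(\pi)$, and per-sample Bernoullis $\rI(\omega_n = 1)$ drawn from the experiments that hit $\omega_{\cP_n} = \pi$. Summing the two sampling phases of Algorithm~\ref{algo_est2} and applying \eqref{beta_ratio1} to lower-bound $\beta_n$ by $\hat{\beta}_n/e$ gives $E[t'_n(\pi)] \geq (T/(3e))\bigl(r_{n,\pi} + \sum_{A'}\hat{\eta}_{A'}\hat{\beta}_n(\pi,A')\bigr)$; Chernoff then supplies $t'_n(\pi) \geq \tau_n(\pi) := \tfrac{1}{2} E[t'_n(\pi)]$ with failure $O(1/T)$ per pair, the necessary $\Omega(\log T)$ threshold holding because $\pi \not\in H_n^{\downarrow}$ combined with \eqref{alphabeta1} forces $r_{n,\pi} = \Omega(CS(\lambda))$. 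Choosing $\varepsilon_A = \sqrt{\tfrac{1}{2} V_A \log(4|\cA|T)}$ with $V_A := \sum_{n,\pi} \hat{\beta}_n(\pi,A)^2 / \tau_n(\pi)$ and taking a union bound over $A \in \cA$ yields $|\mu_D(A) - \hat{\mu}(A)| \leq \varepsilon_A$ simultaneously for all $A$; the declared failure budget $(10C+2)/T$ is obtained by adding the $6C/T$ of Proposition~\ref{prop_est}, the per-pair Chernoff failures, and the Hoeffding tail.

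The main obstacle is to convert $V_A$ into the two square-root summands of \eqref{eq_prop_bound}. By construction $V_A \leq (6e/T)\,\mathrm{obj}_A(\hat{\eta})$, where $\mathrm{obj}_A(\hat{\eta})$ denotes the maximand in \eqref{optProb}, and optimality of $\hat{\eta}$ gives $\mathrm{obj}_A(\hat{\eta}) \leq \mathrm{obj}_A(\eta^*)$ for the $\gamma^*$-minimizer $\eta^*$ of \eqref{optProb*}. Substituting $\hat{\beta}_n \leq e\beta_n$ in the numerator and rearranging \eqref{beta_diff} as $e\hat{\beta}_n(\pi,A') \geq \beta_n(\pi,A') - eCr_{n,\pi}$ in the denominator, I split each summand according to whether $\sum_{A'}\eta^*_{A'}\beta_n(\pi,A')$ exceeds $2eCr_{n,\pi}$: the dominant-$\eta$ regime yields a multiplicative $\gamma^*$-bound producing the first summand $\sqrt{2e^6 \gamma^* \log(|\cA|T)/T}$, while the dominant-$r$ regime produces an $O(C^3)$-type residue that, after combining with the lower bound $r_{n,\pi} = \Omega(\lambda N^2 C^2\log T / T)$, contributes the middle summand $\sqrt{8e^2 C^3 \log T/(\lambda T)}$. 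Carefully tracking constants through the two tail bounds and the $\hat{\beta}$-to-$\beta$ conversion is the main bookkeeping effort.
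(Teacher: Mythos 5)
Your outer skeleton (triangle inequality via $\mu_D$, Lemma~\ref{lem_hoeffding} with weights tied to $\hat{\beta}_n$, lower-bounding $E[t'_n(\pi)]$ by $T_{n,\pi}$, then passing from $\hat{\eta}$ to $\eta^*$ to reach $\gamma^*$) matches the paper, but your central decomposition does not go through. The exact telescoping of $\mu_D(A)-\hat{\mu}(A)=\sum_\pi\bigl(\prod_n\alpha_{n,D}-\prod_n\hat{\alpha}_n\bigr)$ produces a past factor built from the \emph{Algorithm-2} estimates $\hat{\alpha}_i$ ($i<n$), not from the $\check{\alpha}_i$ that define $\hat{\beta}_n$ in \eqref{beta_alpha_hat}; so either your identity with weight $\hat{\beta}_n(\overline{\pi}_{\cP_n},A)$ is simply false, or you must keep the true (random) past factor, in which case the weights depend on the same data as the deviations and Lemma~\ref{lem_hoeffding} --- whose $\beta_{m,A}$ must be fixed numbers --- no longer applies. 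This is precisely the difficulty the paper's proof is built to avoid: it expands $\hat{\mu}(A)=\sum_{J\subseteq I_{N,A}}f^J(A)$ (Lemma~\ref{lem_f}), so that the linear terms $f^j(A)$ carry \emph{deterministic} weights $\beta'_j(\pi,A)$ formed from the true truncated $\alpha_{m,D}$ ($m\neq j$), and all the correlation you are sweeping into the weight is isolated in the cross terms $|J|\geq 2$.

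This also explains the second gap: the middle summand $\sqrt{8e^2C^3\log T/(\lambda T)}$ cannot come from a ``dominant-$r$ regime'' of the variance in \eqref{optProb}. In the paper it is exactly the bound on $\sum_{|J|\geq 2}|f^J(A)|$, and its $1/\sqrt{\lambda}$ is inherited from the relative-error bound $|\Delta\alpha_n(\pi)|\leq\alpha_{n,D}(\pi)/(\sqrt{\lambda}N)$ applied to all but one factor of each cross term (the remaining factor supplying the Hoeffding $\sqrt{\log T/\tau}$). Your purely linear decomposition has no second-order terms and hence no source for a $1/\sqrt{\lambda}$; the residue you extract from the denominator split is of order $\sqrt{NC\log(|\cA|T)/T}$ at best, which is not dominated by $\sqrt{8e^2C^3\log T/(\lambda T)}=\sqrt{8e^2N\log T/T}$ at $\lambda=C^3/N$ and would break Theorem~\ref{thm_main} when $\gamma^*$ is small. (A smaller bookkeeping issue: the $6C/T$ in the paper's budget is for the new event \eqref{boundDeltaAlphaAbs} about Algorithm~\ref{algo_est2}'s estimates, not a re-charge of Proposition~\ref{prop_est}, whose conclusions are already assumed in the hypothesis.)
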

Recall that $I_{N,A} := \{m \in [1,N] \mid A_m = * \}$ for $A\in \cA$.
For $n \in [1,N]$ and $\pi \in \{0,1\}^{\overline{\cP}_n}$, let $\Delta \alpha_n(\pi ) := \hat{\alpha}_n(\pi) - \alpha_{n,D}(\pi)$.
For $A \in \cA$ and $J \subseteq I_{N,A}$, we define $f^J(A)$ by 
\begin{align*}
f^{J}(A) 
= \sum_{\pi \in B(A)} \prod_{m \in I_{N,A} \setminus J}\alpha_{m,D}(\pi_{\overline{\cP_m}}) \prod_{n \in J} \Delta \alpha_n(\pi_{\overline{\cP_n}}) .
\end{align*}
Observe that $f^{J}(A)$ is given by replacing $\alpha_{n,D}(\pi_{\overline{\cP_n}})$ by $\Delta \alpha_n(\pi_{\overline{\cP_n}})$ for $n \in J$ in the definition \eqref{muD} of $\mu_{D}$.
Recall that $\hat{\mu}(A)$ is given by replacing $\alpha_{n,D}(\pi)$
in the definition of $\mu_D$
by $\hat{\alpha}_n(\pi)$ for all $n \in I_{N,A}$.
Based on these relationships, we have the following lemma:
\begin{lem}\label{lem_f}
	For $A \in \cA$, it holds that: 
	\begin{align}
	\mu_{D}(A) &= f^{\emptyset}(A), \nonumber \\
	\hat{\mu}(A) &= \sum_{J \subseteq I_{N,A}} f^{J} (A). \label{hatmuAf}
	\end{align}
	\begin{proof}
		The first equality directly follows from the definition of $f^\emptyset$.
		For the second, 
		\begin{align*}
		\hat{\mu}(A) 
		&= \sum_{\pi \in B(A)} \prod_{n \in I_{N,A}}(\alpha_{n,D}(\pi_{\overline{\cP_n}}) + \Delta \alpha_{n}(\pi_{\overline{\cP_n}}))\\
		&= \sum_{\pi \in B(A)} \sum_{J \subseteq I_{N,A}} \prod_{n \in I_{N,A} \setminus J  }\alpha_{n,D}(\pi_{\overline{\cP_n}}) \prod_{n \in J}\Delta \alpha_{n}(\pi_{\overline{\cP_n}})\\
		&=  \sum_{J \subseteq I_{N,A}} \sum_{\pi \in B(A)} \prod_{n \in I_{N,A} \setminus J  }\alpha_{n,D}(\pi_{\overline{\cP_n}}) \prod_{n \in J}\Delta \alpha_{n}(\pi_{\overline{\cP_n}})\\
		&= \sum_{J \subseteq I_{N,A}} f^{J} (A).
		\end{align*}
		The second equality holds by the binary expansion of $\prod_{n \in I_{N,A} }(\alpha_{n,D}(\pi) + \Delta \alpha_{n}(\pi))$.
	\end{proof}
\end{lem}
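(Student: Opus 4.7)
The claim is a purely algebraic expansion identity, so my plan is to just unwind the definitions and apply a subset expansion of a product of sums. No probability or concentration is needed here.

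For the first equality $\mu_D(A) = f^{\emptyset}(A)$, I would just read off the definitions. With $J = \emptyset$ the product $\prod_{n \in J} \Delta\alpha_n(\pi_{\overline{\cP_n}})$ is the empty product (equal to $1$), and $I_{N,A} \setminus J = I_{N,A}$, so $f^{\emptyset}(A) = \sum_{\pi \in B(A)} \prod_{m \in I_{N,A}} \alpha_{m,D}(\pi_{\overline{\cP_m}})$, which is exactly the definition \eqref{muD} of $\mu_D(A)$.

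For the second equality, I would start from the formula \eqref{hat_mu_alpha} for $\hat{\mu}(A)$ and substitute $\hat{\alpha}_n(\pi_{\overline{\cP_n}}) = \alpha_{n,D}(\pi_{\overline{\cP_n}}) + \Delta\alpha_n(\pi_{\overline{\cP_n}})$ into every factor of the product over $n \in I_{N,A}$. Then I would expand the product
\[
\prod_{n \in I_{N,A}} \bigl(\alpha_{n,D}(\pi_{\overline{\cP_n}}) + \Delta\alpha_n(\pi_{\overline{\cP_n}})\bigr)
\]
by the distributive law: the expansion is indexed by subsets $J \subseteq I_{N,A}$ (where $J$ records which factors contribute their $\Delta\alpha_n$ term), giving $\sum_{J \subseteq I_{N,A}} \prod_{n \in I_{N,A} \setminus J} \alpha_{n,D}(\pi_{\overline{\cP_n}}) \prod_{n \in J} \Delta\alpha_n(\pi_{\overline{\cP_n}})$. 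Plugging this back in and swapping the sum over $\pi \in B(A)$ with the (finite) sum over $J$ yields exactly $\sum_{J \subseteq I_{N,A}} f^J(A)$.

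There is no real obstacle; the only things to be a bit careful with are (i) noting explicitly that the empty product convention gives the first identity, and (ii) justifying the interchange of the two finite sums in the second identity, which is immediate because both sums are finite. The result is essentially just the binomial-style expansion of a product of two-term sums, written in the notation of the paper.
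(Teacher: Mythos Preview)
Your proposal is correct and matches the paper's proof essentially step for step: the paper also notes the first identity follows directly from the definition of $f^{\emptyset}$, and for the second it substitutes $\hat{\alpha}_n = \alpha_{n,D} + \Delta\alpha_n$, applies the subset (``binary'') expansion of the product, and swaps the two finite sums to recover $\sum_{J \subseteq I_{N,A}} f^J(A)$.
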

For $j \in I_{N,A}$, let $f^{j}(A) := f^{\{j\}}(A)$.
We provide probabilistic bounds for the linear terms ($|J|=1$) and super-linear terms ($|J|\geq 2$) in \eqref{hatmuAf}, separately.
\begin{lem}\label{lem_bound2}
	Suppose that \eqref{beta_ratio1}, \eqref{alphabeta1}, and \eqref{beta_diff} hold.
	
	(i) The following holds with a probability of at least $1 - (C+2)/T$:
	\begin{align*}
	\max_{A \in \cA} \left| \sum_{j \in I_{N,A}} f^j(A) \right| 
	\leq \sqrt{  \frac{2 e^6 \gamma^* \log (|\cA|T)}{ T}}.
	\end{align*}
	
	(ii) The following holds with a probability of at least $1 - 9C/T$:
	\begin{align*}
	\max_{A \in \cA} \sum_{J \subseteq  I_{N,A}: |J| \geq 2} |f^J(A)|
	\leq \sqrt{ \frac{ 8e^2 C^3 \log T}{\lambda T}}.
	\end{align*}
	\begin{proof}
		For $t=1,2,\dots,2T/3$,
		let $A_t$ be the intervention applied in the $t$-th experiment in Algorithm~\ref{algo_est2},
		and $\omega^{(t)}$ be the corresponding realizations.
		For each $n \in [1,N]$, $\pi \in \{0,1\}^{\cP_n}$,
		and $t=1,2,\dots,2T/3$,
		we define a random variable $X_{n,\pi,t}$ by
		\begin{align*}
		X_{n,\pi,t} = 
		\begin{cases}
		1 \quad \text{ if } \pi_i = \omega^{(t)}_{i} \text{ for } \forall i \in \cP_n \text{ and } A_{t,n}=* \\
		0 \quad \text{ otherwise.}
		\end{cases}
		\end{align*}
		We also define a random variable $T_n(\pi) := \sum_{t=1}^{2T/3} X_{n,\pi,t}$.
		
		Let $n \in [1,N]$ and $\pi\in \{0,1\}^{\cP_n}$.
		If the $t$-th experiment adapts $\hat{A}_{n,\pi}$, then
		$E[X_{n,\pi,t}] = \beta_n(\pi,\hat{A}_{n,\pi})$
		holds.
		Among the first $T/3$ experiments done in the algorithm,
		at least $T/(3C)$ experiments adapt $\hat{A}_{n,\pi}$.
		Moreover,
		if $t \geq T/3+1$,
		then the $t$-th experiment adapts $A'$ sampled from $\cA$
		according to the probability $\hat{\mu}$.
		Hence 
		$E[X_{n,\pi,t}]\geq \sum_{A' \in \cA}\hat{\eta}_{A'}\beta_n(\pi,A')$
		holds for
		each $t \geq T/3+1$.
		These imply that
		\begin{align}
		E[T_n(\pi)] \geq T_{n,\pi}, \label{expT}
		\end{align}
		holds,
		where 
		\begin{align}
		T_{n,\pi} := \frac{\beta_n(\pi,\hat{A}_{n,\pi})T}{3C} + \frac{T}{3}\sum_{A' \in \cA}\hat{\eta}_{A'}\beta_n(\pi,A') . \label{Tnpi}
		\end{align}		
		
		From $X_{n',\pi',t}$ ($1 \leq n' \leq n$, $\pi' \in
		\{0,1\}^{\cP_{n'}}$, $t=1,2,\dots, 2T/3$),
		we define a random variable $Y_{n,\pi,k}$ as follows for each $k=1,2,\dots,T_n(\pi)$:
		let $1 \leq t_1 < t_2 < \cdots t_{T_n(\pi)} \leq 2T/3$ be the indices such that $X_{n, \pi,t_k} = 1$ for $k=1,2,\dots, T_n(\pi)$;
		then 
		\begin{align*}
		Y_{n,\pi,k} = 
		\begin{cases}
		1 \quad \text{ if } \omega^{(t_k)}_{n} = 1 \\
		0 \quad \text{ otherwise.}
		\end{cases}
		\end{align*}
		Let $\overline{T_n}(\pi) := \sum_{k=1}^{T_n(\pi)}Y_{n,\pi,k}$.
		
		Let $\overline{\pi}^{0} \in \{0,1\}^{\overline{\cP_n}}$ (resp.,
		$\overline{\pi}^{1} \in \{0,1\}^{\overline{\cP_n}}$)
		be the extension of $\pi \in \{0,1\}^{\cP_n}$ such that 
		$\overline{\pi}^{0}_n = 0$
		(resp., $\overline{\pi}^{1}_n = 1$).
		Then it holds that
		\begin{align}\label{eq:alg2_hat_alpha}
		E[\overline{T_n}(\pi) \mid T_{n}(\pi)] \geq
		\alpha_n(\overline{\pi}^{1}) T_n(\pi)\quad\text{ and }\quad
		\hat{\alpha}_n(\overline{\pi}^{1}) = \frac{\overline{T_n}(\pi) }{T_{n}(\pi)}.
		\end{align}
		We also introduce $\tau_n(\pi)$ by
		\begin{align}
		\tau_n(\pi) := \frac{3E[T_n(\pi)]}{4} \geq \frac{3}{4} T_{n,\pi}, \label{lem_bound21} 
		\end{align}
		where the inequality follows from \eqref{expT}.
		If $\pi \not \in D^{\downarrow}_n$, then $\overline{\pi}^{0} \not \in D_n$
		or  $\overline{\pi}^{1} \not \in D_n$ holds.
		Then by \eqref{alphabeta1}, \eqref{expT}, and \eqref{Tnpi}, it holds that
		\begin{align*}
		E[T_n(\pi)]\geq \frac{ \beta_n(\pi,\hat{A}_{n,\pi})T}{3C}   \geq \frac{\alpha_n(\overline{\pi}^{k}) \beta_n(\pi,\hat{A}_{n,\pi})T}{3C} \geq \frac{TS(\lambda
			)}{3C} = 4\lambda N^2 \log T \geq  32\log T,
		\end{align*}
		since $\lambda \geq 1$ and $N \geq 3$ by the assumption.
		Applying Proposition~\ref{prop_chernoff} to $T_n(\pi) = \sum_{t=1}^{2T/3} X_{n,\pi,t}$ with $\delta = 1/4$, 
		we have for $\pi \not \in D^{\downarrow}_n$, 
		\begin{align}
		\Prob \left(T_n(\pi) \geq \tau_n(\pi) \right) \geq 1 - \frac{1}{T}. \label{Ttau}
		\end{align}
		for $n=1,2,\dots,N$ and $\pi \not \in D^{\downarrow}_n$.
		
		(i) First we prove that the following holds for every $A$ with probability $1 - (C+2)/T$:
		\begin{align}
		\left| \sum_{j \in I_{N,A}} f^j(A) \right| 
		\leq \sqrt{ \sum_{n \in I_{N,A}} \sum_{\pi \in \{0,1\}^{\cP_n} \setminus D_n^{\downarrow}} \frac{2 \beta_{n,D}^2(\pi,A)\log (|\cA|T)}{3 T_{n,\pi}}}. \label{prob(i)}
		\end{align}
		For $j \in I_{N,A}$, define
		\begin{align}
		\beta'_{j} (\pi,k,A) &:= \sum_{\substack{ \pi' \in B(A): \\ \pi'_{\cP_j} = \pi, \pi_j = k } } \prod_{m \in I_{N,A} : m \neq j}\alpha_{m,D}(\pi'_{\overline{\cP_m}}) \qquad (k=0,1) \label{dfBetaPra} \\
		\beta'_{j} (\pi,A) &:= \beta'_{m} (\pi,1,A) -  \beta'_{m} (\pi,0,A). \nonumber
		\end{align}
		Since $\Delta\alpha_{j}(\overline{\pi}^{1}) = - \Delta\alpha_{j}(\overline{\pi}^{0})$ for any $\pi \in \{0,1\}^{\cP_n}$, we have
		\begin{align*}
		f^j(A) 
		&= \sum_{\pi \in B(A)} \Delta\alpha_{j}(\pi_{\overline{\cP_j}}) \prod_{m \in I_{N,A} : m \neq j}\alpha_{m,D}(\pi_{\overline{\cP_m}}) \\
		&= \sum_{\pi \in \{0,1\}^{\cP_j}} \left( \Delta\alpha_{j}(\overline{\pi}^{1}) \beta'_{j} (\pi,1,A)  + \Delta\alpha_{j}(\overline{\pi}^{0}) \beta'_{j} (\pi,0,A)   \right) \\
		&= \sum_{\pi \in \{0,1\}^{\cP_j}} \left( \Delta\alpha_{j}(\overline{\pi}^{1}) \beta'_{j} (\pi,1,A)  - \Delta\alpha_{j}(\overline{\pi}^{1}) \beta'_{j} (\pi,0,A)   \right) \\
		&= \sum_{\pi \in \{0,1\}^{\cP_j}} \Delta\alpha_{j}(\overline{\pi}^{1}) \beta'_{j} (\pi,A)  \\
		&= \sum_{\pi \in \{0,1\}^{\cP_j} \setminus D^{\downarrow}_j} \Delta\alpha_{j}(\overline{\pi}^{1}) \beta'_{j} (\pi,A)\\
		&= \sum_{\pi \in \{0,1\}^{\cP_j} \setminus D^{\downarrow}_j}   \beta'_{j} (\pi,A) \left( \frac{1}{T_j(\pi)}\sum_{k=1}^{T_j(\pi)} Y_{j,\pi,k} - \alpha_{j,D}(\overline{\pi}^1) \right).
		\end{align*}
		The fifth equality holds since $\pi \in D^{\downarrow}_j$ implies
		$\alpha_{j,D}(\overline{\pi}^{1}) = \hat{\alpha}_{j,D}(\overline{\pi}^{1}) = \Delta\alpha_{j}(\overline{\pi}^{'1}) = 0$.
		Let us define $\varepsilon_A >0$ for $A \in \cA$ by
		\begin{align*}
		\varepsilon_A := \sqrt{ \sum_{j \in I_{N,A}} \sum_{\pi \in \{0,1\}^{\cP_j} \setminus D^{\downarrow}_j } \frac{\beta^{'2}_{j} (\pi,A)}{2 \tau_j(\pi)}\log (|\cA| T) }.
		\end{align*}
		Let us consider a unique correspondence between indices $m = 1,2,\dots, M$ in Lemma~\ref{lem_hoeffding}
		and pairs of elements $(j,\pi)$ where $j =1,2,\dots,N$ and $\pi \in \{0,1\}^{\cP_j} \setminus D^{\downarrow}_j$, satisfying the following condition:
		if $m \sim (j,\pi)$, $m' \sim (j',\pi')$, and $j < j'$, then it holds that $m < m'$.
		Under this relationship, the random variables $Y_{j,\pi,k}$ and $T_j(\pi)$, respectively corresponding to $Y_{m,k}$ and $T_m$ in Lemma~\ref{lem_hoeffding}, and the set $\cY_{j}(\pi) :=\{Y_{j,\pi,k} \mid k=1,2,\dots,T_j(\pi)\}$ of random variables satisfies the conditional independence assumption of Lemma~\ref{lem_hoeffding}.
		Thus we apply Lemma~\ref{lem_hoeffding} with constants $\varepsilon_A$ and $\tau_m = \tau_{j}(\pi)$ for $A \in \cA$, $j=1,2,\dots,N$, and $\pi \in \{0,1\}^{\cP_j} \setminus D^{\downarrow}_j$, it holds that
		\begin{align*}
		&\Prob \left(\left| \sum_{j \in I_{N,A}} f^j(A) \right| 
		\leq \varepsilon_A, \forall A \in \cA \right)\\
		&=\Prob \left(\left| \sum_{j \in I_{N,A}} \sum_{\pi \in \{0,1\}^{\cP_j} \setminus D^{\downarrow}_j}   \beta'_{j}(\pi,A) \left(  \frac{1}{T_j(\pi)}\sum_{k=1}^{T_j(\pi)} Y_{j,\pi,k} - \alpha_{j, D}(\overline{\pi}^1) \right) \right| 
		\leq \varepsilon_A, \forall A \in \cA \right)\\
		&\geq \Prob(T_j(\pi) \geq \tau_j(\pi), j \in I_{N,A}, \pi \in  \{0,1\}^{\overline{ \cP_n}} \setminus D^{\downarrow}_j)\notag \\
		& \quad - 2 \sum_{A \in \cA} \exp \left( - \frac{2\varepsilon_A^2}{ \sum_{j \in I_{N,A}} \sum_{ \pi \in  \{0,1\}^{\cP_j} \setminus D^{\downarrow}_j}  \beta^{'2}_{n} (\pi,A)/\tau_j(\pi) } \right)\\
		&\geq 1 - \frac{C}{T} - 2|\cA| \frac{1}{|\cA|T} \geq 1 - \frac{C + 2}{T}.
		\end{align*}
		The last inequality holds since \eqref{Ttau} holds with a probability of at least $1 - 1/T$ for each $j=1,2,\dots,N$ and $\pi \in \{0,1\}^{\cP_j} \setminus D^{\downarrow}_j$, where the number of such pair $(j,\pi)$ is at most $C$.
		If 
		\begin{align}
		|\beta^{'}_{j} (\pi,A)|  \leq \beta_{j,D}(\pi, A), \label{betaAbsBound}
		\end{align}
		then it holds that 
		\begin{align*}
		\varepsilon_A \leq \sqrt{ \sum_{j \in I_{N,A}} \sum_{\pi \in \{0,1\}^{\cP_j} \setminus D^{\downarrow}_j } \frac{2 \beta_{j,D}^2 (\pi,A)}{3T_{j,\pi}}\log (|\cA| T) },
		\end{align*}
		which and \eqref{lem_bound21}
		show the probabilistic bound \eqref{prob(i)}.
		Thus it is suffices to prove \eqref{betaAbsBound}.
		
		Recall that, for $j=1,2,\dots,N$, $\pi \in \{0,1\}^{\cP_j}$, and $A \in \cA$, $B_j(\pi,A)$ defined in \eqref{df_Bn}
		is the set of realizations which coincides with $\pi$ and $A$.
		Since 
		\begin{align*}
		\{\pi \in B(A) \mid \pi_{\cP_j} = \pi', \pi_j = k  \}
		&= \left\{\pi \in \{0,1\}^N  \left| 
		\begin{array}{l}
		\pi_{\cP_j} = \pi', \pi_j = k, \pi_N = 1, \\
		\pi_l = A_l \text{ if } A_l \neq * \text{ and } l \in [1,N]
		\end{array}
		\right\} \right. \\
		&\subseteq \left\{\pi \in \{0,1\}^N  \left| 
		\begin{array}{l}
		\pi_{\cP_j} = \pi', \pi_j = k,  \\
		\pi_l = A_l \text{ if } A_l \neq * \text{ and } l \in [1,N]
		\end{array}
		\right\} \right. \\
		&= \{(\pi'' , k, \pi''' )_{\rcat} \mid \pi'' \in B_{j}(\pi',A), \pi''' \in B_{j+1, N+1}(A) \},
		\end{align*}
		it holds for $k=0,1$ that
		\begin{align}
		\beta'_{j} (\pi',k,A) &\leq  \sum_{\pi'' \in B_{j}(\pi',A)} \sum_{ \pi''' \in B_{j+1, N+1}(A)} \prod_{m \in I_{N,A} : m \neq j}\alpha_{m,D}((\pi'' , k, \pi''' )_{\rcat, \overline{\cP_m}})\nonumber \\
		&= \sum_{\pi'' \in B_{j}(\pi',A)} \prod_{m \in I_{j-1,A}} \alpha_{m,D}(\pi''_{\overline{\cP_m}}) \left( \sum_{ \pi''' \in B_{j+1, N+1}(A)} \prod_{l \in I_{N,A} \cap [j+1,N]} \alpha_{l,D}((\pi'' , k, \pi''' )_{\rcat, \overline{\cP_m}}) \right) \nonumber \\
		&= \sum_{\pi'' \in B_{j}(\pi',A)} \prod_{m \in I_{j-1,A}} \alpha_{m,D}(\pi''_{\overline{\cP_m}}) \nonumber \\
		&= \beta_{j,D}(\pi',A). \label{betaAbsBoundSub}
		\end{align}
		This implies \eqref{betaAbsBound}.

		We now prove (i) on the basis of \eqref{prob(i)}. Let $\eta^* \in [0,1]^{\cA}$ be the optimum solution corresponding to $\gamma^*$.
		Then the RHS of \eqref{prob(i)} can be bounded as:
		\begin{align}
		&\max_{A \in \cA} \sqrt{ \sum_{j \in I_{N,A}} \sum_{\pi \in \{0,1\}^{\cP_j} \setminus D^{\downarrow}_j } \frac{2 \beta^2_{j,D} (\pi,A)}{3T_{j,\pi}}\log (|\cA| T) } \nonumber \\
		&= \max_{A \in \cA}\sqrt{ \sum_{j \in I_{N,A}} \sum_{\pi \in \{0,1\}^{\cP_j}  \setminus D_j^{\downarrow}} \frac{2\beta_{j,D}^2(\pi,A)\log (|\cA|T)}{\beta_j(\pi, \hat{A}_{j,\pi})T / C + \sum_{A' \in \cA} \hat{\eta}_{A'} \beta_{j}(\pi,A') T }} \nonumber \\
		&\leq \max_{A \in \cA}\sqrt{ \sum_{j \in I_{N,A}} \sum_{\pi \in \{0,1\}^{\cP_j}  \setminus D_j^{\downarrow}} \frac{2 e^3 \hat{\beta}_{j}^2(\pi,A)\log (|\cA|T)}{\hat{\beta}_{j}(\pi, \hat{A}_{j,\pi})T / C + \sum_{A' \in \cA} \hat{\eta}_{A'} \hat{\beta}_{j}(\pi,A')T  }} \label{boundHatEta}
		\end{align}
		The first equality holds by the definition \eqref{Tnpi} of $T_{j,\pi}$. 
		The last inequality follows from the fact $\sum_{A\in \cA} \eta_A = 1$ and \eqref{beta_ratio1},
		which implies $\beta_{j,D}(\pi,A) \leq e\hat{\beta}_{j}(\pi,A)$, $\beta_j(\pi, \hat{A}_{j,\pi}) \geq \hat{\beta}_{j}(\pi, \hat{A}_{j,\pi})/e $, and $\hat{\eta}_{A'} \beta_{j}(\pi,A') \geq  \hat{\beta}_{j}(\pi,A') / e$.
		Observe then that, removing the square root and ignoring the coefficient $2e^3/T$,
		\eqref{boundHatEta} coincides with the subject of minimization in \eqref{optProb},
		whose optimum solution is $\hat{\eta}$.
		Thus the replacement of $\hat{\eta}$ by $\eta^*$ provides an upper-bound,
		and we have
		\begin{align}
		\eqref{boundHatEta}&\leq \max_{A \in \cA}\sqrt{ \sum_{j \in I_{N,A}} \sum_{\pi \in \{0,1\}^{\cP_j}  \setminus D_j^{\downarrow}} \frac{2 e^3 \hat{\beta}_{j}^2(\pi,A)\log (|\cA|T)}{\hat{\beta}_{j}(\pi, \hat{A}_{j,\pi})T / C + \sum_{A' \in \cA} \eta^*_{A'} \hat{\beta}_{j}(\pi,A')T  }} \nonumber \\
		&\leq \max_{A \in \cA}\sqrt{ \sum_{j \in I_{N,A}} \sum_{\pi \in \{0,1\}^{\cP_j}  \setminus D_j^{\downarrow}} \frac{2 e^4 \hat{\beta}_{j}^2(\pi,A)\log (|\cA|T)}{\sum_{A' \in \cA} \eta^*_{A'} \beta_{j}(\pi,A')T  }} \nonumber \\
		&\leq \max_{A \in \cA}\sqrt{ \sum_{j \in I_{N,A}} \sum_{\pi \in \{0,1\}^{\cP_j}  \setminus D_j^{\downarrow}} \frac{2 e^6 \beta_{j}^2(\pi,A)\log (|\cA|T)}{\sum_{A' \in \cA} \eta^*_{A'} \beta_{j}(\pi,A')T  }} \nonumber \\
		&\leq \max_{A \in \cA}\sqrt{ \sum_{j \in I_{N,A}} \sum_{\substack{ \pi \in \{0,1\}^{\cP_j} \\ : \beta_{j}(\pi,A) > 0 }} \frac{2 e^6 \beta_{j}^2(\pi,A)\log (|\cA|T)}{\sum_{A' \in \cA} \eta^*_{A'} \beta_{j}(\pi,A')T  }} \label{sqrtOpt} \\
		&= \sqrt{  \frac{2 e^6 \gamma^* \log (|\cA|T)}{ T }}. \nonumber
		\end{align}
		The second inequality follows from the application of \eqref{beta_diff} to the denominator,
		and the third inequality follows from \eqref{beta_ratio1} again,
		as $\hat{\beta}_{j}(\pi,A) \leq e\beta_{j}(\pi,A)$.
		Since $\{0,1\}^{\cP_j}  \setminus D_j^{\downarrow} \subseteq \{\pi \in  \{0,1\}^{\cP_j} \mid \beta_{j}(\pi,A) > 0  \}$ holds for every $j=1,2,\dots,N$,
		we have the last inequality.
		Then \eqref{sqrtOpt} is the subject of minimization in \eqref{optProb*},
		whose optimum solution is $\eta^*$ with optimum value $\gamma^*$.
		Thus we have the last equality,
		and the proof of (i) is complete.

		(ii) We first show that
		\begin{align}
		|\Delta \alpha_n (\pi)| \leq \frac{1}{\sqrt{\lambda}N}  \alpha_{n,D} (\pi) \label{boundDeltaAlphaAbs}
		\end{align}
		holds for every $n \in I_{N,A}$ and $\pi \in \{0,1\}^{\cP_n}$ with a probability of at least $1 - 6C/T$.
		Since the number of such pair $(n,\pi)$ is $2C$,
		it suffices to show that for each $(n,\pi)$, \eqref{boundDeltaAlphaAbs} holds with a probability of at least $1 - 3/T$.
		If $\pi \in D_n$, which implies $\alpha_{n,D} (\pi) = \Delta \alpha_n (\pi) = 0$,
		then the above is trivial.
		If $\pi \not \in D_n$, then \eqref{alphabeta1} holds. 
		Similarly to the discussion as Lemma~\ref{lem_bound1} (ii), the following holds for each $(n,\pi)$ with probability $1 - 3/T$:
		\begin{align*}
		\left( 1 - \frac{1}{\sqrt{\lambda}N} \right) \alpha_{n} (\pi) 
		\leq \hat{\alpha}_{n} (\pi) 
		\leq  \left( 1 + \frac{1}{\sqrt{\lambda}N} \right) \alpha_{n} (\pi) .
		\end{align*}
		In fact, by \eqref{num_sample}, \eqref{expT}, and \eqref{Tnpi}, the expected number of samples available for estimating $\hat{\alpha}_{n} (\pi)$ in Algorithm~\ref{algo_est2} is larger than that for $\check{\alpha}'_{n} (\pi)$ in Algorithm~\ref{algo_est1}, and thus the same probabilistic bound on estimation error holds.
		Thus, whichever $\pi \in D_n$ or not, \eqref{boundDeltaAlphaAbs}
		holds with a probability of at least $1 - 3/T$ for a pair of
		$(n,\pi)$.
		
		Suppose that the inequality \eqref{boundDeltaAlphaAbs} holds for every $n \in I_{N,A}$ and $\pi \in \{0,1\}^{\cP_n}$,
		which is attained with a probability of at least $1 - 6C/T$.
		Observe that
		\begin{align}
		&\sum_{J \subseteq  I_{N,A}: |J| \geq 2} |f^J(A)| \nonumber \\
		&= \sum_{J \subseteq  I_{N,A}: |J| \geq 2}\left|\sum_{\pi \in B(A)} \prod_{m \in I_{N,A} \setminus J}\alpha_{m,D}(\pi_{\overline{\cP_m}}) \prod_{n \in J} \Delta \alpha_n(\pi_{\overline{\cP_n}}) \right|\nonumber\\
		&\leq \sum_{J \subseteq  I_{N,A}: |J| \geq 2}\sum_{\pi \in B(A)} \prod_{m \in I_{N,A} \setminus J}\alpha_{m,D}(\pi_{\overline{\cP_m}}) \prod_{n \in J} \left|\Delta \alpha_n(\pi_{\overline{\cP_n}}) \right|\nonumber\\
		&= \sum_{j \in I_{N,A}} \sum_{k=2}^{|I_{N,A}|} \sum_{J \subseteq  I_{N,A}: j \in J,  |J| = k} \frac{1}{k} \sum_{\pi \in B(A)} \prod_{m \in I_{N,A} \setminus J}\alpha_{m,D}(\pi_{\overline{\cP_m}}) \prod_{n \in J} \left|\Delta \alpha_n(\pi_{\overline{\cP_n}}) \right|\nonumber \\
		&= \sum_{j \in I_{N,A}} \sum_{\pi \in B(A)} \left|\Delta \alpha_j(\pi_{\overline{\cP_j}}) \right|
		\left( \sum_{k=2}^{|I_{N,A}|} \sum_{J \subseteq  I_{N,A}: j \in J,  |J| = k} \frac{1}{k}  \prod_{m \in I_{N,A} \setminus J}\alpha_{m,D}(\pi_{\overline{\cP_m}}) \prod_{n \in J: n\neq j} \left|\Delta \alpha_n(\pi_{\overline{\cP_n}}) \right|
		\right) \label{fJalpha}
		\end{align}
		The first equality follows from the definition of $f^J(A)$,
		and the second and the third equalities follow from suitable arrangement of indices for summations.
		Since \eqref{boundDeltaAlphaAbs} holds for every $n \in I_{N,A}$ and $\pi \in \{0,1\}^{\cP_n}$, the value in the parenthesis in \eqref{fJalpha} can be bounded by
		\begin{align*}
		&\sum_{k=2}^{|I_{N,A}|} \sum_{J \subseteq  I_{N,A}: j \in J,  |J| = k} \frac{1}{k}  \prod_{m \in I_{N,A} \setminus J}\alpha_{m,D}(\pi_{\overline{\cP_m}}) \prod_{n \in J: n\neq j} \left|\Delta \alpha_n(\pi_{\overline{\cP_n}}) \right|\\
		&\leq \sum_{k=2}^{|I_{N,A}|} \sum_{J \subseteq  I_{N,A}: j \in J,  |J| = k} \frac{1}{k}  \prod_{m \in I_{N,A} \setminus J}\alpha_{m,D}(\pi_{\overline{\cP_m}}) \prod_{n \in J: n\neq j} \left(\frac{1}{\sqrt{\lambda}N}\alpha_{n,D}(\pi_{\overline{\cP_n}})\right)\\
		&= \sum_{k=2}^{|I_{N,A}|} \sum_{J \subseteq  I_{N,A}: j \in J,  |J| = k} \frac{1}{k(\sqrt{\lambda}N)^{k-1}}  \prod_{m \in I_{N,A}: m \neq j}\alpha_{m,D}(\pi_{\overline{\cP_m}}) \\
		&= \sum_{k=2}^{|I_{N,A}|}  \frac{\binom{|I_{N,A}|}{k-1}}{k(\sqrt{\lambda}N)^{k-1}}  \prod_{m \in I_{N,A}: m \neq j}\alpha_{m,D}(\pi_{\overline{\cP_m}})\\
		&\leq \sum_{k=2}^{|I_{N,A}|}  \frac{ 1 }{k! \sqrt{\lambda}^{k-1}}  \prod_{m \in I_{N,A}: m \neq j}\alpha_{m,D}(\pi_{\overline{\cP_m}})\\
		&\leq \frac{ 1 }{\sqrt{\lambda}} \prod_{m \in I_{N,A}: m \neq j}\alpha_{m,D}(\pi_{\overline{\cP_m}}).
		\end{align*}
		The last inequality holds since $\lambda \geq 1$.
		Putting this into \eqref{fJalpha}, we have
		\begin{align}
		\eqref{fJalpha} &\leq \frac{ 1 }{\sqrt{\lambda}} \sum_{j \in I_{N,A}} \sum_{\pi \in B(A)} \left|\Delta \alpha_j(\pi_{\overline{\cP_j}}) \right|
		\prod_{m \in I_{N,A}: m \neq j}\alpha_{m,D}(\pi_{\overline{\cP_m}}) \nonumber \\
		&= \frac{ 1 }{\sqrt{\lambda}} \sum_{j \in I_{N,A}} \sum_{\pi \in \{0,1\}^{\cP_j} } \sum_{k=0,1} \left|\Delta \alpha_j(\overline{\pi}^{k}) \right| \sum_{\substack{ \pi' \in B(A): \\ \pi'_{\cP_j} = \pi, \pi_j = k } } \prod_{m \in I_{N,A} : m \neq j}\alpha_{m,D}(\pi'_{\overline{\cP_m}}) \nonumber \\
		&= \frac{ 1 }{\sqrt{\lambda}} \sum_{j \in I_{N,A}} \sum_{\pi \in \{0,1\}^{\cP_j} } \sum_{k=0,1} \left|\Delta \alpha_j(\overline{\pi}^{k}) \right| \beta'_{j}(\pi,k, A)  \nonumber \\
		&= \frac{ 1 }{\sqrt{\lambda}} \sum_{j \in I_{N,A}} \sum_{\pi \in \{0,1\}^{\cP_j} \setminus D^{\downarrow}_j} \left|\Delta \alpha_j(\overline{\pi}^{1}) \right| 
		\left(\beta'_{j}(\pi,0, A) + \beta'_{j}(\pi,1, A)  \right)\nonumber \\
		&\leq \frac{2 }{\sqrt{\lambda}}  \sum_{j \in I_{N,A}} \sum_{\pi \in \{0,1\}^{\cP_j} \setminus D^{\downarrow}_j} \left|\Delta \alpha_j(\overline{\pi}^{1}) \right|  \beta_{j,D}(\pi,A) \nonumber \\
		&= \frac{2 }{\sqrt{\lambda}}  \sum_{j \in I_{N,A}} \sum_{\pi \in \{0,1\}^{\cP_j} \setminus D^{\downarrow}_j} \beta_{j,D}(\pi,A) \left|\frac{1}{T_j(\pi) }\sum_{k=1}^{T_j(\pi)} Y_{j,\pi,k} - \alpha_j(\overline{\pi}^1) \right| \label{betaTY}
		\end{align}
		The second equality follows from the definition \eqref{dfBetaPra} of $\beta'_{j}(\pi,k, A)$,
		and the third equality holds since $\Delta \alpha_j(\overline{\pi}^{1}) = 0$ for $\pi \in D^{\downarrow}_j$ and $\Delta \alpha_j(\overline{\pi}^{1}) =  -\Delta \alpha_j(\overline{\pi}^{0})$ for every $j \in I_{N,A}$ and $\pi \in \{0,1 \}^{\cP_j}$.
		The inequality follows from \eqref{betaAbsBoundSub}.
		The last equality then follows from the definition of $\Delta \alpha_j(\overline{\pi}^{1})$ and \eqref{eq:alg2_hat_alpha}.
		Since $\Prob(T_j(\pi) \geq \tau_j(\pi)) \geq 1- 1/T$, by applying Proposition~\ref{prop_hoeffding} to independent variables $Y_{j,\pi,k}$ for $k=1,2,\dots,T_j(\pi)$ over $[0,1]$ with $\varepsilon = \sqrt{\log T / (2 \tau_j(\pi))}$, it holds that
		\begin{align}
		\Prob \left( \left| \frac{1}{T_j(\pi) }\sum_{k=1}^{T_j(\pi)} Y_{j,\pi,k} - \alpha_j(\overline{\pi}^1) \right| 
		\leq \sqrt{\frac{   \log T}{2 \tau_j(\pi)}} \right) \geq 1 - \frac{3}{T}. \label{TYProbBound}
		\end{align}
		Thus, by \eqref{betaTY} and \eqref{TYProbBound}, the following holds for every $A \in \cA$ with probability at least $1 - 3C/T$:
		\begin{align*}
		\sum_{J \subseteq  I_{N,A}: |J| \geq 2} \left| f^J(A) \right|
		&\leq  \sum_{j \in I_{N,A}} \sum_{\pi \in \{0,1\}^{\cP_j}} \sqrt{\frac{2 \beta^{2}_{j,D} (\pi,A) \log T}{\lambda \tau_j(\pi)}} \\
		&\leq  \sum_{j \in I_{N,A}} \sum_{\pi \in \{0,1\}^{\cP_j}} \sqrt{\frac{ 8\beta^2_{j,D}(\pi,A) \log T}{ 3\lambda T_{j,\pi}}} \\
		&\leq \sum_{j \in I_{N,A}} \sum_{\pi \in \{0,1\}^{\cP_j}} \sqrt{\frac{ 8\beta^2_{j,D}(\pi,A) C \log T}{ \lambda \beta_j(\pi,\hat{A}_{j,\pi}) T}}. 
		\end{align*}
		The second inequality follows from \eqref{lem_bound21},
		the third inequality follows from \eqref{Tnpi}.
		Since $\max_{A \in \cA} \beta_{j,D}(\pi,A) \leq e \max _{A \in \cA} \hat{\beta}_{j}(\pi,A) = e\hat{\beta}_{j}(\pi,\hat{A}_{j,\pi}) \leq e^2\beta_{j}(\pi,\hat{A}_{j,\pi})$ by \eqref{beta_ratio1},
		we have 
		\begin{align*}
		\max_{A\in \cA} \sum_{J \subseteq  I_{N,A}: |J| \geq 2} \left| f^J(A) \right| &\leq \max_{A \in \cA} \sum_{j \in I_{N,A}} \sum_{\pi \in \{0,1\}^{\cP_j}} \sqrt{\frac{ 8\beta^2_{j,D}(\pi,A) C \log T}{ \lambda \beta_j(\pi,\hat{A}_{j,\pi}) T}}\\
		&\leq \max_{A \in \cA} \sum_{j \in I_{N,A}} \sum_{\pi \in \{0,1\}^{\cP_j}} \sqrt{\frac{ 8e^2\beta_{j,D}(\pi,A) C \log T}{ \lambda T}}\\
		&\leq \sqrt{\frac{ 8e^2 C^3 \log T}{ \lambda T}}
		\end{align*}
		The last inequality holds since $\beta_{j,D}(\pi,A) \leq 1$ and $\sum_{j \in I_{N,A}} |\{0,1\}^{\cP_j}| \leq C$.
		Thus, if \eqref{boundDeltaAlphaAbs} holds for every $n =1,2,\dots,N$ and $\pi \in \{0,1\}^{\overline{ \cP_{n}} }$, which is attained with a probability of at least $1 - (6C)/T$, then the desired probabilistic bound (ii) holds with a probability of at least $1 - (3C)/T$. This directly implies that the bound (ii) holds with a probability of at least $1-(9C)/T$.
	\end{proof}
\end{lem}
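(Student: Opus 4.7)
The plan is to prove parts (i) and (ii) separately, treating each as a concentration bound followed by a geometric/combinatorial rearrangement. For both parts, I will introduce, for each $n \in [1,N]$, $\pi \in \{0,1\}^{\cP_n}$, and $t=1,\ldots,2T/3$, the indicator $X_{n,\pi,t}$ that the $t$-th experiment hits $\omega_{\cP_n}=\pi$ with $v_n$ not intervened, together with the count $T_n(\pi)=\sum_t X_{n,\pi,t}$ and, for $k=1,\ldots,T_n(\pi)$, the binary variable $Y_{n,\pi,k}$ recording whether $\omega_n=1$ in the $k$-th such hit. This matches how Algorithm~\ref{algo_est2} computes $\hat{\alpha}_n$, and it satisfies $\hat{\alpha}_n(\overline{\pi}^1)=\overline{T_n}(\pi)/T_n(\pi)$ and $E[\overline{T_n}(\pi)\mid T_n(\pi)]=\alpha_n(\overline{\pi}^1)T_n(\pi)$. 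A short Chernoff argument, based on \eqref{alphabeta1} applied to $\pi\notin D_n^\downarrow$ (so that $E[T_n(\pi)]\geq TS(\lambda)/(3C)\geq 32\log T$), gives $\Prob(T_n(\pi)\geq \tau_n(\pi)):=\tfrac34 E[T_n(\pi)])\geq 1-1/T$; summing over the $\leq C$ pairs $(n,\pi)$ yields a failure probability of at most $C/T$ for the event that $T_n(\pi)\geq\tau_n(\pi)$ holds for all relevant $(n,\pi)$.

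For part (i), I would first expand $f^j(A)=\sum_{\pi\in\{0,1\}^{\cP_j}\setminus D_j^\downarrow}\beta'_j(\pi,A)\bigl(\tfrac{1}{T_j(\pi)}\sum_k Y_{j,\pi,k}-\alpha_{j,D}(\overline{\pi}^1)\bigr)$, where $\beta'_j(\pi,A):=\beta'_j(\pi,1,A)-\beta'_j(\pi,0,A)$ is a signed marginalization of the $\alpha_{m,D}$ factors for $m\neq j$ (the antisymmetry $\Delta\alpha_j(\overline{\pi}^1)=-\Delta\alpha_j(\overline{\pi}^0)$ is what collapses the two sign choices into a single coefficient). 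A direct telescoping of partial sums over $\alpha_{m,D}\in[0,1]$, analogous to \eqref{lem8_bound}, gives $|\beta'_j(\pi,A)|\leq\beta_{j,D}(\pi,A)$. Then I would apply Lemma~\ref{lem_hoeffding} to the variables $\{Y_{j,\pi,k}\}$ indexed by a linear order on the pairs $(j,\pi)$ consistent with topological order on $j$, with weights $\beta'_j(\pi,A)$ and thresholds $\tau_j(\pi)$; the union-bound over $A\in\cA$ costs at most $2|\cA|\exp(-2\varepsilon_A^2/\sum\beta_{j,D}^2(\pi,A)/\tau_j(\pi))$. Choosing $\varepsilon_A^2=\sum\beta_{j,D}^2(\pi,A)\log(|\cA|T)/(2\tau_j(\pi))$ and combining with the probability $1-C/T$ above yields a failure probability at most $(C+2)/T$. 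The remaining work is converting the resulting bound $\max_A\sqrt{\sum 2\beta_{j,D}^2(\pi,A)\log(|\cA|T)/(3T_{j,\pi})}$ (with $T_{j,\pi}=\beta_j(\pi,\hat{A}_{j,\pi})T/(3C)+(T/3)\sum_{A'}\hat{\eta}_{A'}\beta_j(\pi,A')$) into $\sqrt{2e^6\gamma^*\log(|\cA|T)/T}$; I would use \eqref{beta_ratio1} to swap $\beta$'s for $\hat{\beta}$'s, invoke the optimality of $\hat{\eta}$ for \eqref{optProb} to replace $\hat{\eta}$ by the optimum $\eta^*$ of \eqref{optProb*}, then use \eqref{beta_diff} to absorb the extra $\hat{\beta}_j(\pi,\hat{A}_{j,\pi})/C$ term in the denominator, and finally swap $\hat{\beta}$'s back to $\beta$'s via \eqref{beta_ratio1} to land at the definition of $\gamma^*$.

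For part (ii), I would first show that $|\Delta\alpha_n(\pi)|\leq \alpha_{n,D}(\pi)/(\sqrt{\lambda}N)$ for all $n$ and $\pi\in\{0,1\}^{\overline{\cP_n}}$ with probability at least $1-6C/T$. If $\pi\in D_n$ then both sides are zero; otherwise \eqref{alphabeta1} holds and the expected number of relevant samples in Algorithm~\ref{algo_est2} exceeds that of Algorithm~\ref{algo_est1}, so the same Chernoff argument as in Lemma~\ref{lem_bound1}(ii) gives the desired relative-error bound pair by pair (union bound over $2C$ pairs). Given this, I would bound $\sum_{|J|\geq 2}|f^J(A)|$ by expanding, pulling out a single index $j\in J$, dividing by $|J|$ to avoid double-counting, and using the relative-error bound on the $|J|-1$ remaining $\Delta\alpha_n$ factors; the binomial sum $\sum_{k\geq 2}\binom{|I_{N,A}|}{k-1}/(k(\sqrt{\lambda}N)^{k-1})\leq\sum_{k\geq 2}1/(k!\sqrt{\lambda}^{k-1})\leq 1/\sqrt{\lambda}$ is where the factor $\lambda^{-1/2}$ appears. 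This reduces the sum to $(2/\sqrt{\lambda})\sum_j\sum_{\pi\notin D_j^\downarrow}\beta_{j,D}(\pi,A)|\tfrac{1}{T_j(\pi)}\sum_k Y_{j,\pi,k}-\alpha_j(\overline{\pi}^1)|$, to which I apply Hoeffding's inequality (Proposition~\ref{prop_hoeffding}) pair-by-pair with $\varepsilon=\sqrt{\log T/(2\tau_j(\pi))}$ and union-bound over the at most $C$ pairs; using $T_{j,\pi}\geq \beta_j(\pi,\hat{A}_{j,\pi})T/(3C)$ and \eqref{beta_ratio1} to bound $\beta_{j,D}(\pi,A)/\sqrt{\beta_j(\pi,\hat{A}_{j,\pi})}$ by $e\sqrt{\beta_{j,D}(\pi,A)}\leq e$ yields the $\sqrt{8e^2C^3\log T/(\lambda T)}$ bound. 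The total failure budget is $6C/T$ (for the relative error) plus $3C/T$ (for Hoeffding), giving $9C/T$.

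The main obstacle is the chain of substitutions in part (i) needed to replace $\hat{\eta}$ by $\eta^*$ and $\hat{\beta}$ by $\beta$ without inflating constants past $e^6$: the algorithm only knows $\hat{\beta}$, yet the bound is stated in terms of the truth-value $\gamma^*$, and the extra additive $r_{n,\pi}$ term inside the denominator of \eqref{optProb} is precisely what \eqref{beta_diff} is designed to absorb. Getting that absorption clean while keeping track of the four multiplicative factors of $e$ (two from $\beta\leftrightarrow\hat{\beta}$, one from absorbing $r_{n,\pi}$, one from sliding $\hat{\eta}\to\eta^*$) is the technically delicate step.
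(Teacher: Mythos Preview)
Your proposal is correct and follows essentially the same approach as the paper: the same indicator/count setup, the same application of Lemma~\ref{lem_hoeffding} for part (i) with the bound $|\beta'_j(\pi,A)|\leq\beta_{j,D}(\pi,A)$, the same chain of substitutions via \eqref{beta_ratio1}, optimality of $\hat{\eta}$, and \eqref{beta_diff} to reach $\gamma^*$, and the same relative-error-plus-binomial-sum argument for part (ii). One minor remark on your factor bookkeeping: replacing $\hat{\eta}$ by $\eta^*$ costs no factor of $e$ (it is pure optimality of $\hat{\eta}$ for \eqref{optProb}); the six factors inside the square root come as $e^2$ from $\beta_{j,D}^2\leq e^2\hat{\beta}_j^2$, one $e$ from $\beta_j\geq\hat{\beta}_j/e$ in the denominator, one $e$ from \eqref{beta_diff}, and $e^2$ from $\hat{\beta}_j^2\leq e^2\beta_j^2$ at the end.
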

The above two lemmas imply Proposition~\ref{prop_bound} as follows.
\begin{proof}[Proof of Proposition~\ref{prop_bound}]
	Let $A\in \cA$.
	By Lemma~\ref{lem_f}, it holds that
	\begin{align*}
	\left|\mu(A)  - \hat{\mu}(A) \right| &\leq  \left|\mu_{D}(A)  - \hat{\mu}(A) \right| 
	+ (\mu(A) - \mu_{D}(A)) \\
	&\leq \left| \sum_{J \subseteq I_{N,A}: |J| \geq 1} f^{J} (A)\right| +  (\mu(A) - \mu_{D}(A))\\
	&\leq \left| \sum_{j \in I_{N,A}} f^{j} (A)\right|
	+  \sum_{J \subseteq I_{N,A} : |J| \geq 2} |f^{J} (A)|  + (\mu(A) - \mu_{D}(A)).
	\end{align*}
	
	Since \eqref{beta_ratio1}, \eqref{alphabeta1}, and \eqref{beta_diff} hold,
	the first term and the second term are respectively bounded by Lemma~\ref{lem_bound2}(i) and (ii) with probability $1 - (C+2)/T$ and $1 - 9C/T$, as follows:
	\begin{align*}
	\left| \sum_{j \in I_{N,A}} f^{j} (A)\right| \leq \sqrt{  \frac{2 e^6 \gamma^* \log (|\cA|T)}{ T}}, \\
	\sum_{J \subseteq  I_{N,A}: |J| \geq 2} |f^J(A)|
	\leq \sqrt{ \frac{8 e^2 C^3 \log T}{\lambda T}}.
	\end{align*}
	The third term, then, is bounded by \eqref{muD_diff} as
	\begin{align*}
	\mu(A) - \mu_{D}(A) \leq 8e^2 (C^3 + C) S(\lambda).
	\end{align*}
	Thus we have the desired bound with probability at least $1 - (C+2 + 9C)/T = 1 - (10C+2)/T$.
\end{proof}

\subsection{Proof of Theorem~\ref{thm_main} and Proposition~\ref{prop_gamma_bound}}\label{subsec_algo3}
We prove our main result on the basis of Propositions~\ref{prop_est} and~\ref{prop_bound}.
\begin{proof}[Proof of Theorem~\ref{thm_main}]
	We put $\lambda = C^3/N$. From Propositions~\ref{prop_gamma_bound},~\ref{prop_est}, and~\ref{prop_bound}, the following holds for every $A \in \cA$ with probability at least $1 - (16C+2)/T$:
	\begin{align*}
	\left|\mu(A)  - \hat{\mu}_{\hat{D}}(A) \right| 
	&\leq \sqrt{\frac{2e^6 \gamma^* \log (|\cA|T) }{ T}} + \sqrt{ \frac{8e^2 N\log T}{T}} + 8e^2(C^3 + C ) S(C^3/N)\\
	&\leq \sqrt{\frac{8 e^6 \max\{\gamma^*, N \} \log (|\cA|T) }{ T}} + \frac{192 e^2 N C^7 \log T}{T}.
	\end{align*}
	Let us define $A^* = \argmax_{A \in \cA} \mu(A)$ and $\hat{A} = \argmax_{A \in \cA} \hat{\mu}_{D}(A)$. It then holds that
	\begin{align}
	\mu^* - \mu(\hat{A}) &= \mu(A^*) - \hat{\mu}_{D}(A^*) + \hat{\mu}_{D}(A^*) - \mu(\hat{A}) \\
	&\leq  |\mu(A^*) - \hat{\mu}_{D}(A^*)| + |\mu(\hat{A}) - \hat{\mu}_{D}(\hat{A})| \\
	&\leq \sqrt{\frac{32 e^6 \gamma^* \log (|\cA|T) }{ T}} + \frac{384 e^2  N C^7 \log T}{T}. \label{bound_reg}
	\end{align}
	The first inequality holds since $\hat{A}$ is the maximizer of $\hat{\mu}_D$ and thus $ \hat{\mu}_D(A^*) \leq \hat{\mu}_D(\hat{A})$.
	Thus the difference between $\mu^*$ and $\mu(\hat{A})$ is bounded by \eqref{bound_reg} with a probability of at least $1 - (16C+2)/T$, which implies the desired regret bound:
	\begin{align*}
	R_T &\leq \sqrt{\frac{32 e^6 \gamma^* \log (|\cA|T) }{T}} + \frac{384 e^2 N C^7 \log T + 16C + 2}{T}\\
	&=O\left( \sqrt{\frac{ \max\{\gamma^*, N\} \log (|\cA|T) }{ T}} \right).
	\end{align*}
\end{proof}
We conclude this section 
by proving Proposition~\ref{prop_gamma_bound}, whose proof is independent of the above series of discussions.
\begin{proof}[Proof of Proposition~\ref{prop_gamma_bound}]
	We first show the left inequality.
	It holds that
	\begin{align*}
	\gamma^* = &\min_{\eta \in [0,1]^{\cA}}  \max_{A \in \cA} \sum_{ n \in I_{N,A}} \sum_{\substack{ \pi \in  \{0,1\}^{\cP_n} \\ : \beta_{n}(\pi, A) > 0 }}  \frac{\beta_{n}^2(\pi, A)}{\sum_{A' \in \cA} \eta_{A'} \beta_n(\pi,A')} \quad \text{s.t. } \sum_{A' \in \cA} \eta_{A'} = 1\\
	\geq   &\max_{A \in \cA} \sum_{n \in I_{N,A}} \min_{\eta \in [0,1]^{\cA}} \sum_{\substack{ \pi \in  \{0,1\}^{\cP_n} \\ : \beta_{n}(\pi, A) > 0 }}  \frac{\beta_{n}^2(\pi, A)}{\sum_{A' \in \cA} \eta_{A'} \beta_n(\pi,A')} \quad \text{s.t. } \sum_{A' \in \cA} \eta_{A'} = 1 \\
	\geq   &\max_{A \in \cA} \sum_{n \in I_{N,A}} \min_{x_{n} \in [0,1]^{\cP_n}}\left( \sum_{\substack{ \pi \in  \{0,1\}^{\cP_n} \\ : \beta_{n}(\pi, A) > 0 }}  \frac{\beta_{n}^2(\pi, A)}{x_{n,\pi}}
	\quad \text{s.t. } \sum_{\pi \in \{0,1\}^{\cP_n}} x_{n,\pi} = 1\right),
	\end{align*}
	The last inequality holds by letting $x_{n, \pi}=\sum_{A' \in \cA} \eta_{A'} \beta_n(\pi,A')$, noting that
	\[\sum_{\pi \in \{0,1\}^{\cP_n}} x_{n,\pi} = \sum_{\pi \in \{0,1\}^{\cP_n}}\sum_{A' \in \cA} \eta_{A'} \beta_n(\pi,A') = \sum_{A' \in \cA} \eta_{A'} \sum_{\pi \in \{0,1\}^{\cP_n}} \beta_n(\pi,A') = 1.\]
	For each $A \in \cA$ and $n \in I_{N,A}$ the minimization problem with respect to $x_{n}$ is a convex optimization problem, where the minimum is attained when $x_{n, \pi} = \beta_n(\pi,A)$.
	Hence the above lower bound is equal to
	\[
	\max_{A \in \cA} \sum_{n \in [1,N]: A_n = *} \sum_{\substack{ \pi \in  \{0,1\}^{\cP_n} : \\  \beta_{n}(\pi, A) > 0 }} \beta_{n} (\pi, A) =  N -\min_{A \in \cA} |A|.
	\]
	This proves the left inequality.
	
	We next prove the right inequality.
	For each $n \in [1,N]$ and $\pi \in \{0,1\}^{\cP_n}$, let $A_{n, \pi} \in \cA$ be an intervention that attains ${\max}_{A' \in \cA} \beta_{n}(\pi,A')$.
	Note that, if such $A_{n, \pi}$ is not unique, then we choose one of them.
	Consider the solution $\eta'$ such that $\eta'_A = |\{ (n, \pi)\mid  n \in [1,N], \pi \in \{0,1\}^{\cP_n}, A = A_{n, \pi}\} |/C$ for each $A\in \cA$. 
	Let us confirm that, since the number of pairs $(n, \pi)$ is exactly equal to $C$, it holds that $\sum_{A \in \cA}\eta'_A = 1$, and thus $\eta'$ is in fact a feasible solution for the minimization problem.
	For each $(n,\pi)$, it holds that $\eta'_{A_{n,\pi}} \geq 1/C$, and thus we have
	\[
	\sum_{A' \in \cA} \eta_{A'} \beta_n(\pi,A')\geq \beta_n(\pi,A_{n, \pi})/C.
	\]
	Hence it holds that
	\begin{align*}
	\gamma^* &\leq \max_{A \in \cA} \sum_{n=1}^N \sum_{\substack{ \pi \in  \{0,1\}^{\cP_n} : \\  \beta_{n}(\pi, A) > 0 }} \frac{\beta_{n}^2(\pi, A)}{\sum_{A' \in \cA} \eta'_{A'} \beta_n(\pi, A')} \\
	&\leq \max_{A \in \cA} \sum_{n=1}^N \sum_{\substack{ \pi \in  \{0,1\}^{\cP_n} : \\  \beta_{n}(\pi, A) > 0 }} \frac{\beta_{n}^2(\pi, A)}{\beta_n(\pi, A_{n, \pi}) / C} \\
	&\leq \sum_{n=1}^N \sum_{\substack{ \pi \in  \{0,1\}^{\cP_n} : \\  \beta_{n}(\pi, A) > 0 }} \beta_{n}(\pi, A_{n, \pi})C \leq NC.
	\end{align*}
	Also, consider $\eta''_A := 1/|\cA|$ for all $A \in \cA$.
	Then, since $\sum_{A' \in \cA} \eta_{A'} \beta_n(\pi,A')=\sum_{A' \in \cA} \beta_n(\pi,A')/|\cA|\geq \beta_n(\pi,A)/|\cA|$ for any $A\in \cA$, it holds that
	\begin{align*}
	\gamma^* &\leq \max_{A \in \cA} \sum_{n=1}^N \sum_{ \pi \in  \{0,1\}^{\cP_n}} \frac{\beta_{n}^2(\pi, A)}{\beta_n(\pi, A) / |\cA|} \\
	&\leq \max_{A \in \cA} \sum_{n=1}^N \sum_{ \pi \in  \{0,1\}^{\cP_n}} \beta_{n}(\pi, A) |\cA| \leq N|\cA|.
	\end{align*}
	Thus we have the statement.
\end{proof}

\begin{figure*}[htb]
	\centering
	\includegraphics[width=\linewidth]{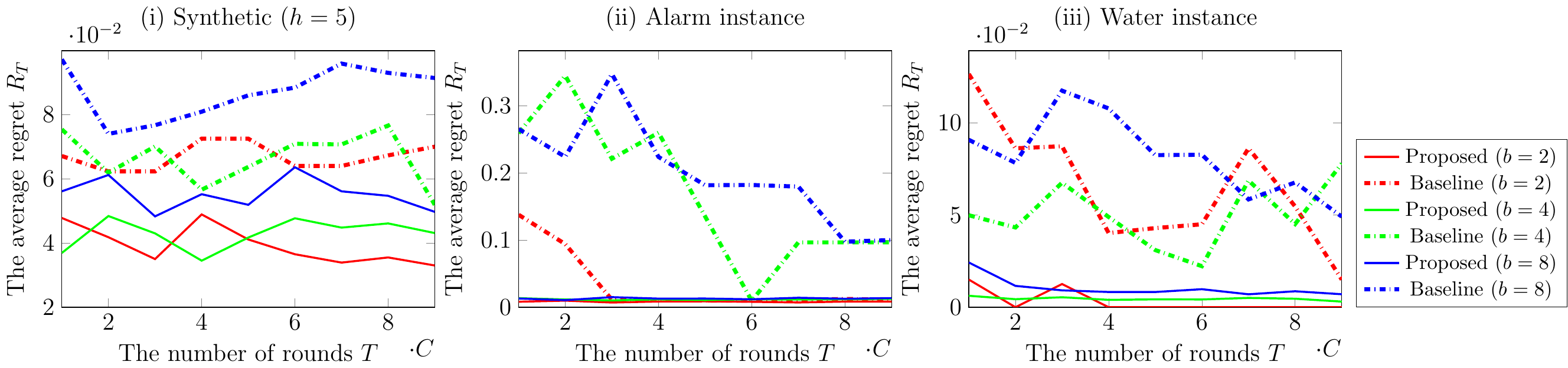}
	\caption{Average regret over synthetic and real-world instances}
	\label{fig:result}
\end{figure*}

\section{Experiments}\label{sec_exp}

We now demonstrate the performance of the proposed algorithm through experimental evaluations and compare it with a baseline algorithm~\cite{audibert2010best} which was proposed for the general best arm identification problem and thus cannot take advantage of given side-information of causal graph structure.

\subsection{Instances}
We evaluated the algorithms on both synthetic and real-world instances.
Recall that an instance of the causal bandit problem consists of a DAG $G$, an intervention set $\cA$, and $\alpha_n (n=1,\ldots,N)$.

In the synthetic instances, 
the DAG $G$ is defined as a directed complete binary tree of height $4$, where each edge is oriented toward the root.
From this construction, the number of nodes is $N=2^6-1=31$, which consists of $2^4$ leaves (nodes without incoming edges) and $(2^4-1)$ non-leaves.
Then the number of uncertain parameter is $C = 2^2 \times (2^4-1) + 2^0 \times 2^4=60$ in these instances.

In the real-world instances,
the DAG $G$ is constructed from the Alarm and the Water data sets
in a Bayesian Network Repository\footnote{\url{http://www.cs.huji.ac.il/~galel/Repository/}}.
The numbers $N$ of nodes in the DAGs constructed from 
Alarm and Water data sets are 37 and 32, respectively.
We consider interventions on nodes that have no incoming edge.
$C = 116$ for the instances with Alarm data set,
and $C=248$ for the instances with the Water data set.

For each $G$,
we consider interventions over all leaves
which fixes exactly $b \in \mathbb{N}$ nodes as $1$ and the others as $0$.
We call this parameter $b$ budget,
and the number of intervention $|\cA|$ is then controlled by
the budget.
In both of the synthetic and the real-world instances,
the budget $b$ varies among $\{2,4,8\}$.
In the synthetic instances, the numbers of interventions $|\cA|$ 
are 120, 1820, and 12870 for $b \in \{2,4,8\}$, respectively.
In the instances with the Alarm data set, the numbers of interventions $|\cA|$ are 78, 793, and 3796 for $b \in \{2,4,8\}$, respectively.
In the instances with the Water data set, the numbers of interventions $|\cA|$ are 36, 126, and 256 for $b \in \{2,4,8\}$, respectively.

For each $n \in \{1,\ldots,N\}$ and $\pi \in \{0,1\}^{\overline{\cP_n}}$, 
we generate $\alpha_n(\pi)$ from the uniform distribution over $[0,1]$.

For each of those instances, we executed the algorithms 10 times and compared their average regrets.

\subsection{Implementation of the proposed algorithm}
Our algorithm given in Section~\ref{sec_proposed} is designed conservatively 
to obtain the theoretical regred bound~(Theorem~\ref{thm_main}),
and there is a room to modify the algorithm to be more efficient in practice
although the theoretical regret bound may not hold for it.
In our implementation, 
we introduced the following three modifications into the proposed algorithm.
First, while Algorithm~\ref{algo_est2} discards samples obtained 
for computing 
$\check{\alpha}'$
in Algorithm~\ref{algo_est1} to maintain the independence between $\hat{\beta}$ and $\hat{\alpha}$,
we use all of them also in Algorithm~\ref{algo_est2} in our implementation.
Next, we ignore the truncation mechanism of Algorithm~\ref{algo_est1}
by setting $\lambda=0$.
We expect these two modifications 
make the estimates of the algorithm more accurate.
Finally, instead of solving \eqref{optProb},
we set $\eta_A$ by
$\eta_{A} = 1/C$ if $A = \hat{A}_{n,\pi}$ 
for some $n \in [1,N]$ and $\pi \in \{0,1\}^{\cP_n}$,
and $\eta_A = 0$ otherwise.
Since it is time-consuming to solve \eqref{optProb},
this modification makes the algorithm faster.

\subsection{Experimental results}
Figure~\ref{fig:result}(i) shows the average regrets over the synthetic instances
against the number of rounds $T \in \{C, 2C, \ldots,9C\}$.
Figures~\ref{fig:result}(ii) and (iii) respectively illustrate the average regrets for the 
real-world instances constructed from 
the Alarm and the Water data sets.

The results show that the proposed algorithm outperforms the baseline in every instance.
In particular, the gap is remarkably large ($>0.2$) in the Alarm data set (ii) with a large number of interventions ($b=4,8$, corresponding to $|\cA|=793,3796$, respectively,) and a small number of samples ($T \leq 4C=464$).
In these cases, the baseline cannot apply every intervention at least once.
On the other hand, the regret of the proposed algorithm only grows slowly with respect to the number of arms $|\cA|$, in all instances.
Thus the proposed algorithm provides effective regret, even when the number of interventions $|\cA|$ is $30$ times larger than the number of experiments $T$.

\section{Conclusion}
In this paper, we proposed the first algorithm for the causal bandit problem, where existing algorithms could deal with only localized interventions,
and proved a novel regret bound $O(\sqrt{\gamma^* \log (|\cA| T) / T})$ which is logarithmic with respect to the number of arms. 
Our experimental result shows that the proposed algorithm is applicable to systems where the number of interventions $|\cA|$ is much larger than $T$. One important future research direction would be to prove the gap-dependent bound as~\cite{sen2017identifying} has proven for localized interventions. Another research direction, which is mentioned in \cite{lattimore2016causal}, would include incorporation of a causal discovery algorithm to enable the estimation of the structure of a causal graph, which is currently assumed to be known in advance.

\bibliographystyle{plain}

\end{document}